\theoremstyle{plain}
\newtheorem{theorem}{Theorem}[section]
\newtheorem{lemma}[theorem]{Lemma}
\newtheorem{corollary}[theorem]{Corollary}
\theoremstyle{definition}
\newtheorem{definition}[theorem]{Definition}
\theoremstyle{remark}
\DeclareMathOperator{\var}{Var}
\DeclareMathOperator{\Sol}{Sol}
\DeclareMathOperator{\Geomtric}{Geometric}
\DeclareMathOperator{\Rollout}{Rollout}
\DeclareMathOperator{\expect}{\mathbb{E}}
\DeclareMathOperator*{\usexpect}{\mathbb{E}}
\DeclareMathOperator*{\argmax}{arg\,max}
\newcommand{\bs}[1]{\boldsymbol{#1}}
\newcommand{\eps}{\varepsilon}
\newcommand{\entropy}{\mathrm{H}}
\newcommand{\sdmdp}{\rev{DSMDP}}
\newcommand{\expldiff}{J_{\text{\textnormal{explore}}}}
\newcommand{\learndiff}{J_{\text{\textnormal{learn}}}}
\newcommand{\DKL}[2]{D_\mathrm{KL}\left(#1 \parallel #2\right)}
\newcommand{\M}{\mathcal{M}}
\newcommand{\incompress}{\mathrm{IC}}
\newcommand{\rev}[1]{#1}
\newcommand{\revblock}{}
\newcommand{\base}{\rev{0}}
\newcommand{\aug}{\rev{+}}
\newcommand{\unif}{\rev{\text{unif}}}
\newcommand{\solv}{\text{\textnormal{solvable}}}
\icmltitlerunning{When Do Skills Help Reinforcement Learning?}
\begin{document}

\twocolumn[
\icmltitle{When Do Skills Help Reinforcement Learning? \\ A Theoretical Analysis of Temporal Abstractions}





\begin{icmlauthorlist}
\icmlauthor{Zhening Li}{mit}
\icmlauthor{Gabriel Poesia}{stanford}
\icmlauthor{Armando Solar-Lezama}{mit}
\end{icmlauthorlist}

\icmlaffiliation{mit}{MIT CSAIL, Cambridge, MA, USA}
\icmlaffiliation{stanford}{Stanford University, Stanford, CA, USA}

\icmlcorrespondingauthor{Zhening Li}{zli11010@csail.mit.edu}

\icmlkeywords{Machine Learning, ICML}

\vskip 0.3in
]



\printAffiliationsAndNotice{}  

\begin{abstract}
Skills are temporal abstractions that \rev{are} intend\rev{ed}
to improve reinforcement learning (RL) performance through hierarchical RL.
Despite our intuition about the properties of an environment that make skills useful,
a precise characterization has been absent.
We provide the first such characterization, focusing on the utility of deterministic skills
in deterministic sparse\rev{-}reward environments with finite action spaces.
We show theoretically and empirically that RL performance gain from skills
is worse in environments where solutions to states are less compressible.
\rev{Additional} theoretical results suggest that skills benefit exploration more than
they benefit learning from existing experience,
and that using unexpressive skills such as macroactions may worsen RL performance.
We hope our findings can guide research on automatic skill discovery
and help RL practitioners better decide when and how to use skills.
\end{abstract}

\section{Introduction} \label{sec:intro}

In most real-world sequential decision making problems, agents are only given
\emph{sparse rewards} for their actions. This makes reinforcement learning (RL) challenging,
as agents can only recognize good behavior after long sequences of good decisions.
This issue can be \rev{mitigated} by leveraging \textit{temporal abstractions} \citep{sutton1999options},
also known as \textit{skills}.
A skill is a high-level action --- such as a fixed sequence of actions (\textit{macroaction})
or a sub-policy with a termination condition (\textit{option}) --- that is expected
to be useful in a large number of states.
Skills can be hand-engineered to perform subtasks \citep{pedersen2016robot,he2011macro}
or learned from experience
\citep{machado2017laplacianoption,bacon2017optioncritic,barreto2019optionkeyboard,kipf2019compile,jiang2022love,li2022lemma}.
Incorporating skills into the agent's action space (\textit{hierarchical RL}) allows it to act at a higher level
and reach goals in fewer steps, which may improve exploration and thus RL performance.

Despite their appeal, skills have not seen widespread use.
In fact, they were not involved in most major breakthroughs and applications of RL,
such as surpassing human-level performance in all Atari games \citep{badia2020agent57},
RLHF for aligning LLMs with human preferences \citep{ouyang2022rlhf},
AlphaTensor for faster matrix multiplication \citep{fawzi2022alphatensor},
and AlphaDev for faster sorting \citep{mankowitz2023alphadev}.
A reason skills have not been widely adopted is that they sometimes do not improve
RL performance and it is unclear how to determine beforehand whether they would.
While several methods have been developed to automatically discover skills,
most of them require the practitioner to decide whether to use skills at all.
To our knowledge, LEMMA \citep{li2022lemma} is the only algorithm that automatically
decides whether skills are useful by learning the optimal number of skills --- zero would mean that skills do not help.
However, this is accomplished by optimizing a heuristic objective that does not necessarily
reflect the benefits to RL.
Other skill discovery algorithms such as Option-Critic \citep{bacon2017optioncritic},
eigenoptions \citep{machado2017laplacianoption}, deep skill chaining \citep{bagaria2019deepskillchaining},
LOVE \citep{jiang2022love} and COPlanLearn \citep{nayyar2023coplanlearn}
determine the number of skills using a hyperparameter.
A better understanding of how exactly skills benefit RL may guide research
in automatically determining whether skills would be useful in an environment
and the optimal number to learn if they are.
Such an understanding can also provide insight into why skills
do not work in certain environments as well as help practitioners better
decide whether to use skills for a given RL task.


Our work provides a theoretical analysis of when and how skills and hierarchical RL
benefit RL performance in deterministic sparse-reward environments. 
We hope our insights will serve to guide research in automatic skill discovery
including the automatic determination of whether to use skills,
and allow practitioners to better understand the kinds of environments where skills are helpful. In summary, we make the following contributions:

\begin{itemize}
    \item We define two metrics --- $p$-exploration difficulty and $p$-learning difficulty --- that quantify the hardness of exploration and learning from experience
    in a deterministic sparse-reward environment with a finite action space.
    We show empirically that these metrics correlate strongly with the sample complexity of several RL algorithms (\Cref{sec:rldiff}).
    \item We define \rev{two closely related} metric\rev{s} that \rev{measure}
    the incompressibility of solutions to states
    generated by the environment. Under mild assumptions, we prove lower bounds on
    the change in $p$-learning difficulty and $p$-exploration difficulty due to
    deterministic skills in terms of the incompressibility measure\rev{s}.
    We show that skills are better suited to decreasing
    $p$-exploration difficulty rather than $p$-learning difficulty,
    and less expressive skills are less apt at decreasing the difficulty metrics.
    In particular, for each difficulty metric, we demonstrate the existence of environments where
    incorporating macroactions provably increases it (\Cref{sec:learndiff,sec:expldiff}).
    \item We show empirically that \rev{macroactions and deep neural options are less beneficial
    in environments with higher incompressibility} (\Cref{sec:expt_incompress}).
    \item \rev{We describe how to derive skill learning objectives from our incompressibility metrics (\Cref{sec:skill_learning}).}
\end{itemize}

\rev{
All proofs are found in \cref{app:proofs}. Code for experiments are publicly available at
\url{https://github.com/uranium11010/rl-skill-theory}.
}

\section{Preliminary Definitions} \label{sec:def}

We first introduce basic definitions related to deterministic sparse-reward Markov decision
processes (MDPs), which are the focus of this paper.
We choose to focus on sparse-reward environments since skills are purported
to alleviate the sparse-reward problem. Despite our focus on deterministic environments,
a large number of environments both in the standard RL literature
(e.g., the original Atari game environments \citep{bellemare2013atari}
and MuJoCo \citep{todorov2012mujoco})
and in applications of RL
(e.g., program synthesis \citep{ellis2019programsynthesis,mankowitz2023alphadev}
and mathematical reasoning \citep{kaliszyk2018rltheoremproving,poesia2021conpole,wu2021tacticzero})
are deterministic.
Furthermore, by focusing on a special case of MDPs, our hardness results ---
lower bounds on the change in difficulty due to skills
--- suggest that improving RL using skills in the general case of stochastic environments
can be at least as hard.
\rev{Finally, \cref{app:stochastic} provides preliminary results on generalizing to stochastic environments,
suggesting that many insights obtained from studying deterministic environments apply to stochastic ones as well.}

\begin{definition} \label{def:sdmdp}
    A \textit{\rev{deterministic sparse-reward} MDP (\sdmdp)} is defined by a 4-tuple $\M = (S, A, T, g)$ where
    $S$ is the state space, $A$ is the action space, $T: (S\setminus\{g\}) \times A \to S$ is the deterministic transition function and $g \in S$ is the goal state.
\end{definition}
Note that environments that have multiple goal states can also
be formulated as \sdmdp s by merging these goal states into a single goal state.
The \texttt{CompILE2} environment introduced in \cref{sec:expt_diff}
is one such example --- see \cref{app:envs} for more details.

Borrowing terminology commonly used in symbolic reasoning domains,
we say ``solve a state'' as a shorthand for ``finding a sequence of actions that lead to the goal state,''
and we call such a sequence of actions a \textit{solution}. This is formalized below.
\begin{definition}
    A \textit{solution} to a state $s \in S\setminus\{g\}$ of a \sdmdp\ $\M = (S, A, T, g)$
    is a sequence of actions $(a_1, \ldots, a_l) \in A^l$ ($l \geq 1$)
    such that applying the sequence of actions starting in $s$ results in the goal state $g$:
    \begin{equation}
        T(s, (a_1, \ldots, a_l)) = g,
    \end{equation}
    where
        $T(s, (a_1, \ldots, a_l)) := T(\cdots(T(s, a_1), a_2)\cdots, a_l)$
    denotes the result of applying action sequence $(a_1, \ldots, a_l)$ to state $s$.
    Here, $l > 0$ is called the \textit{length} of the solution.
    We will denote by $\Sol_\M(s)$ the set of solutions to $s$
    and $d_\M(s) = \min_{\sigma \in \Sol_\M(s)} |\sigma|$ the length of a shortest solution to $s$.
\end{definition}
Note that a state can have no solutions.
For example, in domains where we'd like to formalize the notion of ``death,''
one could transition to a ``dead state'' that goes to itself for all actions taken,
and that dead state has no solutions.
In contrast, states that have at least one solution are called \textit{solvable} states.

Some results \rev{in this paper} assume that no two states share a solution,
a property we call \textit{solution separability}.
\begin{definition}
    A \sdmdp\ is \textit{solution-separable}
    if no sequence of actions is a solution to more than one state.
\end{definition}
Any \sdmdp\ with invertible transitions is solution-separable.
Here, we say a \sdmdp\ $(S, A, T, g)$ has invertible transitions if
$s = s'$ whenever $T(s, a) = T(s', a)$ and $T(s, a)$ is either solvable or the goal.
Examples include
    (a) all twisty puzzles such as the Rubik's cube;
    (b) grid world domains where taking a vacuous action
    (e.g., walking into a wall or picking up a non-existent object)
    leads to instant death;
    (c) sliding puzzles where taking a vacuous action leads to instant death.

The following definition formalizes RL in the episodic setting as applied to a \sdmdp.
\begin{definition}
    In \textit{reinforcement learning (RL) in the episodic setting},
    an agent interacts with an environment (MDP) in \textit{episodes} to learn a policy $\pi(a \mid s)$
    that optimizes the expected cumulative reward \rev{from one} episode.
    For a \sdmdp, the optimal policy is
    \begin{equation} \label{eq:rl}
        \argmax_\pi \usexpect_{\substack{s_0 \sim p_0 \\ (s_0, a_1, \ldots, a_l, s_l) \sim \Rollout_\pi(s_0)}}
        \left[\gamma^{l-1} \bs{1}[s_l = g]\right].
    \end{equation}
    Here, $p_0$ is the initial state distribution and $0 < \gamma \leq 1$ is the discount factor.
    $\Rollout_\pi(s_0)$ is the result of
    rolling out policy $\pi$ starting in state $s_0$, stopping when either the goal state is reached
    or $H$ actions have been taken, where $H$ is called the \textit{horizon}
    and sometimes considered part of the definition of an MDP.
    Note that when $\gamma = 1$, then \cref{eq:rl} becomes maximizing the probability that the policy solves $s_0 \sim p_0$.
\end{definition}

Now, we introduce \textit{skills}. Whereas skills need not be deterministic in general,
we are studying deterministic environments and \rev{will thus} focus on deterministic skills.
\begin{definition}
    A \textit{deterministic skill} in a \sdmdp\ is a function from states to finite action sequences.
    \rev{In other words, for each state, we specify the sequence of actions to be taken
    if the agent initiates the skill in that state.
    Note that this sequence is allowed to be empty.}
\end{definition}
We will refer to deterministic skills as simply ``skills.''

\rev{T}he prototypical example of an unexpressive class of skills \rev{is} \textit{macroactions}.
\begin{definition} \label{def:macroaction}
    A \textit{macroaction} is a skill that produces \rev{the same} sequence of actions
    \rev{of length greater than 1 regardless of} the state \rev{in which the skill is initiated}.
\end{definition}

Incorporating skills into a \sdmdp\ is called a \textit{skill augmentation},
which is more precisely defined below.
\begin{definition}
    A \sdmdp\ $\M_{\base} = (S, A_{\base}, T_{\base}, g)$
    augmented with \rev{a finite set of} skills $Z$ is the \sdmdp\ $\M_{\aug} = (S, A_{\aug}, T_{\aug}, g)$
    where $A_{\aug} = A_{\base} \cup Z$, $T_{\aug}(s, a) = T_{\base}(s, a)$ for $a \in A_{\base}$,
    and $T_{\aug}(s, a) = T_{\base}(s, a(s))$ for $a \in Z$.\footnote{
    \revblock Technically, $T_{\aug}$ is a partial function as $T_{\aug}(s, z)$
    is undefined if unrolling the skill $z$ reaches the goal state before the unrolling finishes.
    Thus, in this case, the agent is considered \textit{not} to have reached the goal state.
    (However, our HRL implementation in our experiments follows the more common convention
    that the agent is considered successful in this situation.)
    }
    We say $\M_{\aug}$ is the \textit{$A_{\aug}$-skill augmentation} of $\M_{\base}$.
    We call $A_{\base}$ the \textit{base action space} and $A_{\aug}$ the \textit{skill-augmented action space}.
    Furthermore, if $Z \neq \emptyset$ so that $A_{\base}$ is a proper subset of $A_{\aug}$,
    then we say the skill augmentation is \textit{strict}.
    
    For simplicity, when discussing a base environment $\M_{\base}$ and its skill augmentation $\M_{\aug}$,
    we will abuse notation by writing subscripts ``$\aug$'' or ``$\base$''
    in places where they should really be ``$\M_{\aug}$'' or ``$\M_{\base}$'',
    such as $d_{\base}(s)$ \rev{and} $\Sol_{\base}(s)$
    for $d_{\M_{\base}}(s)$ \rev{and} $\Sol_{\M_{\base}}(s)$.
    We allow repetition of skills and skills are also allowed to overlap with base actions.
    In such cases, $Z$ and $A_{\aug}$ should be interpreted as multisets.
\end{definition}


\section{Quantifying RL Difficulty in a \rev{Deterministic Sparse-Reward} Environment} \label{sec:rldiff}

To study how much skills can reduce the difficulty of applying RL to a \sdmdp,
we need to first quantify this difficulty.
Unfortunately, existing MDP difficulty metrics fail to capture RL difficulty in \sdmdp s
since they were not designed to directly estimate sample efficiency or regret,
but instead appear in loose asymptotic performance bounds of RL algorithms
(see \cref{app:rldiff} for a brief survey).
As a result, they correlate poorly with actual performance measures like total regret \citep{conserva2022hardness}.
We therefore aim to develop difficulty metrics for \sdmdp s by
directly estimating an RL performance measure --- in our case, sample efficiency
--- and to verify them empirically.

Below, we introduce two metrics quantifying the difficulty of applying RL to a
deterministic sparse\rev{-}reward environment,
assuming that the environments compared have the same state space
(e.g., they are different skill augmentations of the same base environment).
We motivate these metrics using heuristic arguments that estimate the sample efficiency
of an RL agent in the episodic setting without assuming any particular RL algorithm.
We then experimentally test how well the metrics correlate with the sample efficiency
of 4 popular RL algorithms in 32 macroaction augmentations of each of 4 base environments.


\subsection{Quantifying Difficulty in Learning from Experience} \label{sec:learndiff_def}

To quantify the complexity of learning a \sdmdp\ from existing experience,
suppose that the agent has gathered enough experience to effectively reduce the remaining
learning problem to a planning problem.
Then \Cref{lem:learndiff} shows that the number of iterations through the entire state space needed to learn
the value of a state is linear in the minimum length of a solution to that state.
\begin{lemma} \label{lem:learndiff}
    Suppose we apply value iteration with discount rate $\gamma=1$ and learning rate $\alpha$
    to a \sdmdp\ $\M = (S, A, T, g)$ with a finite action space.
    In particular, we initialize $V(s) \gets 0$ for $s \neq g$ and $V(g) \gets 1$,
    and at time $t$, we update the entire table using
    \begin{equation} \label{eq:value_iteration}
        V(s) \gets (1 - \alpha) V(s) + \alpha \max_a V(T(s, a)) \quad \text{for all $s \neq g$}.
    \end{equation}
    If $\alpha = 1$, then the number of time steps until the value of a solvable state $s$
    becomes its true value (i.e., $1$) is $d_\M(s)$.
    If $\alpha < 1$, then the number of time steps until the value of a solvable state $s$ is within
    $\eps$ of its true value (i.e., $1 - V(s) < \eps$) is
    \[
        \Theta\left(\frac{d_\M(s) + \log(1/\eps)}{\alpha}\right).
    \]
\end{lemma}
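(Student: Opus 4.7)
I would split the argument into the $\alpha=1$ case, which is purely combinatorial, and the $\alpha<1$ case, which reduces to a binomial tail bound via a closed-form identity.

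For $\alpha=1$, the update collapses to $V(s) \gets \max_a V(T(s,a))$, and I would prove by induction on $t$ that $V_t(s) = \bs{1}[d_\M(s) \le t]$. The base case $t=0$ is immediate from the initialization. For the step, $V_{t+1}(s)=1$ iff some successor has $V_t = 1$ iff $\min_a d_\M(T(s,a)) \le t$, and this minimum equals $d_\M(s)-1$ for any solvable $s\ne g$: the first action of a shortest solution witnesses $\le$, and no neighbor can be strictly closer because otherwise $s$ would have a shorter solution. Hence $V_t(s)$ first reaches $1$ exactly at $t=d_\M(s)$.

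For $\alpha<1$, the heart of the argument is the closed form
\[
V_t(s) \;=\; \Pr(X_t \ge d_\M(s)), \qquad X_t \sim \mathrm{Binomial}(t,\alpha),
\]
which I would prove by induction on $t$ with the conventions $d_\M(g)=0$ and $d_\M \equiv \infty$ on unsolvable states. At $t=0$ both sides equal $\bs{1}[s=g]$. For the step, conditioning on the last Bernoulli trial gives $\Pr(X_{t+1}\ge d) = (1-\alpha)\Pr(X_t\ge d) + \alpha \Pr(X_t\ge d-1)$, while monotonicity of the binomial tail together with the identity $\min_a d_\M(T(s,a)) = d_\M(s)-1$ rewrites $\max_a V_t(T(s,a))$ as $\Pr(X_t \ge d_\M(s)-1)$, so the two recursions coincide.

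With the closed form in hand, the question reduces to: what is the smallest $t$ for which $\Pr(X_t < d_\M(s)) < \eps$? For the upper bound, a Chernoff bound $\Pr(X_t \le \mu/2) \le e^{-\mu/8}$ with $\mu := t\alpha \ge \max(2d_\M(s),\,8\log(1/\eps))$ yields $t = O((d_\M(s)+\log(1/\eps))/\alpha)$. For the matching lower bound I would argue two obstructions: (i) $t\alpha < d_\M(s)$ forces $\mathbb{E}[X_t] < d_\M(s)$, so anti-concentration of binomials gives $V_t(s) \le 1/2 < 1-\eps$ whenever $\eps < 1/2$; and (ii) even in the easy case $d_\M(s)=1$, the exact recursion yields $V_t(s) = 1-(1-\alpha)^t$, forcing $t = \Omega(\log(1/\eps)/\alpha)$ (using $-\log(1-\alpha) = \Theta(\alpha)$ for $\alpha$ bounded away from $1$). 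The main obstacle I anticipate is establishing the closed-form identity cleanly, since the interaction of the $\max$-over-actions with monotonicity in distance must be preserved through the inductive step and boundary cases (goal and unsolvable neighbors) handled carefully; once the identity is in place, the concentration estimates are routine.
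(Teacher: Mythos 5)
Your proposal is correct, and for the quantitative heart of the lemma it takes a genuinely different route from the paper. Both proofs share the same first reduction: the $\alpha=1$ induction is identical, and in the $\alpha<1$ case both arguments collapse the MDP to a one-dimensional recursion indexed by $d_\M(s)$, using exactly the invariants you state ($\min_a d_\M(T(s,a)) = d_\M(s)-1$ for solvable $s \neq g$, plus monotonicity of the value in distance so that the $\max$ over actions is attained at the closest successor). Where you diverge is in solving that recursion. The paper approximates the discrete update by the ODE $\frac{dV_s}{dt} = -\alpha(V_s - V_{s-1})$, obtains the Poisson tail $V_s(t) = 1 - e^{-\alpha t}\sum_{k<s}(\alpha t)^k/k!$, and then carries out explicit tail estimates (including a fact about the median of the Poisson distribution) to get both directions of the $\Theta$ bound. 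You instead observe the \emph{exact} discrete closed form $V_t(s) = \Pr(X_t \ge d_\M(s))$ with $X_t \sim \mathrm{Binomial}(t,\alpha)$, verified by conditioning on the last Bernoulli trial, and then apply a Chernoff bound for the upper direction and the binomial median plus $\Pr(X_t = 0) = (1-\alpha)^t$ for the lower direction. Your version buys rigor: the paper's ODE step is only an approximation valid for $\alpha \ll 1$ and is not justified as such, whereas your binomial identity solves the actual recursion (it is the discrete-time analogue of the paper's Poisson formula, which emerges in the $\alpha \to 0$ limit). The paper's version buys slightly more self-contained asymptotics, since its tail computations are done by hand rather than quoted as standard concentration facts.

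One caveat you correctly flag yourself, and which applies equally (indeed more severely) to the paper's proof: the $\Omega(\log(1/\eps)/\alpha)$ part of the lower bound needs $-\log(1-\alpha) = \Theta(\alpha)$, i.e.\ $\alpha$ bounded away from $1$. As stated for all $\alpha < 1$ the lemma's lower bound actually fails in that regime (take $\alpha$ extremely close to $1$ and $d_\M(s)=1$), so this is a defect of the statement rather than of your argument; it would be worth making the restriction explicit.
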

Since each iteration has a complexity of $\Theta(|S||A|)$, the total complexity for learning
the value of a state $s$ is $\Theta(|S||A|d_\M(s))$ for constant $\alpha, \eps$.
If we apply the same intuition to the RL setting,
then we would expect that learning the optimal policy at a state $s$ requires $\Theta(d_\M(s))$
``iterations,'' where one ``iteration'' involves the agent sampling experiences
that effectively cover the entire space of state-action pairs.
Thus, as a rough estimation, approximately $\Theta(|S_\text{eff}||A|d_\M(s))$ samples are
needed to learn the policy at state $s$.
Here, $|S_\text{eff}|$ is some effective size of the state space,
counting only those states that we ``care about,'' i.e., those with positive $p_0(s)$
or that are part of (short) solutions to states with positive $p_0(s)$.
For constant $|S_\text{eff}|$, this estimation of the sample complexity
motivates using a weighted average of $|A|d_\M(s)$ over states $s$ to measure
the complexity of learning from experience.
\begin{definition}
    Let $\M = (S, A, T, g)$ be a \sdmdp\ with finite action space $A$.
    For a probability distribution $p$ on solvable states,
    the \textit{$p$-learning difficulty} of $\M$ is defined as
    \begin{equation} \label{eq:learndiff}
        \learndiff(\M; p) = |A| \mathbb E_{s \sim p}[d_\M(s)]
    \end{equation}
    where $d_\M(s)$ is the length of a shortest solution to $s$.
\end{definition}
The distribution $p$ assigns higher importance to states that we care more about learning to solve.
If $p_0$ denotes the initial state distribution of the MDP, then $p$ should be higher for states with higher $p_0$.
For simplicity, we can just take $p$ to be $p_0$.

The $p$-learning difficulty can be viewed as a generalization of diameter \citep{auer2008diameter}.
While the diameter of an MDP is originally defined for the continuous learning setting,
a natural extension to the episodic setting for a \sdmdp\ is the maximum length of a solution
to a state, $\max_{s \neq g} d_\M(s)$. Ignoring the $|A|$ factor,
this is the $p$-learning difficulty when $p$ is zero for all but the state(s) with the largest $d_\M(s)$.

\subsection{Quantifying Difficulty in Exploration} \label{sec:expldiff_def}

$p$-learning difficulty does not take into account the complexity of gathering the needed experience:
learning a state $s$ starts to take place only after the agent has seen state-action pairs that form a chain
leading from $s$ to the goal state.
Thus, as a simplification, an agent's learning process in the episodic setting can be roughly divided into two stages:
the first stage is dominated by exploration, where the agent tries to find reward signal and gather experience;
the second stage is dominated by learning, where the agent learns from the experience.
The sample efficiency of the learning stage is captured by the $p$-learning difficulty.
Let us now motivate the definition of \textit{$p$-exploration difficulty}
by estimating the sample efficiency of the exploration stage.

Suppose that the initial exploration policy is a uniformly random policy,
and let $q(s)$ denote the probability that such a policy solves $s$ in one episode.
Assuming that the policy remains roughly uniform until the agent finally solves $s$ for the first time,
the expected number of episodes until this happens is $1/q(s)$,
and the number of environment steps taken is $H/q(s)$ where $H$ is the horizon.
To obtain an upper bound on the expected total number of steps taken to find a solution to every state,
we simply sum this expression over all states to arrive at $N_\text{sum} = H \sum_s \frac{1}{q(s)}$.
Note that this can be a significant overestimate of the true sample complexity:
solving a state $s$ often updates the agent in a way that helps it solve states whose solutions contain $s$.
We will address this issue later.

For a constant horizon $H$ and state space size,
$N_\text{sum} \propto \mathbb E_{s \sim p}\left[1/q(s)\right]$ where $p$ is a uniform distribution over all states.
As with the $p$-learning difficulty, we generalize this to allow different weights $p(s)$ to be assigned to different states.
For example, if a state has small $q(s)$ but the MDP's initial \rev{state} distribution $p_0$ assigns almost zero probability
to $s$, then we \rev{can} afford \rev{not to learn to solve} $s$ and this can be reflected by having \rev{$p(s) \approx 0$}.
For simplicity, \rev{we can simply set} $p$ \rev{to} $p_0$, as with the $p$-learning difficulty.

We now address the issue of overestimating the sample \rev{complexity}.
In practice, this overestimation is more significant when $q(s)$ for different $s$ are more disparate.
In \sdmdp s where states vary in difficulty (vary in $q(s)$),
solving easy states (states with large $q(s)$) generally updates the agent in a way
that helps it find solutions to harder states (states with small $q(s)$).
For this reason, we find empirically (\cref{app:expt_results_AM}) that
the arithmetic mean $N_\text{AM} = \mathbb E_{s \sim p}[1/q(s)]$
is outperformed by the geometric mean $N_\text{GM} = \exp(\mathbb E_{s \sim p}[\log(1/q(s))])$,
which is lower than $N_\text{AM}$ when there's variety in $1/q(s)$.
Although this estimation of exploration sample complexity is quite rough,
it is difficult to make better estimates without knowing details of the MDP structure and RL algorithm.
Also, the resultant definition of $p$-exploration difficulty already performs well empirically
on several environments for several RL algorithms (\cref{sec:expt_diff}).

Finally, we take the logarithm of $N_\text{GM}$ as that simplifies notation in our theoretical results.
We also replace the fixed horizon with a random horizon sampled from a \rev{geometric} distribution
to simplify theoretical analysis.
\begin{definition}
    Let $\M = (S, A, T, g)$ be a \sdmdp\ with finite action space $A$.
    For a probability distribution $p$ on solvable states and $0 \leq \delta < 1$,
    the \textit{$\delta$-discounted $p$-exploration difficulty} of $\M$ is defined as
    \begin{equation} \label{eq:expldiff}
        \expldiff(\M; p, \delta) = \mathbb E_{s \sim p}[-\log q_{\M,\delta}(s)]
    \end{equation}
    where
    \begin{equation} \label{eq:q}
        q_{\M,\delta}(s) := \sum_{\sigma \in \Sol(s)} \left(\frac{1 - \delta}{|A|}\right)^{|\sigma|}
    \end{equation}
    is the probability that the following policy solves $s$: at every time step, terminate with probability $\delta$
    and \rev{choose an action uniformly at random} with probability $1 - \delta$.
    $q_{\M,\delta}(s)$ is also the probability that the uniform\rev{ly} random policy solves $s$ within
    a horizon of length $H$, where $H + 1$ is sampled from the geometric distribution with parameter $\delta$.
\end{definition}

\subsection{Experiments} \label{sec:expt_diff}

\begin{table*}[t]
\caption{Correlations between $\log N$ and $\log J$
where $N$ is the number of environment steps the agent takes
to learn the environment
and $J = \lambda\learndiff + (1 - \lambda)\exp(\expldiff)$ is a weighted average of
the $p$-learning difficulty and the exponential of the $p$-exploration difficulty.
Convergence criteria include reaching a certain reward threshold $r^*$
(0.5 for \texttt{RubiksCube222} and 0.9 for the other environments)
or reaching a certain threshold $\Delta Q^*$ or $\Delta V^*$ in the $p$-weighted average error in action or state values
(0.2 for \texttt{RubiksCube222} and 0.05 for the other environments).
The value of $\lambda \in [0, 1]$ was chosen so that the correlation was maximized.
Data points where the algorithm never converges before the experiment run ends (100M environment steps)
were excluded from the calculation of the correlation.
The reported errors are standard errors of the mean over \rev{5} random seeds.
}
\label{tab:expt_results}
\vskip 0.15in
\begin{center}
\begin{small}
\begin{tabular}{rrcccc}
\toprule
&& $\log J_\mathtt{CliffWalking}$ & $\log J_\mathtt{CompILE2}$ & $\log J_\mathtt{8Puzzle}$ & $\log J_\mathtt{RubiksCube222}$ \\
\toprule
\multirow{2}{*}{Q-Learning}
& $\log N_{r \geq r^*}$
    & \rev{0.947 $\pm$ 0.006} & \rev{0.792 $\pm$ 0.025} & \rev{0.403 $\pm$ 0.036} & \rev{0.857 $\pm$ 0.023} \\
& $\log N_{\overline{\Delta Q} \leq \Delta Q^*}$
    & \rev{0.953 $\pm$ 0.008} & \rev{0.786 $\pm$ 0.023} & \rev{0.671 $\pm$ 0.056} & \rev{0.937 $\pm$ 0.003} \\
\midrule
\multirow{2}{*}{Value iteration}
& $\log N_{r \geq r^*}$
    & \rev{0.933 $\pm$ 0.009} & \rev{0.825 $\pm$ 0.018} & \rev{0.693 $\pm$ 0.051} & \rev{0.785 $\pm$ 0.031} \\
& $\log N_{\overline{\Delta V} \leq \Delta V^*}$
    & \rev{0.951 $\pm$ 0.015} & \rev{0.849 $\pm$ 0.013} & \rev{0.885 $\pm$ 0.011} & \rev{0.748 $\pm$ 0.029} \\
\midrule
\multirow{1}{*}{REINFORCE}
& $\log N_{r \geq r^*}$
    & \rev{0.949 $\pm$ 0.006} & \rev{0.869 $\pm$ 0.013} & \rev{0.678 $\pm$ 0.020} & \rev{0.892 $\pm$ 0.029} \\
\midrule
\multirow{1}{*}{DQN}
& $\log N_{r \geq r^*}$
    & \rev{0.789 $\pm$ 0.028} & \rev{0.758 $\pm$ 0.076} & \rev{0.583 $\pm$ 0.039} & \rev{0.753 $\pm$ 0.019} \\
\bottomrule
\end{tabular}
\end{small}
\end{center}
\vskip -0.1in
\end{table*}

In motivating $p$-learning difficulty and $p$-exploration difficulty,
we made significant approximations to estimate the sample complexity without assuming a particular environment or RL algorithm.
Despite this, we show empirically that a combination of the two difficult metrics predicts sample complexity well
across a variety of environments and RL algorithms.

We study four deterministic sparse-reward environments:
    (a) \texttt{CliffWalking}, a simple grid world \citep{sutton2018reinforcement};
    (b) \texttt{CompILE2}, the CompILE grid world with visit length 2 \citep{kipf2019compile};
    (c) \texttt{8Puzzle}, the 8-puzzle;
    (d) \texttt{RubiksCube222}, the 2x2 Rubik's cube.
For the computation of $p$-learning difficulty and $p$-exploration difficulty to be feasible,
$p$ needs to have finite support over a sufficiently small number of states ($\sim 10^7$ or less). 
\rev{To mitigate this limitation, we chose environments for which
there exist larger versions with a similar MDP structure.
For example, the 2x2 Rubik's cube should behave similarly to the 3x3 cube, 4x4 cube, etc.,
and the 8-puzzle should behave similarly to the 15-puzzle, 24-puzzle, etc.}

Each environment has 32 action space variants, with one being the base environment
(the trivial skill augmentation) and 31 with different sets of macroactions.
One macroaction augmentation is calculated using LEMMA \citep{li2022lemma} on offline data derived from breadth-first search;
5 are variations of that macroaction augmentation; and 25 are generated randomly.
More details are given in \cref{app:envs}.

We evaluate how well a combination of $p$-learning difficulty and $p$-exploration difficulty
captures the sample complexity of 4 RL algorithms on the different variants of each environment.
The algorithms are:
    (a) Q-learning \citep{watkins1989qlearning};
    (b) Value iteration \citep{bellman1957valueiteration}, modified to the RL setting, similar to \citep{agostinelli2019cube};
    (c) REINFORCE \citep{williams1992reinforce}, made tabular by parameterizing the policy directly with the logits of the actions;
    (d) Deep Q-networks (DQN) \citep{mnih2015dqn}.

According to \cref{sec:learndiff_def,sec:expldiff_def},
we expect $\learndiff$ to scale roughly linearly with the sample complexity of learning from experience
and $\exp(\expldiff)$ to scale roughly linearly with the sample complexity of exploration.
We thus choose a \rev{weighted average} $J = \lambda\learndiff + \rev{(1 - \lambda)}\exp(\expldiff)$
($0 \leq \lambda \leq 1$) to represent the combined difficulty.
The discount $\delta$ used in the $p$-exploration difficulty
is set to $1/H$, where $H$ is the environment's horizon.
The sample complexity $N$ and the combined difficulty $J$
spanned several orders of magnitude in \texttt{CliffWalking} and \texttt{CompILE2},
so we took the logarithm of both before computing their Pearson correlation coefficient.
The value of $\lambda$ was chosen to maximize this correlation.
The results are summarized in \cref{tab:expt_results}.
Most correlation values are \rev{at least around} 0.7,
demonstrating that combining $p$-learning difficulty and $p$-exploration difficulty
allows us to capture a significant portion of the variation in RL sample efficiency
on different action space variants of the same environment.

We also conducted experiments to directly test \cref{lem:learndiff}
by computing the correlation between the number of iterations
it takes value iteration to converge and the $p$-weighted average solution length
(\cref{app:expt_learndiff}).
In addition to state value iteration, we also considered Q-value iteration
to simulate Q-learning. With \rev{two} exception\rev{s}, all correlations are above 0.9,
thus empirically corroborating \cref{lem:learndiff}.

\section{Effect of Skills on Learning from Experience} \label{sec:learndiff}

Part of our goal is to understand what makes a particular set of skills helpful for an RL agent.
One intuition articulated in prior work \cite{jiang2022love,kipf2019compile} is that
skills help \emph{compress} optimal trajectories,
making them shorter and thus more likely to be found during exploration.
But, conversely, data distributions can be provably \emph{incompressible} when
their entropy is too high \cite{cover1994information}.
As a result, we expect that skills are less likely to be helpful when
the distribution of optimal trajectories in the environment is incompressible.
{\revblock
This intuition is made precise by \Cref{thm:learndiff},
which states that the ratio between the new and old $p$-learning difficulties
after an $A_{\aug}$-skill augmentation is lower-bounded by the product of an incompressibility measure
and a factor penalizing large $|A_{\aug}|$.
Before stating the theorem, let's first define this incompressibility measure.
\begin{definition}
    Let $\M_{\base} = (S, A_{\base}, T_{\base}, g)$ be a \sdmdp\ with finite $|A_{\base}| > 1$
    and $\M_{\aug} = (S, A_{\aug}, T_{\aug}, g)$ its $A_{\aug}$-skill augmentation.
    Let $p$ be a distribution over solvable states.
    The \emph{$A_{\aug}$-merged $p$-incompressibility} is defined as
    \begin{equation} \label{eq:incomp_A}
        \incompress_{A_{\aug}}(\M_{\base}; p) = \sup_{0 < \eps < 1} \frac{\entropy[P_{\aug}] - \log\left(\frac{1 - \eps}{\eps}\right)}{\mathbb E_{s \sim p}[d_{\base}(s)] \log\left(\frac{|A_{\base}|}{1 - \eps}\right)}.
    \end{equation}
    Here, $P_{\aug}$ is the distribution of canonical shortest solutions in $\M_{\aug}$
    to states sampled from $p$, where
    the canonical shortest solutions are chosen such that $\entropy[P_{\aug}]$ is maximized.
    Note that $\entropy[P_{\aug}]$ is the entropy of the state distribution after
    states with the same canonical solution in $\M_{\aug}$ have been merged into one state.
    Thus, it has the property $\entropy[P_{\aug}] \leq \entropy[p]$,
    where equality holds iff all states in the support of $p$ have different canonical solutions.
\end{definition}
$A_{\aug}$-merged $p$-incompressibility can be understood as the coding efficiency of
using base actions to write solutions to states sampled from $p$
as opposed to using a code optimized for the distribution of shortest solutions with skills.
More precisely, we can write
\begin{equation}
    \incompress_{A_{\aug}}(\M_{\base}; p) = \sup_{0 < \eps < 1} \frac{\entropy[P_{\aug}] - \log\left(\frac{1 - \eps}{\eps}\right)}{\entropy[P_{\base}, P_{\base,\unif,\eps}] - \log\left(\frac{1 - \eps}{\eps}\right)},
\end{equation}
where
    $\entropy[P_{\aug}]$ is the optimal expected number of bits needed to encode a (canonical) shortest solution
    in $\M_{\aug}$ to a state $s \sim p$, and
    $\entropy[P_{\base}, P_{\base,\unif,\eps}]$ denotes the cross entropy between $P_{\base}$ and $P_{\base,\unif,\eps}$. 
    $P_{\base}$ is the distribution
    of shortest solutions to states sampled from $p$ containing only base actions.
    $P_{\base,\unif,\eps}(\sigma) = \eps(1 - \eps)^{|\sigma| - 1} |A_{\base}|^{-|\sigma|}$ is a uniform prior over base action sequences.
    $\entropy[P_{\base}, P_{\base,\unif,\eps}]$ is thus the expected number of bits required to encode a shortest solution
    using a fixed-length code over base actions $A_{\base}$,
    optimized for a termination symbol that appears at the end of each time step with probability $\eps$.

We now introduce the theorem, which shows how
$A_{\aug}$-merged $p$-incompressibility can be used to bound how much skills in $A_{\aug}$
can improve $p$-learning difficulty.
\begin{theorem} \label{thm:learndiff}
    Let $\M_{\aug} = (S, A_{\aug}, T_{\aug}, g)$ be the $A_{\aug}$-skill augmentation of the \sdmdp\ $\M_{\base} = (S, A_{\base}, T_{\base}, g)$ with finite $|A_{\base}| > 1$,
    and $p$ a probability distribution over solvable states.
    Then
    \begin{equation} \label{eq:learndiff_icA}
        \frac{\learndiff(\M_{\aug}; p)}{\learndiff(\M_{\base}; p)}
        \geq \frac{|A_{\aug}|\log|A_{\base}|}{|A_{\base}|\log|A_{\aug}|}\incompress_{A_{\aug}}(\M_{\base}; p).
    \end{equation}
\end{theorem}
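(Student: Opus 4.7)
The plan is to combine an information-theoretic lower bound on $\mathbb{E}_{s\sim p}[d_{\aug}(s)]$, obtained from Gibbs's inequality against a geometric reference prior over $A_{\aug}$-sequences, with the simple monotonicity $|A_{\base}|\leq|A_{\aug}|$. The bound on $\mathbb{E}_{s\sim p}[d_{\aug}(s)]$ then gets converted into the claimed bound on $\learndiff(\M_{\aug};p)/\learndiff(\M_{\base};p)$ via a short algebraic swap.

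For the entropy step I would fix $\eps\in(0,1)$ and take
\[
    P_{\aug,\unif,\eps}(\sigma) \;=\; \eps(1-\eps)^{|\sigma|-1}|A_{\aug}|^{-|\sigma|},
\]
a valid probability distribution on finite $A_{\aug}$-sequences (terminate at each step with probability $\eps$, otherwise pick a uniform action from $A_{\aug}$). By non-negativity of KL divergence, $\entropy[P_{\aug}]\leq\mathbb{E}_{\sigma\sim P_{\aug}}[-\log P_{\aug,\unif,\eps}(\sigma)]$. Expanding the logarithm gives $-\log P_{\aug,\unif,\eps}(\sigma) = \log\tfrac{1-\eps}{\eps} + |\sigma|\log\tfrac{|A_{\aug}|}{1-\eps}$, and since $|\sigma|=d_{\aug}(s)$ for the canonical shortest solution to each $s$, rearranging yields
\[
    \mathbb{E}_{s\sim p}[d_{\aug}(s)] \;\geq\; \frac{\entropy[P_{\aug}]-\log\tfrac{1-\eps}{\eps}}{\log\tfrac{|A_{\aug}|}{1-\eps}}.
\]

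Multiplying by $|A_{\aug}|$ and dividing by $\learndiff(\M_{\base};p)=|A_{\base}|\mathbb{E}_{s\sim p}[d_{\base}(s)]$ produces a bound almost identical to \cref{eq:learndiff_icA}, the sole discrepancy being $\log\tfrac{|A_{\aug}|}{1-\eps}$ in the denominator rather than the $\tfrac{\log|A_{\aug}|}{\log|A_{\base}|}\log\tfrac{|A_{\base}|}{1-\eps}$ dictated by the statement. I would close this gap by verifying the elementary monotonicity
\[
    \log|A_{\aug}|\cdot\log\tfrac{|A_{\base}|}{1-\eps} \;\geq\; \log|A_{\base}|\cdot\log\tfrac{|A_{\aug}|}{1-\eps},
\]
which expands to $(\log|A_{\aug}|-\log|A_{\base}|)\log\tfrac{1}{1-\eps}\geq 0$ and holds because $A_{\base}\subseteq A_{\aug}$ and $\eps\in(0,1)$. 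Taking the supremum over $\eps$ then reconstructs the incompressibility factor and delivers \cref{eq:learndiff_icA}.

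The only real subtlety I foresee is a sign concern: the algebraic swap above preserves direction only when $\entropy[P_{\aug}]-\log\tfrac{1-\eps}{\eps}\geq 0$, which fails for very small $\eps$. This is harmless, because values of $\eps$ where the numerator is negative contribute only nonpositive quantities to $\incompress_{A_{\aug}}(\M_{\base};p)$---if the supremum is nonneg, it is attained on $\eps$'s with nonneg numerator and the swap applies; otherwise the theorem's right-hand side is negative and automatically dominated by the nonneg ratio on the left. Apart from this, the proof is a single application of Gibbs's inequality followed by a monotonicity check; the real substance of the theorem lives in the judicious design of $\incompress_{A_{\aug}}$, not in any hard analytic step. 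Worth double-checking along the way is that the entropy-maximizing choice of canonical shortest solutions only strengthens (never weakens) the bound, since the cross-entropy $\mathbb{E}_{\sigma\sim P_{\aug}}[-\log P_{\aug,\unif,\eps}(\sigma)]$ depends on the canonical choice only through $d_{\aug}(s)$, which is invariant.
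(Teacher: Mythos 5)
Your proposal is correct and follows essentially the same route as the paper's proof: the same geometric reference prior $P_{\aug,\unif,\eps}$, the same cross-entropy (Gibbs) bound $\entropy[P_{\aug}]\leq\entropy[P_{\aug},P_{\aug,\unif,\eps}]$ yielding the lower bound on $\mathbb E_{s\sim p}[d_{\aug}(s)]$, and the same final monotonicity check reducing $\log\frac{|A_{\aug}|}{1-\eps}$ to $\frac{\log|A_{\aug}|}{\log|A_{\base}|}\log\frac{|A_{\base}|}{1-\eps}$ before taking the supremum over $\eps$. Your handling of the sign of the numerator for small $\eps$ is in fact slightly more careful than the paper's, which silently relies on the left-hand ratio being nonnegative in that regime.
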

}

\rev{We can use \Cref{thm:learndiff} to understand the effect that the expressivity of skills has
on their ability to improve $p$-learning difficulty.}\footnote{
\rev{See \cref{app:express} for a more formal treatment where the incompressibility measure in \Cref{thm:learndiff}
is replaced with one defined explicitly in terms of a quantitative measure of expressivity.
}
}
More expressive skills can encode more diverse behavior and thus allow a larger number of action sequences
to be encoded as the same skill. This allows states to share solutions more often, which decreases $\entropy[P_{\aug}]$
and hence $\incompress_{A_{\aug}}(\M_{\base}; p)$.
As a result, the lower bound on the $p$-learning difficulty ratio decreases.
As concrete examples, if we place no restriction on what kinds of skills are allowed,
then \rev{we can simply include a single skill that solves all solvable states,
resulting in} $\incompress_{A_{\aug}}(\M_{\base}; p) = 0$ and $\learndiff(\M_{\aug}; p) = |A_{\base}| + 1$.
This is less than $\learndiff(\M_{\base}; p)$ whenever $\mathbb E_{s \sim p}[d_{\base}(s)] > 1 + 1/|A_{\base}|$,
which is true for all RL environments of practical interest.
If a skill is allowed to be a concrete sequence of actions and loops of actions,
then states whose solutions involve different numbers of repetitions of the same component
will have the same solution containing a skill with a loop whose body is that component.
Thus, $\entropy[P_{\aug}] < \entropy[p]$ but is larger than the value of zero obtained when no restriction is placed on skills.
Finally, if skills are restricted to macroactions, then distinct solutions remain distinct
after rewriting with macroactions, and so the $A_{\aug}$-merged $p$-incompressibility
achieves its maximum value.
\rev{In solution-separable environments, this maximum value
is equal to the \textit{unmerged $p$-incompressibility} (\cref{def:incomp_unmerged}),
in which case \cref{thm:learndiff} can be restated in terms of it
(\cref{cor:learndiff_ic}).}
\rev{
\begin{definition} \label{def:incomp_unmerged}
    Let $\M = (S, A, T, g)$ be a \sdmdp\ with finite $|A| > 1$ and $p$ a distribution over solvable states.
    The \emph{unmerged $p$-incompressibility} is defined as
    \begin{equation} \label{eq:incomp_unmerged}
        \incompress(\M; p) = \sup_{0 < \eps < 1} \incompress(\M; p, \eps),
    \end{equation}
    where the \emph{$\eps$-discounted unmerged $p$-incompressibility}
    \begin{equation} \label{eq:incomp_discounted} \allowdisplaybreaks
        \incompress(\M; p, \eps) = \frac{\entropy[p] - \log\left(\frac{1 - \eps}{\eps}\right)}{\mathbb E_{s \sim p}[d_\M(s)] \log\left(\frac{|A|}{1 - \eps}\right)}.
    \end{equation}
    It measures incompressibility on a scale from 0 to 1 if $\M$ is solution-separable.
    Furthermore, unlike the $A_{\aug}$-merged $p$-incompressibility,
    it is a function of only $\M$ and $p$ and is thus a general measure
    of the incompressibility of $\M$.
\end{definition}
}
\begin{corollary}[Corollary to \cref{thm:learndiff}] \label{cor:learndiff_ic}
    \rev{In the setup to \cref{thm:learndiff}, suppose $\M_{\base}$ is solution-separable\footnote{
    \rev{See \cref{app:learndiff_ic_gen} for the version of this corollary that does not assume
    solution-separability.}
    }
    and $A_{\aug}$ is a macroaction augmentation.}
    Then
    \begin{equation} \label{eq:learndiff_ic}
        \frac{\learndiff(\M_{\aug}; p)}{\learndiff(\M_{\base}; p)}
        \geq \frac{|A_{\aug}|\log|A_{\base}|}{|A_{\base}|\log|A_{\aug}|}\incompress(\M_{\base}; p).
    \end{equation}
    
\end{corollary}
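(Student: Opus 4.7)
The plan is to reduce \cref{cor:learndiff_ic} to \cref{thm:learndiff} by establishing that, under solution-separability and macroaction augmentation, the merged and unmerged incompressibility measures coincide: $\incompress_{A_{\aug}}(\M_{\base}; p) = \incompress(\M_{\base}; p)$. Comparing \eqref{eq:incomp_A} with \eqref{eq:incomp_discounted}, the two suprema differ only in whether the numerator of the $\eps$-integrand is $\entropy[P_{\aug}]$ or $\entropy[p]$; the denominator $\mathbb{E}_{s \sim p}[d_{\base}(s)]\log(|A_{\base}|/(1-\eps))$ is identical in both. So it suffices to show $\entropy[P_{\aug}] = \entropy[p]$.

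By the remark immediately following the definition of $A_{\aug}$-merged $p$-incompressibility, $\entropy[P_{\aug}] \leq \entropy[p]$ with equality iff distinct states in the support of $p$ admit distinct canonical shortest $\M_{\aug}$-solutions. To verify this distinctness, I would use a flattening argument. By \cref{def:macroaction}, every macroaction $z \in Z$ is a fixed base-action sequence (its \emph{body}) of length greater than $1$, independent of the initiating state. Hence any solution $\sigma \in A_{\aug}^*$ to a state $s$ in $\M_{\aug}$ can be flattened into a base-action sequence $\sigma^\flat \in A_{\base}^*$ by concatenating the bodies of its macroaction occurrences in order, and $\sigma^\flat$ is a solution to $s$ in $\M_{\base}$ since unrolling $\sigma$ from $s$ in $\M_{\aug}$ executes exactly those base actions.

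Now suppose two distinct solvable states $s_1, s_2$ in the support of $p$ admitted a common shortest $\M_{\aug}$-solution $\sigma$. Then $\sigma^\flat$ would be a common $\M_{\base}$-solution to $s_1$ and $s_2$, contradicting the assumed solution-separability of $\M_{\base}$. Thus the sets of shortest $\M_{\aug}$-solutions for distinct states in the support of $p$ are pairwise disjoint, so the canonical representatives can be chosen distinct, giving $\entropy[P_{\aug}] = \entropy[p]$. Substituting this identity into \cref{thm:learndiff} yields \eqref{eq:learndiff_ic}.

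The main obstacle, while minor, is the flattening step: it crucially relies on the state-independence built into \cref{def:macroaction} so that $\sigma^\flat$ is well-defined and preserves the solution property. This is precisely why the corollary restricts to macroaction augmentations rather than arbitrary skills, and hints at why the general \cref{thm:learndiff} has to be stated in terms of the weaker merged incompressibility rather than the more intrinsic unmerged one.
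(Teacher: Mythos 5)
Your proposal is correct and follows essentially the same route the paper takes: it reduces the corollary to \cref{thm:learndiff} by showing that, for a macroaction augmentation of a solution-separable base environment, flattening a shared skill-level solution into base actions would violate solution-separability, hence canonical shortest solutions in $\M_{\aug}$ are distinct across the support of $p$, $\entropy[P_{\aug}] = \entropy[p]$, and $\incompress_{A_{\aug}}(\M_{\base}; p) = \incompress(\M_{\base}; p)$. The paper only sketches this in the main text (``distinct solutions remain distinct after rewriting with macroactions''), and your write-up fills in the flattening details correctly, including the observation that state-independence of macroactions is exactly what makes the flattened sequence well-defined.
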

A direct \rev{consequence} of \rev{the above} corollary is that there exist environments
where incorporating macroactions will always worsen $p$-learning difficulty,
no matter how many there are or what they are.
\begin{corollary}[Corollary to \cref{cor:learndiff_ic}] \label{cor:learndiff}
    \rev{In the setup to \cref{thm:learndiff}, suppose $\M_{\base}$ is solution-separable
    and $A_{\aug}$ is a strict macroaction augmentation.}
    If
    \[
        1 - \incompress(\M_{\base}; p)
        \leq \frac{1}{|A_{\base}| + 1}\left(1 - \frac{1}{\ln|A_{\base}|}\right),
    \]
    then $\learndiff(\M_{\aug}; p) > \learndiff(\M_{\base}; p)$.
\end{corollary}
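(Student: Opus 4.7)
The plan is to invoke Corollary~\ref{cor:learndiff_ic} and reduce the result to a concrete numerical inequality. Writing $n := |A_{\base}|$ and $m := |A_{\aug}|$, Corollary~\ref{cor:learndiff_ic} gives
\[
\frac{\learndiff(\M_{\aug}; p)}{\learndiff(\M_{\base}; p)} \geq \frac{m \log n}{n \log m}\,\incompress(\M_{\base}; p),
\]
so it suffices to show the right-hand side exceeds $1$ whenever the hypothesis on $\incompress(\M_{\base}; p)$ holds.

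First, I would dispose of the degenerate case $n = 2$: the hypothesis demands $1 - \incompress(\M_{\base}; p) \leq \tfrac{1}{3}(1 - 1/\ln 2)$, whose right side is negative, while solution-separability of $\M_{\base}$ forces $\incompress(\M_{\base}; p) \in [0, 1]$. Hence the case is vacuous and I may assume $n \geq 3$, which together with the strictness of the augmentation yields $m \geq n+1 \geq 4$.

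Next I would exploit monotonicity: the function $x \mapsto (\log x)/x$ is strictly decreasing for $x > e$, so $(\log m)/m \leq \log(n+1)/(n+1)$, which rearranges to
\[
\frac{m \log n}{n \log m} \geq \frac{(n+1)\log n}{n \log(n+1)}.
\]
Rewriting the hypothesis as $\incompress(\M_{\base}; p) \geq \frac{n\ln n + 1}{(n+1)\ln n}$, and using base-invariance of $\log(n+1)/\log n = \ln(n+1)/\ln n$, the claim reduces after cross-multiplication to
\[
n\ln\left(1 + \tfrac{1}{n}\right) < 1,
\]
which follows immediately from $\ln(1+x) < x$ applied with $x = 1/n$.

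The only mild obstacle is bookkeeping: the incompressibility definition carries an unspecified logarithm base while the hypothesis has an explicit $\ln$, so some care is needed to combine the two cleanly. The base-invariance of $\log(n+1)/\log n$ is what lets the final inequality reduce to a statement purely in natural log, and the fact that $x \mapsto (\log x)/x$ is strictly decreasing beyond $e$ is precisely what makes the strictness of the augmentation produce a strict improvement in $\learndiff$.
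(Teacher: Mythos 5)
Your proof is correct and follows essentially the same route as the paper's: both invoke \cref{cor:learndiff_ic}, use the monotonicity of $\ln x/x$ for $x \geq e$ to reduce to the worst case $|A_{\aug}| = |A_{\base}|+1$, and close with the elementary bound $\ln(1+1/n) < 1/n$. Two minor quibbles that do not affect validity: the strictness of the final inequality comes from $\ln(1+1/n) < 1/n$ rather than from strict monotonicity of $(\log x)/x$ (when $|A_{\aug}| = |A_{\base}|+1$ that monotonicity step is an equality), and the $n=2$ case split is unnecessary, since the same algebra handles it uniformly as in the paper's proof.
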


\section{Effect of Skills on Exploration} \label{sec:expldiff}

To study the properties of a \sdmdp\ that make exploration difficult,
we \rev{have derived} a tight lower bound on the $p$-exploration difficulty of a \sdmdp\ in terms of
the entropy of $p$ and a term representing how dense solutions to states are in the space of all solutions
(\rev{\Cref{thm:expldiff_lb}).}
{\revblock
\begin{definition} \label{def:density}
    Let $\M = (S, A, T, g)$ be a \sdmdp\ with finite action space $A$.
    For $0 \leq \delta < 1$,
    the \emph{$\delta$-discounted solution density} of $\M$ is defined as
    \begin{equation}
        D(\M; \delta) = \sum_s \rho_{\M,\delta}(s),
    \end{equation}
    where
    \begin{align}
        \rho_{\M,\delta}(s) &= \frac{\delta}{1 - \delta} q_{\M,\delta}(s) \notag \\
        &= \sum_{\sigma \in \Sol_{\M}(s)} \delta(1 - \delta)^{|\sigma|-1} |A|^{-|\sigma|}
    \end{align}
    is the probability that a uniformly random action sequence with length sampled from $\Geomtric(\delta)$ solves $s$.
\end{definition}
}
\begin{theorem} \label{thm:expldiff_lb}
    Let $\M_{\aug} = (S, A_{\aug}, T_{\aug}, g)$ be the $A_{\aug}$-skill augmentation of the \sdmdp\ $\M_{\base} = (S, A_{\base}, T_{\base}, g)$ with a finite action space,
    and $p$ a probability distribution over solvable states.
    Then for $0 < \delta < 1$,
    \begin{equation} \label{eq:expldiff_lb}
        \expldiff(\M_{\aug}; p, \delta) \geq
        \entropy[p] - \log\left(\frac{1 - \delta}{\delta} D(\M_{\aug}; \delta)\right).
    \end{equation}
    Furthermore, if the state space is finite and $\delta > \max_s p(s)$,
    then for any $\eps > 0$, there exists an $A_{\aug}$-skill augmentation $\M_{\aug}$ of $\M_{\base}$ such that
    \begin{equation}
        \expldiff(\M_{\aug}; p, \delta) < 
        \entropy[p] - \log\left(\frac{1 - \delta}{\delta} D(\M_{\aug}; \delta)\right) + \eps,
    \end{equation}
    thus showing that the lower bound given above is tight for all finite \sdmdp s and
    a large range of $\delta$.
\end{theorem}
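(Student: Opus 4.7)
The proof splits naturally into the lower bound and the tightness claim.

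\paragraph{Lower bound.} The plan is to interpret $\tilde\rho(s) := \rho_{\M_{\aug},\delta}(s)/D(\M_{\aug};\delta)$ as a probability distribution on the solvable states and apply the non-negativity of KL divergence. First, the identity $q_{\M_{\aug},\delta}(s) = \tfrac{1-\delta}{\delta}\rho_{\M_{\aug},\delta}(s)$ from \cref{def:density} lets me rewrite
\begin{equation*}
\expldiff(\M_{\aug}; p, \delta) = \mathbb{E}_{s \sim p}[-\log \rho_{\M_{\aug},\delta}(s)] - \log\tfrac{1-\delta}{\delta}.
\end{equation*}
Since $\rho_{\M_{\aug},\delta}(s) > 0$ precisely when $s$ is solvable, $\tilde\rho$ is a genuine probability distribution whose support contains $\mathrm{supp}(p)$. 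Gibbs' inequality $\DKL{p}{\tilde\rho} \geq 0$ then gives $\mathbb{E}_{s \sim p}[-\log \tilde\rho(s)] \geq \entropy[p]$, and substituting $\tilde\rho = \rho_{\M_{\aug},\delta}/D(\M_{\aug};\delta)$ back in yields the stated bound.

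\paragraph{Tightness.} The plan is to construct an $A_{\aug}$-skill augmentation for which $\tilde\rho \to p$ on $\mathrm{supp}(p)$, driving the Gibbs slack to zero. For large $N$ and nonnegative integers $n_s$ with $\sum_{s\in\mathrm{supp}(p)} n_s = N$ and $n_s/N$ approximating $p(s)$ to arbitrary precision, add skills $z^{(s,i)}$ for $s \in \mathrm{supp}(p)$ and $i=1,\dots,n_s$, defined by: $z^{(s,i)}(s)$ is a shortest base-action solution to $s$, so $T_{\aug}(s, z^{(s,i)}) = g$; $z^{(s,i)}(s')$ is a base-action path from $s'$ to a chosen absorbing state $s_0 \notin \mathrm{supp}(p) \cup \{g\}$ for each $s' \neq s, s_0$; and $z^{(s,i)}(s_0)$ is the empty sequence, so $s_0$ self-loops under every new skill. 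Solving the linear recursion $q(s) = \tfrac{1-\delta}{|A_{\aug}|}\sum_{a \in A_{\aug}} q(T_{\aug}(s,a))$ with $q(g)=1$, one checks that $q_{\M_{\aug},\delta}(s_0) = O(1/N)$ and hence $q_{\M_{\aug},\delta}(s) \to (1-\delta) p(s)$ for $s \in \mathrm{supp}(p)$, while $\rho_{\M_{\aug},\delta}(s') \to 0$ off $\mathrm{supp}(p)$ and $D(\M_{\aug};\delta) \to \delta$. Both $\expldiff(\M_{\aug};p,\delta)$ and the RHS of the bound then approach $\entropy[p] - \log(1-\delta)$, so taking $N$ large enough makes the gap strictly less than $\eps$.

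\paragraph{Main obstacle.} The delicate step will be choosing the behavior of the new skills at non-target states so that $\tilde\rho$ converges to $p$ itself, not a biased transform of it. A naive choice in which $z^{(s,i)}$ self-loops at every $s' \neq s$ fails: chains of the form ``no-op, \dots, no-op, one-step-solution'' inflate $q_{\M_{\aug},\delta}(s)$ to roughly $(1-\delta)p(s)/\bigl(\delta + (1-\delta)p(s)\bigr)$, and this state-dependent bias survives normalization by $D$ unless $p$ happens to be uniform on its support. Routing instead through $s_0$ kills these chains because $s_0$ is self-absorbing against $N$ skills, so its $q$ value vanishes. The hypothesis $\delta > \max_s p(s)$ is exactly what makes the absorbing-state construction self-consistent: it guarantees enough non-goal action budget at each $s \in \mathrm{supp}(p)$ to redirect the $N - n_s$ non-dedicated skills to $s_0$ while pinning the goal-going fraction at $p(s)$, and it keeps the limiting $\rho_{\M_{\aug},\delta}(s) \to \delta p(s)$ strictly below the physical ceiling $\delta$ that $\rho$ must satisfy.
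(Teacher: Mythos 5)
Your lower-bound argument is correct and is essentially the paper's own: rewrite $\expldiff$ in terms of $\rho_{\M_{\aug},\delta}$, normalize by $D(\M_{\aug};\delta)$, and invoke nonnegativity of the KL divergence. No issues there.

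The tightness construction, however, has a genuine gap. A skill is only a function from states to \emph{base-action sequences}, and $T_{\aug}(s',z)=T_{\base}(s',z(s'))$; a skill can therefore only move $s'$ to states already reachable from $s'$ under $T_{\base}$. Your construction needs (i) a state $s_0\notin\mathrm{supp}(p)\cup\{g\}$ to exist at all, and (ii) a base-action path from every other state to $s_0$ (avoiding $g$ along the way, per the footnote on $T_{\aug}$). Neither is implied by the hypotheses: take, e.g., $S=\{s_1,s_2,g\}$ with every base action from $s_1,s_2$ leading to $g$ and $p(s_1)=p(s_2)=1/2$ --- there is no candidate $s_0$, yet the theorem still asserts tightness for $\delta>1/2$. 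So ``route the non-dedicated skills to an absorbing state'' is not implementable in general, and the construction does not go through as stated.

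The obstacle you correctly diagnosed --- that self-looping filler skills create no-op chains that bias $q_{\M_{\aug},\delta}(s)$ to $(1-\delta)p(s)/(\delta+(1-\delta)p(s))$ --- does not force you to abandon self-loops (which, via the empty action sequence, are the one redirection that is \emph{always} realizable). It only shows that the dedicated fraction should not be $p(s)$. The paper keeps your rejected design but tunes the fraction: it dedicates $\lfloor Kf(s)\rfloor$ of the $K$ new skills to solving $s$ and lets the rest self-loop, with
\[
    f(s)=\frac{\delta\,p(s)}{\delta-(1-\delta)p(s)},
\]
chosen precisely so that the geometric sum over no-op chains yields $\rho_{\aug,\delta}(s)\to p(s)$ as $K\to\infty$. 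This is also where the hypothesis $\delta>\max_s p(s)$ is genuinely used --- it is exactly the condition making $f(s)\in(0,1)$ --- whereas in your construction that hypothesis plays no real role, which is itself a warning sign. Replacing your absorbing-state routing with self-loops at the corrected fraction $f(s)$ repairs the argument.
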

The fact that the lower bound grows with $\entropy[p]$ is intuitive:
\rev{when there are many states that we care about learning to solve ($\entropy[p]$ is large),
it is hard for the agent to gather the experience needed to learn to solve all these states ($\expldiff$ is large)}.
However, incorporating skills only changes the action space and cannot affect $\entropy[p]$.
Skills thus improve exploration by increasing the $\delta$-discounted solution density,
which is interpreted as the density of solutions to states within the space of all action sequences.
Action sequences of length $l$ equally divide a total density of $\delta(1 - \delta)^{l-1}$,
so that the combined density of all possible action sequences is 1.
If $\M_{\aug}$ is solution-separable, then $\sum_s \rho_{\aug,\delta}\rev{(s)} \leq 1$,
whereas if every action sequence solves some state, then $\sum_s \rho_{\aug,\delta}\rev{(s)} \geq 1$.
Skills improve exploration by increasing this density, similar to how \rev{skills reduce}
$A_{\aug}$-merged $p$-incompressibility \rev{by allowing more states to share solutions}.
More expressive skills are more apt at increasing \rev{solution} density. For example,
introducing macroactions in a solution-separable environment results in a solution-separable environment,
so the density remains at most 1.
If we introduce the logic of loops, then states whose solutions involve different repetitions of the same component
can be solved by the same action sequence containing a loop skill, hence increasing \rev{the} density.
In the extreme case where no restriction is placed on the kind of \rev{skills allowed},
we can introduce \rev{many skills, each of which automatically solves all solvable states}.
\rev{The resultant density is approximately $\delta|S_{\rev{\text{solvable}}}|$,
which is usually much larger than 1.}

As a corollary to \cref{thm:expldiff_lb}, increase in $p$-exploration difficulty due to macroactions
is lower-bounded by the $\delta$-discounted unmerged $p$-incompressibility (\cref{eq:incomp_discounted})
in solution-separable environments,
thus providing the $p$-exploration difficulty counterpart to \cref{cor:learndiff_ic}.
\begin{corollary}[Corollary to \cref{thm:expldiff_lb}] \label{cor:expldiff_ic}
    \rev{In the setup to \cref{thm:expldiff_lb}, suppose $\M_{\base}$ is solution-separable, $|A_{\base}| > 1$,
    and $A_{\aug}$ is a macroaction augmentation.}
    Then
    \begin{equation}
        \frac{\expldiff(\M_{\aug}; p, \delta)}{\expldiff(\M_{\base}; p, \delta)} \geq \incompress(\M_{\base}; p, \delta)\rev{.}
    \end{equation}
\end{corollary}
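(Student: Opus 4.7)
The plan is to sandwich the ratio $\expldiff(\M_{\aug}; p, \delta)/\expldiff(\M_{\base}; p, \delta)$ between quantities that match the numerator and denominator of $\incompress(\M_{\base}; p, \delta)$ in \cref{eq:incomp_discounted}. The numerator will come from applying \Cref{thm:expldiff_lb} to $\M_{\aug}$, and the denominator from a direct upper bound on $\expldiff(\M_{\base}; p, \delta)$ obtained by keeping only the shortest-solution term in the sum defining $q_{\base,\delta}(s)$.

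For the denominator, I would observe that $q_{\base,\delta}(s) \geq \left(\frac{1-\delta}{|A_{\base}|}\right)^{d_{\base}(s)}$ by retaining only the shortest-solution term in \cref{eq:q}. Taking $-\log$ and then the expectation under $p$ gives
\[
\expldiff(\M_{\base}; p, \delta) \leq \mathbb E_{s \sim p}[d_{\base}(s)] \log\left(\frac{|A_{\base}|}{1 - \delta}\right),
\]
which is precisely the denominator of $\incompress(\M_{\base}; p, \delta)$.

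For the numerator, I first need to verify that $\M_{\aug}$ inherits solution-separability from $\M_{\base}$. Because each macroaction in $A_{\aug} \setminus A_{\base}$ expands into a fixed, state-independent base action sequence, any sequence in $A_{\aug}^*$ has a canonical expansion in $A_{\base}^*$ whose unrolling on any initial state coincides with the unrolling of the original sequence in $\M_{\aug}$. Hence, a common $\M_{\aug}$-solution to two distinct states would expand to a common base-action solution in $\M_{\base}$, contradicting solution-separability of $\M_{\base}$. Granted that $\M_{\aug}$ is solution-separable, the remark following \Cref{thm:expldiff_lb} gives $D(\M_{\aug}; \delta) \leq 1$. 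Substituting into \cref{eq:expldiff_lb} yields
\[
\expldiff(\M_{\aug}; p, \delta) \geq \entropy[p] - \log\left(\frac{1 - \delta}{\delta}\right),
\]
which is the numerator of $\incompress(\M_{\base}; p, \delta)$.

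Dividing the two bounds gives the stated inequality whenever $\entropy[p] - \log((1-\delta)/\delta) \geq 0$. When this quantity is negative, $\incompress(\M_{\base}; p, \delta) < 0$, while $\expldiff$ is always non-negative, so the ratio inequality is trivially satisfied. The only substantive step I expect to require care is establishing that macroaction augmentation preserves solution-separability; the rest reduces to algebraic manipulation of the two bounds together with the inequality $D(\M_{\aug}; \delta) \leq 1$ that this preservation unlocks.
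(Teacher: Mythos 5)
Your proposal is correct and follows essentially the same route as the paper's proof: upper-bound $\expldiff(\M_{\base}; p, \delta)$ by keeping only the shortest-solution term in $q_{\base,\delta}(s)$, lower-bound $\expldiff(\M_{\aug}; p, \delta)$ via \cref{thm:expldiff_lb} together with $D(\M_{\aug}; \delta) \leq 1$ from solution-separability of the macroaction augmentation, and divide. Your explicit justification that macroaction augmentations preserve solution-separability, and your handling of the case where the numerator is negative, are small refinements the paper leaves implicit.
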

Compared to \cref{cor:learndiff_ic}, the factor $\frac{|A_{\aug}|\log|A_{\base}|}{|A_{\base}|\log|A_{\aug}|}$
penalizing large $A_{\aug}$ is absent, \rev{and the $\sup$ in
$\incompress(\M_{\base}; p) = \sup_{0 < \delta < 1} \incompress(\M_{\base}; p, \delta)$ has been removed.
The resultant weaker bound suggests} that skills are better
suited to improving exploration than learning from experience.
\rev{This is made more precise in \cref{thm:incomp_expl,cor:incomp_expl} below,
but before stating these results, we shall first} give an intuitive explanation for why this is the case.

In discussing the effects of skills on learning from existing experience,
there was a tradeoff between action space size and reducing solution lengths.
Intuitively, while skills allow reward information to propagate to states faster,
a large action space means a larger number of experiences
to iterate through to efficiently cover the space of all state-action pairs $(s, a)$.
Such a tradeoff is not so clear in the effects of skills on exploration.
To improve exploration, skills are chosen so that a uniformly random policy in the augmented action space
is more likely to reach the goal. If skills are expressive enough, this should always be possible,
unless the base action space is already close to optimal.
Of course, the most general skills trivially improve $p$-exploration difficulty
by simply mapping every \rev{solvable} state to the goal, which gives $\expldiff \approx 0$.
But there can be skills that achieve the maximum possible $A_{\aug}$-\rev{merged} $p$-incompressibility
(which appears in the lower bound for $p$-learning difficulty increase \rev{in} \cref{thm:learndiff})
but still decrease $p$-exploration difficulty.
This is \rev{made precise by} the following theorem.
\begin{theorem} \label{thm:incomp_expl}
    Let $\M_{\base} = (S, A_{\base}, T_{\base}, g)$ be a solution-separable \sdmdp\ with finite $|A_{\base}| > 1$
    as well as finite $|S|$.
    Let $p$ be a probability distribution over solvable states.
    For all $\delta > \max_s p(s)$ for which $p \not\equiv \rho_{\base,\delta}$,
    there exists an $A_{\aug}$-skill augmentation $\M_{\aug}$ of $\M_{\base}$ such that:
    \begin{itemize}
        \item There exist distinct shortest solutions in $A_{\aug}$ to all states in the support of $p$
        (namely, $\entropy[P_{\aug}]$ achieves its maximum possible value $\entropy[p]$
        and \rev{thus} $\incompress_{A_{\aug}}(\M_{\base}; p)$ achieves its maximum possible value $\incompress(\M_{\base}; p)$);
        \item $\expldiff(\M_{\aug}; p, \delta) < \expldiff(\M_{\base}; p, \delta)$.
    \end{itemize}
\end{theorem}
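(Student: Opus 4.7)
The plan is to augment $\M_{\base}$ with many copies of a ``universal solver'' skill that sends every solvable state to the goal in one action step, and show that this simultaneously delivers the distinct-shortest-solution property and strictly decreases $\expldiff$. For each solvable $s \in S$, fix a shortest base solution $\sigma^\dagger_s$, and for an integer $k \geq |\text{supp}(p)|$ (chosen later) introduce $k$ skills $z_1, \ldots, z_k$, each with $z_j(s) = \sigma^\dagger_s$ when $s$ is solvable and $z_j(s) = \langle\rangle$ otherwise. Then $T_{\aug}(s, z_j) = g$ for every solvable $s$, so $(z_j)$ is a length-1 solution to every solvable state in $\M_{\aug}$, contributing $(1-\delta)/(|A_{\base}|+k)$ to $q_{\aug,\delta}(s)$ per copy.

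To get distinct shortest solutions, enumerate $\text{supp}(p) = \{s_1, \ldots, s_K\}$ and designate $(z_i)$ as the canonical shortest solution to $s_i$. These are distinct across $i$, so the pushforward $P_{\aug}$ of $p$ has $\entropy[P_{\aug}] = \entropy[p]$; the numerator in $\incompress_{A_{\aug}}(\M_{\base}; p)$ then matches that of $\incompress(\M_{\base}; p)$, the denominators already agree, and hence $\incompress_{A_{\aug}}(\M_{\base}; p) = \incompress(\M_{\base}; p)$.

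To get the strict decrease, I will sandwich $\expldiff$ of $\M_{\aug}$ and $\M_{\base}$ on opposite sides of $-\log(1-\delta)$. On the base side, use the identity $\expldiff(\M_{\base}; p, \delta) = -\log(1-\delta) + \log\delta - \mathbb{E}_{s\sim p}[\log \rho_{\base,\delta}(s)]$, then apply Jensen to $\log$, combined with solution-separability (which gives $\sum_s \rho_{\base,\delta}(s) \leq 1$ because each base action sequence solves at most one state):
\[
    \mathbb{E}_{s\sim p}[\log \rho_{\base,\delta}(s)] \leq \log \mathbb{E}_{s\sim p}[\rho_{\base,\delta}(s)] \leq \log \max_{s} p(s) < \log \delta,
\]
where the last strict inequality uses $\delta > \max_s p(s)$. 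This yields $\expldiff(\M_{\base}; p, \delta) > -\log(1-\delta)$ with a gap independent of $k$. On the augmented side, using only the $k$ length-1 solutions $(z_1), \ldots, (z_k)$, $q_{\aug,\delta}(s) \geq k(1-\delta)/(|A_{\base}|+k)$ for every $s \in \text{supp}(p)$, so $\expldiff(\M_{\aug}; p, \delta) \leq \log(1 + |A_{\base}|/k) - \log(1-\delta) \to -\log(1-\delta)$ as $k \to \infty$. Choosing $k$ large enough closes the gap.

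The main obstacle is the tension between adding skills (which enlarges $|A_{\aug}|$ and dilutes base solution contributions inside $q_{\aug,\delta}$) and still gaining enough density to beat the base; this construction sidesteps the tension by making length-1 skill contributions alone dominate as $k$ grows. Two minor technicalities remain: verifying that $z_j$'s no-op behavior at unsolvable states does not affect the length-1 lower bound on $q_{\aug,\delta}$, which holds vacuously because length-1 solutions traverse no intermediate state; and checking that the $\sup_\eps$ in the definitions of $\incompress_{A_{\aug}}$ and $\incompress$ line up once $\entropy[P_{\aug}] = \entropy[p]$, which is immediate since the two ratios are then identical for every $\eps$. Interestingly, this route does not directly use the hypothesis $p \not\equiv \rho_{\base,\delta}$, suggesting that hypothesis may be an artifact of a more economical construction in the original proof.
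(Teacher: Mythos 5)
Your proof is correct, but it takes a genuinely different route from the paper's. The paper reuses the tightness construction from the proof of \cref{thm:expldiff_lb}: for each $s$ in the support of $p$ it adds $\lfloor Kf(s)\rfloor$ skills sending $s$ to $g$ and the rest sending $s$ to itself, tuned so that $\rho_{\aug,\delta} \to p$ and $D(\M_{\aug};\delta)\to 1$; the augmented difficulty then converges down to the information-theoretic floor $\entropy[p]-\log\frac{1-\delta}{\delta}$, and strictness comes from $\DKL{p'}{\rho_{\base,\delta}'}>0$, which is exactly where the hypothesis $p\not\equiv\rho_{\base,\delta}$ is consumed. You instead flood the action space with $k$ identical universal-solver skills, driving $q_{\aug,\delta}(s)$ up to $k(1-\delta)/(|A_{\base}|+k)$ uniformly, so $\expldiff(\M_{\aug};p,\delta)\to -\log(1-\delta)$, and you separate this from the base via Jensen plus solution-separability: $\mathbb E_{s\sim p}[\log\rho_{\base,\delta}(s)]\le\log\max_s p(s)<\log\delta$, giving a $k$-independent gap $\log\delta-\log\max_s p(s)>0$. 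Both constructions handle the first bullet the same way (enough goal-reaching skills to assign distinct length-1 canonical solutions, and the denominators of the two incompressibility measures involve only $d_{\base}$, so equality of the numerators suffices). Your observation that the hypothesis $p\not\equiv\rho_{\base,\delta}$ is not needed is accurate: your construction pushes the solution density $D(\M_{\aug};\delta)$ far above $1$ (roughly $\delta|S_{\solv}|$), so it undercuts the floor $\entropy[p]-\log\frac{1-\delta}{\delta}$ by $\entropy[p]+\log\delta>0$ (positive precisely because $\delta>\max_s p(s)$), whereas the paper's construction stays at that floor and therefore needs the base to sit strictly above it. The trade-off is conceptual rather than logical: the paper's per-state construction is the one reused in \cref{cor:incomp_expl} and illustrates the ``tailor $\rho_{\aug,\delta}$ to $p$'' mechanism behind \cref{thm:expldiff_lb}, while yours is the ``trivial'' density blow-up the paper itself mentions in the surrounding discussion --- simpler, and yielding a marginally stronger statement.
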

\begin{corollary}[Corollary to \cref{thm:incomp_expl}] \label{cor:incomp_expl}
    Assume the setup to \cref{thm:incomp_expl}. If
    \[
        1 - \incompress(\M_{\base}; p)
        \leq \frac{1}{|A_{\base}| + 1}\left(1 - \frac{1}{\ln|A_{\base}|}\right),
    \]
    then there exists a skill augmentation $\M_{\aug}$ of $\M_{\base}$ such that
    \(
        \learndiff(\M_{\aug}; p) > \learndiff(\M_{\base}; p)
    \)
    but
    \(
        \expldiff(\M_{\aug}; p, \delta) < \expldiff(\M_{\base}; p, \delta).
    \)
\end{corollary}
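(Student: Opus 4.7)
The plan is to combine \Cref{thm:incomp_expl} with \Cref{thm:learndiff} to obtain a single skill augmentation that witnesses both inequalities. The setup of \Cref{thm:incomp_expl} is inherited by assumption, so I would first apply that theorem to produce an $A_{\aug}$-skill augmentation $\M_{\aug}$ of $\M_{\base}$ such that the shortest solutions in $\M_{\aug}$ to distinct states in the support of $p$ remain distinct --- so that $\entropy[P_{\aug}] = \entropy[p]$ --- and such that $\expldiff(\M_{\aug}; p, \delta) < \expldiff(\M_{\base}; p, \delta)$. This immediately secures the exploration half of the conclusion.

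For the learning-difficulty half, the key observation is that $\entropy[P_{\aug}] = \entropy[p]$ forces $\incompress_{A_{\aug}}(\M_{\base}; p) = \incompress(\M_{\base}; p)$, because the two $\sup$ expressions in \cref{eq:incomp_A} and \cref{eq:incomp_unmerged} differ only in their numerators' entropy term and now agree. Substituting this equality into the bound of \Cref{thm:learndiff} yields
\[
    \frac{\learndiff(\M_{\aug}; p)}{\learndiff(\M_{\base}; p)} \geq \frac{|A_{\aug}|\log|A_{\base}|}{|A_{\base}|\log|A_{\aug}|}\,\incompress(\M_{\base}; p),
\]
which is exactly the inequality driving \Cref{cor:learndiff}. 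It therefore suffices to show the right-hand side strictly exceeds $1$ under the assumed upper bound on $1 - \incompress(\M_{\base}; p)$, regardless of the value of $|A_{\aug}|$ produced by the construction.

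Writing $n = |A_{\base}|$ and $m = |A_{\aug}|$, that inequality rearranges to $\frac{n\log m}{m\log n} < 1 - \frac{1}{n+1}\bigl(1 - \frac{1}{\ln n}\bigr)$. Since $\log x / x$ is strictly decreasing on $[e, \infty)$ and the corollary's hypothesis is only nonvacuous for $n \geq 3$, the worst case over $m > n$ occurs at $m = n + 1$. Substituting $m = n + 1$ and clearing denominators reduces the required inequality, via a direct algebraic manipulation, to the single elementary fact $n \ln(1 + 1/n) < 1$, which is immediate from $\ln(1 + x) < x$ for $x > 0$. Combining this with the exploration improvement already established from \Cref{thm:incomp_expl} completes the proof.

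The main obstacle I anticipate is that \Cref{thm:incomp_expl} gives no control over $|A_{\aug}|$, so the learning-difficulty step must hold uniformly over every admissible $m > n$. This is exactly where the monotonicity of $\log x / x$ enters, collapsing an infinite family of inequalities to the single tightest case at $m = n + 1$; without that reduction, the hypothesis on $1 - \incompress(\M_{\base}; p)$ would be too weak to handle large $m$, and the combined conclusion would fail.
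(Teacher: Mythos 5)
Your proposal is correct and follows essentially the same route as the paper: invoke \cref{thm:incomp_expl} to obtain an augmentation that both decreases $\expldiff$ and forces $\entropy[P_{\aug}] = \entropy[p]$ (hence $\incompress_{A_{\aug}}(\M_{\base};p) = \incompress(\M_{\base};p)$), then feed this into \cref{thm:learndiff} and repeat the arithmetic of \cref{cor:learndiff}, using the monotonicity of $\ln x / x$ on $[e,\infty)$ to reduce to the worst case $|A_{\aug}| = |A_{\base}|+1$, where the bound collapses to $n\ln(1+1/n) < 1$. This is exactly the paper's argument, merely with the final algebra packaged slightly differently.
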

\Cref{cor:incomp_expl} shows that there are environments where 
\rev{skills can} benefit exploration but harm learning from experience.
This again suggests that skills are more apt at improving exploration than learning.

\begin{figure*}[t]
\vskip -0.05in
\begin{center}
\centerline{\includegraphics[width=\textwidth, trim={0.25in 0.05in 0.75in 0.4in}, clip]{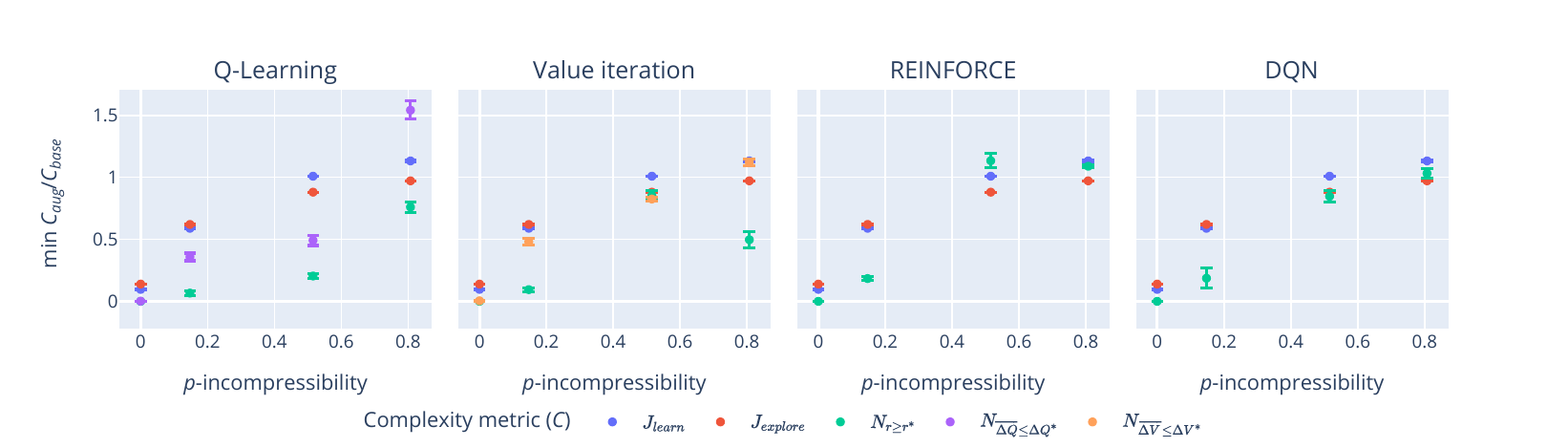}}
\vskip -0.1in
\caption{
\rev{For each of the 4 environments studied, we plot the point $(x, y)$ where
$x$ is the unmerged $p$-incompressibility of the base environment and
$y$ is the best complexity improvement ratio $\min C_{\aug} / C_{\base}$ over
the 31 macroaction augmentations of the base environment.}
Different colors represent different measures $C$ of complexity,
\rev{and different panels correspond to sample complexities $N$ of different RL algorithms.}
The plots corresponding to $p$-learning difficulty ($\learndiff$)
and $p$-exploration difficulty ($\expldiff$) have been repeated across
panels for clearer comparison with the plots corresponding to the sample complexities ($N$) of the RL algorithms.
}
\label{fig:rl_improv-incompress}
\end{center}
\vskip -0.3in
\end{figure*}

As a final discussion on the effect that skills have on exploration, we answer the question:
\rev{are there environments where unexpressive skills like macroactions} always harm exploration?
Unlike \cref{cor:learndiff_ic}, there is no penalty factor in \rev{the lower bound given in} \cref{cor:expldiff_ic}.
\rev{A}s a result\rev{, there is no environment where the lower bound is above 1,
which would have implied that all macroaction augmentations increase $p$-exploration difficulty}.
\rev{Nevertheless}, the answer \rev{to the question} is still affirmative.
The following two theorems \rev{construct} environments
where incorporating macroactions always increases $p$-exploration difficulty,
no matter how many there are or what they are.
\begin{theorem} \label{thm:expldiff-delta}
    Let $\M_{\base} = (S, A_{\base}, T_{\base}, g)$ be a solution-separable \sdmdp\ with a finite action space
    such that any state that has a length-1 solution only has length-1 solutions.
    Let $p$ be a probability distribution over solvable states.
    Suppose that $\delta > 0$ and
    \[
        \DKL{\rev{p}}{\rev{\rho_{\base,\delta}}} \leq \frac{\delta^2\rev{\log e}}{8(|A_{\base}| + 1)^2}\rev{.}\footnote{
        \rev{Technically, $\sum_{s} \rho_{\base,\delta}(s) \leq 1$ but may not equal 1,
        so the KL-divergence is really between $p'$ and $\rho_{\base,\delta}'$,
        defined such that $p' \equiv p$ and $\rho_{\base,\delta}' \equiv \rho_{\base,\delta}$
        on all solvable states and a dummy state $s_d$ is introduced
        that bears the remaining probability
        (i.e., $p'(s_d) = 0, \rho_{\base,\delta}'(s_d) = 1 - \sum_{s \neq s_d} \rho_{\base,\delta}(s)$).}
        }
    \]
    Then
    \(
        \expldiff(\M_{\aug}; p, \delta) > \expldiff(\M_{\base}; p, \delta)
    \)
    for any strict macroaction augmentation $\M_{\aug}$ of $\M_{\base}$.
\end{theorem}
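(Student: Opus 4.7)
The plan is to rewrite the goal as a comparison of KL divergences and then exploit the structural asymmetry created by the hypothesis. Using $\rho_{\M,\delta}=\tfrac{\delta}{1-\delta}q_{\M,\delta}$ together with the footnote's dummy-state extension that turns $p$ and $\rho_{\cdot,\delta}$ into proper probability distributions $p'$ and $\rho'_{\cdot,\delta}$ on $S\cup\{s_d\}$,
\[
\expldiff(\M_{\aug};p,\delta)-\expldiff(\M_{\base};p,\delta)=\sum_{s}p(s)\log\frac{\rho_{\base,\delta}(s)}{\rho_{\aug,\delta}(s)}=\DKL{p'}{\rho'_{\aug,\delta}}-\DKL{p'}{\rho'_{\base,\delta}},
\]
so it suffices to prove $\DKL{p'}{\rho'_{\aug,\delta}}>\DKL{p'}{\rho'_{\base,\delta}}$.

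First I would partition the solvable states into $A$, those with a length-1 base solution (hence with only length-1 solutions by hypothesis), and $B$, the rest. For $s\in A$, any $\sigma\in\Sol_{\aug}(s)$ unrolls to an element of $\Sol_{\base}(s)$ of length $\geq|\sigma|$, and since all base solutions of $s$ have length $1$ this forces $|\sigma|=1$ and $\sigma\in A_{\base}$. Hence $\Sol_{\aug}(s)=\Sol_{\base}(s)$ and
\[
\rho_{\aug,\delta}(s)=\tfrac{|A_{\base}|}{|A_{\aug}|}\,\rho_{\base,\delta}(s),\quad s\in A.
\]
This is the driver of the positive gap: by strictness $|A_{\aug}|\geq|A_{\base}|+1$ and $\log(1+1/n)\geq\log e/(n+1)$, the $A$-portion of $\Delta:=\DKL{p'}{\rho'_{\aug,\delta}}-\DKL{p'}{\rho'_{\base,\delta}}$ contributes at least $p(A)\cdot\log e/(|A_{\base}|+1)$.

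Next I would apply Pinsker to the hypothesis to get $\mathrm{TV}(p',\rho'_{\base,\delta})\leq\delta/(4(|A_{\base}|+1))$. Because $p'(s_d)=0$ and $\rho'_{\base,\delta}(s_d)=1-D(\M_{\base};\delta)$, this forces $D(\M_{\base};\delta)\geq 1-\delta/(4(|A_{\base}|+1))$, so $\rho_{\base,\delta}$ is nearly a probability distribution and $p(A)$ is within $\delta/(2(|A_{\base}|+1))$ of $\rho_{\base,\delta}(A)$. Combined with $D(\M_{\aug};\delta)\leq 1$ (inherited solution-separability) and the type-$A$ identity, the budget
\[
\rho_{\aug,\delta}(B)-\rho_{\base,\delta}(B)\leq\rho_{\base,\delta}(A)\cdot\tfrac{|A_{\aug}|-|A_{\base}|}{|A_{\aug}|}+\bigl(1-D(\M_{\base};\delta)\bigr)
\]
caps how much macroactions can inflate solution density on $B$: any gain on $B$ is ``paid for'' by loss on $A$ plus the small Pinsker slack.

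Assembling everything, I would decompose $\Delta$ over $A$, $B$, and $s_d$, plug in the exact identity on $A$, and bound the negative $B$-contribution via the budget above together with a second-order expansion of the KL divergence around $\rho'_{\base,\delta}$. The main obstacle I anticipate is this last bound: the crude inequality $\log x\leq(x-1)\log e$ couples to $\max_{s\in B}\rho_{\aug,\delta}(s)/\rho_{\base,\delta}(s)$, which can be as large as $\Theta(|A_{\base}|)$ when macroactions shortcut long solutions, so one must use either a Donsker--Varadhan-type variational argument or a careful $\chi^2$/second-order expansion to absorb the non-linear fluctuation into $\DKL{p'}{\rho'_{\base,\delta}}$. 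Invoking Pinsker a second time on $\rho'_{\base,\delta}$ versus $\rho'_{\aug,\delta}$, together with $(|A_{\aug}|-|A_{\base}|)/|A_{\aug}|\geq 1/(|A_{\base}|+1)$, should then match the exact threshold $\delta^2\log e/(8(|A_{\base}|+1)^2)$.
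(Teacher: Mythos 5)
Your reformulation of the gap as $\DKL{p'}{\rho'_{\aug,\delta}}-\DKL{p'}{\rho'_{\base,\delta}}$, your observation that $\rho_{\aug,\delta}(s)=\tfrac{|A_{\base}|}{|A_{\aug}|}\rho_{\base,\delta}(s)$ on the length-1 states, and your first application of Pinsker to the hypothesis are all correct and coincide with ingredients of the paper's proof. The gap is in the final assembly, and it is exactly the one you flag yourself: you commit to a state-by-state decomposition of the KL difference over $A$, $B$, and $s_d$, and then need to lower-bound the (possibly negative) $B$-contribution $\sum_{s\in B}p(s)\log\bigl(\rho_{\base,\delta}(s)/\rho_{\aug,\delta}(s)\bigr)$. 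Your budget inequality only controls the total mass shift $\rho_{\aug,\delta}(B)-\rho_{\base,\delta}(B)$, which does not control a KL-type sum: a macroaction can multiply $\rho(s)$ for a single $s\in B$ with tiny $\rho_{\base,\delta}(s)$ by a large factor at negligible mass cost, and the hypothesis only pins $p$ to $\rho_{\base,\delta}$ in aggregate, not pointwise. Neither a second-order expansion nor Donsker--Varadhan is worked out, and the unbounded likelihood ratio you identify is a real obstruction to both, so the proof as proposed does not close.

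The paper's proof avoids this decomposition entirely. It first augments the state space with an auxiliary state $s_1$ solved by every length-1 sequence that is not already a solution, so that the length-1 states carry total density exactly $\delta$ and (since macroactions have length $>1$) satisfy the exact scaling identity; restricting the $\ell_1$ norm to these states alone then gives $\sum_s|\bar\rho'_{\aug,\delta}(s)-\bar\rho'_{\base,\delta}(s)|\geq\delta\bigl(1-\tfrac{|A_{\base}|}{|A_{\aug}|}\bigr)\geq\tfrac{\delta}{|A_{\base}|+1}$ with no need to say anything about $B$. The triangle inequality plus your Pinsker bound on $\|p'-\rho'_{\base,\delta}\|_1$ yields $\|p'-\rho'_{\aug,\delta}\|_1>\tfrac{\delta}{2(|A_{\base}|+1)}$, and a \emph{second} application of Pinsker then lower-bounds $\DKL{p'}{\rho'_{\aug,\delta}}$ in absolute terms by $\tfrac{\delta^2\log e}{8(|A_{\base}|+1)^2}$, which is compared directly against the hypothesis's upper bound on $\DKL{p'}{\rho'_{\base,\delta}}$ rather than against a term-by-term difference. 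If you want to salvage your route, replace the $A/B$ decomposition with this global two-Pinsker argument (and either add the auxiliary state $s_1$ or carry out the extra step showing the hypothesis forces $\rho_{\base,\delta}(A)$ close to $\delta$, which your version would additionally require since your $A$-term is weighted by $p(A)$).
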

\begin{theorem} \label{thm:expldiff_simp}
    Let $\M_{\base} = (S, A_{\base}, T_{\base}, g)$ be a solution-separable \sdmdp\ with a finite action space
    such that: 1) every action sequence is the solution to some state;
    2) for every \rev{solvable} state, \rev{all} solutions to that state have the same length.
    Let $p$ be a probability distribution over solvable states such that
    $p(s)/p(s') = |\Sol_{\base}(s)|/|\Sol_{\base}(s')|$ for any $s, s'$ whose solutions have the same length.
    Then
    \begin{multline}
        \expldiff(\M_{\aug}; p, 0) - \expldiff(\M_{\base}; p, 0) \\
        \geq \frac{|A_{\base}|}{|A_{\aug}|}\left(1 - \frac{|A_{\base}|}{|A_{\aug}|}\right)
    \end{multline}
    for any strict $A_{\aug}$-macroaction augmentation $\M_{\aug}$ of $\M_{\base}$.
    
    A stronger version of this theorem (\cref{app:expldiff_strong})
    relaxes the conditions on $\M_{\base}$ and $p$ and the modified bound
    involves subtracting a corresponding KL-divergence term.
\end{theorem}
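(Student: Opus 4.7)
The plan is to use Jensen's inequality to convert the difference $\expldiff(\M_{\aug}; p, 0) - \expldiff(\M_{\base}; p, 0)$ into the negative log of an expectation over lengths, then bound that expectation using a renewal-theoretic argument that exploits the strictness of the macroaction augmentation.

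First, under the theorem's conditions, condition 2 gives $q_{\base,0}(s) = |\Sol_{\base}(s)|\,|A_{\base}|^{-d_{\base}(s)}$, and the special form of $p$ yields $p(s)/q_{\base,0}(s) = p_l$ for every $s$ with $d_{\base}(s) = l$, where $p_l := \sum_{d_{\base}(s)=l} p(s)$. Applying Jensen's inequality (concavity of $\log$) and then using condition 1 together with solution-separability (which ensure that every augmented action sequence $\sigma_{\aug}$ is a solution to a unique state whose base-distance equals $|\phi(\sigma_{\aug})|$, with $\phi$ the unrolling map), I would derive
\[
\expldiff(\M_{\aug}; p, 0) - \expldiff(\M_{\base}; p, 0) \;\geq\; -\log \sum_{l \geq 1} p_l g_l,
\]
where $g_l := [x^l](1 - f(x))^{-1}$ for $f(x) := \sum_{a \in A_{\aug}} x^{|a|_{\base}}/|A_{\aug}|$. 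The quantity $g_l$ is the renewal density at $l$ of the walk whose step distribution is the base-length of a uniformly random augmented action.

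I would then establish the key bound $g_l \leq 1 - \mu(1-\mu)$ for all $l \geq 1$, where $\mu := |A_{\base}|/|A_{\aug}|$. The cases $l = 1, 2$ are direct: $g_1 = \mu$, and $g_2 = \mu^2 + \lambda_2 \leq \mu^2 + (1-\mu) = 1 - \mu(1-\mu)$, using that $\lambda_2 \leq \sum_{k \geq 2} \lambda_k = 1 - \mu$ for strict augmentation (with $\lambda_k := |\{a \in A_{\aug}: |a|_{\base} = k\}|/|A_{\aug}|$). For general $l$, I would use the convolution identity $1 - g_l = \sum_{m=0}^{l-1} g_m \bar\lambda_{l-m}$ (with $\bar\lambda_j := \sum_{k > j} \lambda_k$), obtained by subtracting $1/(1-f(x))$ from $1/(1-x)$, coupled with a careful strong induction. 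Combining the bound with the elementary inequality $-\log(1-x) \geq x$ then completes the proof: $-\log\sum_l p_l g_l \geq -\log(1 - \mu(1-\mu)) \geq \mu(1-\mu)$.

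The hard part is the bound on $g_l$ for general $l$. Naive strong induction using only $g_{l-k} \leq 1 - \mu(1-\mu)$ for $k < l$ fails whenever $\lambda_l > 0$: the boundary contribution $\lambda_l g_0 = \lambda_l$, with $g_0 = 1 > 1 - \mu(1-\mu)$, cannot be absorbed into the target bound. Resolving this requires retaining the exact values of $g_k$ for small $k$ (in particular $g_1 = \mu$), whose slack against the bound compensates for the boundary term, likely coupled with a case split on whether $l$ exceeds the maximum macroaction length $\max_z |z|_{\base}$. The stronger appendix version of the theorem presumably arises by not invoking the special form of $p$ from the start, instead retaining a KL-divergence term that measures the deviation of $p$ from the canonical form and that vanishes exactly under the hypotheses stated here.
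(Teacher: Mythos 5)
Your reduction is correct and is essentially the paper's: after Jensen and the normalization $p(s) = p_l\,q_{\base,0}(s)$ forced by the hypotheses, the problem collapses to bounding the renewal density $g_l = \sum_{\sigma \in (A_{\aug})^+ : |\sigma|_{\base} = l} |A_{\aug}|^{-|\sigma|}$, which is exactly the quantity $f_l(x_2,\dots,x_K)$ in the paper's proof of the stronger \cref{thm:expldiff}, satisfying the same recursion $g_l = \sum_{k=1}^K \lambda_k g_{l-k}$, $g_0 = 1$. Your target bound $g_l \le 1 - \mu(1-\mu)$ with $\mu = |A_{\base}|/|A_{\aug}|$ is precisely the paper's \cref{eq:recur-ineq}, and the closing steps ($-\log(1-x) \ge x$) match. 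Your guess about how the appendix version generalizes (keeping a KL term instead of assuming the canonical form of $p$) is also accurate, though the paper additionally has to split states by solution length via ``solution-length separation'' since condition 2 is dropped.

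The genuine gap is that the central inequality $g_l \le 1 - \mu + \mu^2$ for all $l \ge 1$ is not proved; it is the entire technical content of the theorem, and your treatment of it ends with ``likely coupled with a case split,'' which is a conjecture about a strategy, not an argument. Your diagnosis of why naive strong induction on $l$ fails (the boundary term $\lambda_l g_0 = \lambda_l$ with $g_0 = 1$ exceeding the target) is exactly right, but the fix you sketch --- retaining $g_1 = \mu$ and hoping its slack absorbs the boundary term --- does not obviously close: e.g.\ with $\lambda_1 = \mu$, $\lambda_K = 1-\mu$ and all other $\lambda_k = 0$, one has $g_{K-1} = \mu^{K-1} \ll \mu$, so there is no uniform lower bound on intermediate $g$'s to trade against $\lambda_l$, and your convolution identity $1 - g_l = \sum_{m=0}^{l-1} g_m \bar\lambda_{l-m}$ alone does not yield the bound for the same reason. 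The paper instead inducts on the \emph{maximum macroaction length} $K$: for $1 \le l \le K-1$ it rescales ($x_k' = x_k/\bar x_K$ with $\bar x_K = 1 - x_K$), showing $f_l = \bar x_K^{\,l} f_l'$ and reducing to the $K-1$ case via monotonicity of $\bar x_K + x_1^2/\bar x_K$; for $l = K$ it observes that $f_K$, after eliminating $x_K$, is \emph{linear} in $x_{K-1}$, so it suffices to check the two endpoints $x_{K-1} = 0$ and $x_K = 0$, each handled by the inductive hypothesis plus a short algebraic computation; and for $l > K$ the recursion is a convex combination of $f_{l-1},\dots,f_{l-K}$ all with indices $\ge 1$, so the bound propagates automatically. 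Without this (or some equally careful substitute), your proof is incomplete at its most important step.
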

Stated in words, \cref{thm:expldiff-delta} says that macroactions harm exploration
when most action sequences are solutions to some state
and that a state's \rev{assigned} importance $p(s)$ \rev{is} close to the probability that a uniformly random
\rev{action} sequence solves it. \Cref{thm:expldiff_simp} suggests that it suffices
for $p(s)$ to be roughly proportional to this probability across 
states whose solutions have the same length.
These results make more precise our intuition that it is more difficult to use
skills to improve exploration in environments where solutions to states
look uniformly randomly distributed.

\section{Experiments} \label{sec:expt_incompress}

\cref{cor:learndiff_ic,cor:expldiff_ic} suggest that solution-separable \sdmdp s with lower
\rev{unmerged} $p$-incompressibility can benefit more from macroactions.
We test this prediction on the four environments studied in \cref{sec:expt_diff},
which include both solution-separable (\texttt{RubiksCube222})
and non-solution-separable (\texttt{CliffWalking}, \texttt{CompILE2}, \texttt{8Puzzle}) \sdmdp s.
For different complexity measures $C$ ($p$-learning difficulty, $p$-exploration difficulty,
and sample complexity $N$ of four RL algorithms),
\Cref{fig:rl_improv-incompress} shows the best complexity improvement ratio \rev{$\min C_{\aug} / C_{\base}$}
\rev{across} the 31 (strict) macroaction augmentations of each base environment
against the unmerged $p$-incompressibility \rev{of the base environment}.
We \rev{observe} a positive correlation \rev{regardless of the choice of $C$ and RL algorithm},
thus corroborat\rev{ing} our theoretical predictions:
macroactions are more helpful in environments with lower \rev{unmerged} $p$-incompressibility.

{\revblock
While the definition of unmerged $p$-incompressibility is motivated in the context of macroactions
(\cref{cor:learndiff_ic,cor:expldiff_ic}),
experiments with general stochastic options discovered by LOVE \citep{jiang2022love}
show that it successfully captures the difficulty of applying HRL
with general options in an environment.
\Cref{tab:incompress_love} shows the unmerged $p$-incompressibility values of
our four environments, along with the sample complexity improvement ratio $N_{\aug} / N_{\base}$
\rev{from} optionally applying HRL with options discovered by LOVE.
The improvement from HRL decreases as the \rev{unmerged} $p$-incompressibility increases.


\section{$p$-Incompressibility for Skill Learning}
\label{sec:skill_learning}

\cref{app:skill_learning} demonstrates two ways to use our incompressibility measures
to derive objectives for skill learning.
We show that, under mild approximations, these two objectives are equivalent to two minimum description length (MDL)
objectives previously used in the skill learning literature.
In particular, finding the $A_{\aug}$ that minimizes $A_{\aug}$-merged $p$-incompressibility
corresponds to the objective used by LOVE \citep{jiang2022love},
and finding the skills such that the resultant skill-augmented environment has the highest unmerged $p$-incompressibility
corresponds to the objective used by LEMMA \citep{li2022lemma}.
}

\begin{table}[t]
\revblock
\caption{Unmerged $p$-incompressibility $\incompress(\M_{\base}; p)$
vs.\ the improvement ratio $N_{\aug} / N_{\base}$ of sample complexity $N_{r \geq r^*}$
from applying HRL with LOVE options.
Results are averaged over 5 seeds.
Because HRL can fail to learn an environment on some seeds,
we set the improvement ratio to 1 if HRL does not improve the sample complexity.
}
\label{tab:incompress_love}
\vskip 0.15in
\begin{center}
\begin{small}
\begin{tabular}{rll}
\toprule
Environment & $N_{\aug} / N_{\base}$ & $\incompress(\M_{\base}; p)$ \\
\toprule
\texttt{CliffWalking} & 0.000007 $\pm$ 0.000007 & 0.0000 \\
\texttt{CompILE2} & 0.00023 $\pm$ 0.00011 & 0.1475 \\
\texttt{8Puzzle} & 0.64 $\pm$ 0.19 & 0.5157 \\
\texttt{RubiksCube222} & 0.73 $\pm$ 0.17 & 0.8072 \\
\bottomrule
\end{tabular}
\end{small}
\end{center}
\vskip -0.1in
\end{table}

\section{Conclusion}

We introduce the first theoretical analysis of the utility of \rev{RL skills},
focusing on deterministic sparse-reward MDPs.
With both theoretical motivation and empirical verification,
we introduce metrics that quantify \rev{two aspects of RL complexity: exploration and learning from experience}.
We show both theoretically and experimentally that these metrics can be improved more
in environments where solutions to states are more compressible.
Further theoretical results suggest that skills benefit exploration more than learning from experience,
and that less expressive skills are less beneficial to improving RL sample efficiency.
Our work \rev{is a first step towards characterizing} the properties of an environment
that make skills helpful for RL,
and we expect future theoretical work to generalize beyond deterministic sparse-reward MDPs
\rev{with finite action spaces}.

{\revblock
\section*{Acknowledgements}

We thank anonymous referees for useful suggestions and discussions,
as well as instructors of the MIT Advanced Undergraduate Research Opportunities Program (SuperUROP)
for suggestions on presentation.

This work was funded by U.S.\ National Science Foundation (NSF) awards \#1918771 and \#1918839.
In addition, ZL was supported by the MIT Advanced Undergraduate Research Opportunities Program (SuperUROP)
and GP was supported by the Stanford Interdisciplinary Graduate Fellowship (SIGF).
}

\section*{Impact Statement}

This paper presents work whose goal is to advance the field of Machine Learning. There are many potential societal consequences of our work, none which we feel must be specifically highlighted here.

\bibliography{refs}
\bibliographystyle{icml2024}

\newpage
\appendix
\onecolumn

\section{Survey on Existing RL Difficulty Metrics} \label{app:rldiff}

Here, we provide a brief survey on existing RL difficulty metrics and
explain why they are inadequate for our purposes.
See \citet{conserva2022hardness} for a more detailed survey and benchmark.
We will be using the notation $\M = (S, A, P, R)$ for an MDP with
state space $S$, action space $A$, transition kernel $P$, and reward kernel $R$.
\begin{itemize}
    \item 
    The \textit{environmental value norm of the optimal policy} \citep{maillard2014valuenorm}
    is given by
    \begin{equation}
        \sup_{(s, a) \in S \times A} \sqrt{\var_{s' \sim P(s, a)} V_\gamma^*(s')},
    \end{equation}
    where $P(s, a)$ is the transition kernel of the MDP and $V_\gamma^*$
    is the value function of the optimal policy with discount factor $\gamma$.
    The variation in the values of next states quantifies the difficulty
    in obtaining accurate sample estimates of action values.
    However, in deterministic MDPs, which are our focus,
    the environmental value norm of the optimal policy is always zero and
    is therefore not applicable.
    \item 
    The \textit{distribution mismatch coefficient} \citep{kakade2002mismatchcoeff} is given by
    \begin{equation}
        \sup_\pi \sum_{s \in S} \frac{\mu_s^*}{\mu_s^\pi},
    \end{equation}
    where $\mu_s^\pi$ is the stationary distribution of the Markov chain
    induced by policy $\pi$ and $\mu_s^*$ is the stationary distribution of the
    Markov chain induced by the optimal policy.
    It measures how much the stationary distribution of states visited by the agent
    can differ from the optimal distribution.
    It is defined only for ergodic MDPs
    (otherwise the stationary distribution may not be uniquely defined)
    in the continuous setting,
    whereas we focus on deterministic MDPs
    (which are not ergodic when $|S| > 1$) in the episodic setting.
    \item
    The \textit{sum of reciprocals of suboptimality gaps} \citep{simchowitz2019suboptimalitygap}
    is given by
    \begin{equation}
        \sum_{(s, a) \in S \times A: \Delta(s, a) \neq 0} \frac{1}{\Delta(s, a)}, \quad
        \Delta(s, a) = V^*(s) - Q^*(s, a),
    \end{equation}
    where $V^*(s)$ and $Q^*(s, a)$ are the state and action value functions
    of the optimal policy.
    Larger $\Delta(s, a)$ allows the agent to more easily distinguish
    suboptimal actions from the optimal action and can thus reduce average total regret
    in the long run. However, as \citet{conserva2022hardness} points out,
    smaller $\Delta(s, a)$ makes it easier to find a near-optimal policy,
    which contributes to decreasing the sample complexity.
    \item
    The \textit{diameter} \citep{auer2008diameter} is defined to be
    \begin{equation}
        \sup_{s_1 \neq s_2} \inf_\pi T^{\pi}_{s_1 \to s_2},
    \end{equation}
    where $T^{\pi}_{s_1 \to s_2}$ denotes the expected time to reach $s_2$
    starting in $s_1$ following policy $\pi$.
    While this is defined for the continuous setting, a natural definition
    for the diameter of a \sdmdp\ $\M$ in the episodic setting would be
    \begin{equation}
        \sup_{s \neq g:\ \Sol_\M(s) \neq \emptyset} d_\M(s),
    \end{equation}
    where $d_\M(s)$ denotes the length of a shortest solution to $s$.
    However, taking the supremum is overly pessimistic, and in many cases,
    there may be states that are far from the goal but that we do not care about solving.
    Our $p$-learning difficulty takes this into account by using a weighted average of $d_\M(s)$,
    multiplied by $|A|$ to take into account the additional sample complexity
    due to a large action space.
\end{itemize}

\section{Environments} \label{app:envs}

Experiments were conducted on 4 base environments of varying complexity:
\begin{itemize}
    \item \texttt{CliffWalking} \citep{sutton2018reinforcement}, a toy grid world environment
    of size $4 \times 12$ where the agent always begins in the bottom left corner
    and has to travel to the bottom right corner.
    The available actions are moving one step in each of the 4 cardinal directions.
    The agent returns to its original position
    whenever it touches a square in the bottom row other than the leftmost and rightmost squares.
    \item \texttt{CompILE2} is one of the CompILE grid world environments
    \citep{kipf2019compile}. The agent navigates in an $10 \times 10$ grid world
    with walls both lining the edges and within the grid. The world also has
    several objects of different kinds, possibly with several of each kind.
    The agent's goal is to pick up several specified (kinds of) objects in order.
    In \texttt{CompILE2}, the agent has to pick up 2 objects.
    The available actions are moving one step in each of the 4 cardinal directions
    in addition to attempting to pick up the object in the current cell.
    The positions and types of the objects are fixed
    but the agent's position is randomized at every reset, following \citet{jiang2022love}.
    We did not choose 3 or more objects
    for the agent to pick up because we found that the agent could not find the positive reward
    signal without suitable skills in these cases,
    consistent with previous findings on the same environment \citep{kipf2019compile,jiang2022love}.
    Since whether the goal is reached depends on the sequence of objects
    the agent has picked up, the state includes both the grid and the sequence
    of objects that the agent has picked up thus far.
    Since \citet{kipf2019compile} did not publish the source code for the environment,
    we use the implementation by \citet{jiang2022love}.

    Because there can be several of the same kind of object on the grid,
    there are different sequences of objects the agent can pick up
    that amount to the same sequence of kinds of objects.
    There are thus multiple goal states, which are merged into one
    to comply with the definition of a \sdmdp.
    
    \item \texttt{8Puzzle} is the 8-puzzle, the $3\times3$ version of the more well-known 15-puzzle.
    There are 8 tiles numbered 1 to 8 on a $3\times3$ board so that there is one tile missing.
    The available actions are moving the position of the missing tile in each of the four cardinal directions.
    The solved state has the numbers 1 to 8 in order from left-to-right, top-to-bottom.
    The puzzle is scrambled from the solved state by applying a random legal action $K$
    times where $K$ is uniform between 1 and 31.
    Here, 31 is the maximum distance from any state to the goal state.
    The puzzle is re-scrambled if the scramble solves the cube.
    \item \texttt{RubiksCube222} is the 2x2 Rubik's cube, also called the pocket cube.
    The available actions are turning the front, right, or top faces clockwise by $90^\circ$.
    The cube is scrambled by applying a random sequence of moves of length $K$ where
    $K$ is uniform between 1 and 11 and where each move is turning the front, right, or top face
    $90^\circ$ clockwise, $180^\circ$, or $90^\circ$ counterclockwise and no two consecutive moves
    turn the same face. (Note that the action space used for scrambling is larger than
    the action space of the agent.) Here, 11 is the maximum number distance from a state to the solved state.
    We use the implementation provided by \citet{hukmani2021rubik}.
\end{itemize}
For \texttt{8Puzzle} and \texttt{RubiksCube222},
our choice of sampling the scramble length uniformly from 1 to some maximum $K$
follows \citet{agostinelli2019cube}.

Basic information about the 4 base environments is summarized in \cref{tab:envs}.
\begin{table}[b]
\caption{Basic information about the base environments studied by our experiments.
$|A_{\base}|$: size of base action space;
$|S|$: size of state space;
$|S_{p > 0}|$: size of support of $p$.
}
\label{tab:envs}
\vskip 0.15in
\begin{center}
\begin{small}
\begin{tabular}{rcccc}
\toprule
Environment & $|A_{\base}|$ & $|S|$ & $|S_{p > 0}|$ \\
\midrule
\texttt{CliffWalking} & 4 & 32 & 1 \\
\texttt{CompILE2} & 5 & 115,462 & 59 \\
\texttt{8Puzzle} & 4 & 362,880 & 181,439 \\
\texttt{RubiksCube222} & 3 & 3,674,160 & 3,674,159 \\
\bottomrule
\end{tabular}
\end{small}
\end{center}
\vskip -0.1in
\end{table}

For each base environment, one of the 32 action space variants is just the base environment itself.
The remaining 31 are (strict) macroaction augmentations generated as follows:
\begin{itemize}
    \item For \texttt{CliffWalking}, the LEMMA abstraction algorithm \citep{li2022lemma}
    found one single macroaction from the offline trajectory data generated using breadth-first search (BFS).
    That single macroaction is just the shortest sequence of actions that solves the only possible
    starting state of the environment: (U = up, R = right, D = down, L = left)
    \begin{itemize}
        \item URRRRRRRRRRRD
    \end{itemize}
    5 other sets of macroactions were derived from subsequences of near-optimal solutions to the starting state:
    \begin{itemize}
        \item RR
        \item RR, RRRR, RRRRRRRR
        \item RRRRRRRRRRR
        \item UUURRRR, RRR, DRDRD
        \item URRRRRRRRRRR, RRRRRRRRRRRD
    \end{itemize}
    Furthermore, for each $k = 1, 2, 3, 4, 5$, we randomly generated 5 sets of $k$ distinct macroactions.
    A random macroaction with length $L + 1$ ($L \sim \Geomtric(1/3)$) was generated as follows: 
    \begin{itemize}
        \item With probability 0.4, randomly choose between U and R with probabilities 0.3 and 0.7;
        \item With probability 0.3, randomly choose between R and D with probabilities 0.7 and 0.3;
        \item With probability 0.1, randomly choose between D and L with probabilities 0.7 and 0.3;
        \item With probability 0.2, randomly choose between L and U with probabilities 0.3 and 0.7.
    \end{itemize}
    We didn't choose probabilities uniform across all directions because this
    results in several sets of macroactions that cause the agent to drift leftward or downward
    during random exploration, and the agent almost never receives any positive reward signal.
    However, it was also the presence of drift that helped us generate variety in
    the learnability of the macroaction-augmented environments.
    Variation in the direction of the drift across different sets of macroactions
    resulted in sample efficiencies that varied across 7 orders of magnitude.
    
    \item For \texttt{CompILE2}, LEMMA discovered the following set of macroactions:
    (L = left, U = up, R = right, D = down, P = pick up)
    \begin{itemize}
        \item PUURRRP, LL, UU, DD
    \end{itemize}
    5 other sets of macroactions were derived from subsequences of subsets of these macroactions:
    \begin{itemize}
        \item LL, UU, DD
        \item LL, UU, RRR, DD
        \item PUU, RRRP
        \item PUURRRP
        \item PUURRRP, LL, UU, RRR, DD
    \end{itemize}
    Furthermore, for each $k = 1, 2, 3, 4, 5$, we randomly generated 5 sets of $k$ distinct macroactions.
    A random macroaction with length $L + 1$ ($L \sim \Geomtric(1/3)$) was generated as follows: 
    \begin{itemize}
        \item With probability 1/4, randomly choose among L, U and P with probabilities 0.4, 0.4 and 0.2;
        \item With probability 1/4, randomly choose among U, R and P with probabilities 0.4, 0.4 and 0.2;
        \item With probability 1/4, randomly choose among R, D and P with probabilities 0.4, 0.4 and 0.2;
        \item With probability 1/4, randomly choose among D, L and P with probabilities 0.4, 0.4 and 0.2.
    \end{itemize}
    
    \item For \texttt{8Puzzle}, LEMMA discovered the following set of macroactions:
    (U = up, R = right, D = down, L = left)
    \begin{itemize}
        \item RD, LDR
    \end{itemize}
    5 other sets of macroactions were derived from subsets of these macroactions,
    possibly with reflection across the diagonal (a symmetry of the puzzle):
    \begin{itemize}
        \item RD
        \item LDR
        \item RD, DR
        \item LDR, URD
        \item RD, DR, LDR, URD
    \end{itemize}
    Furthermore, for each $k = 1, 2, 3, 4, 5$, we randomly generated 5 sets of $k$ distinct macroactions.
    A random macroaction with length $L + 1$ ($L \sim \Geomtric(1/2)$) was generated
    by sampling from U, R, D, L with probabilities 0.2, 0.3, 0.3, 0.2.
    The higher probabilities for R and D are intended to encourage
    moving the position of the missing tile towards the bottom-right corner.
    
    \item For \texttt{RubiksCube222}, LEMMA generated the empty set. However, the 3 top-scoring macroactions were:
    (F = front face $90^\circ$, R = right face $90^\circ$, U = top face $90^\circ$)
    \begin{itemize}
        \item FF, RR, UU
    \end{itemize}
    5 other sets of macroactions were derived from subsets of these macroactions,
    possibly with more repetition of some base action:
    \begin{itemize}
        \item FF
        \item FF, FFF
        \item FF, RR
        \item FF, FFF, RR, RRR
        \item FF, FFF, RR, RRR, UUU
    \end{itemize}
    Note that FF, RR, UU are half-turns of faces (denoted F2, R2, U2 in standard cube notation)
    and FFF, RRR, UUU are counter-clockwise $90^\circ$ turns (usually denoted F$'$, R$'$, U$'$).
    
    Furthermore, for each $k = 1, 2, 3, 4, 5$, we randomly generated 5 sets of $k$ distinct macroactions.
    A random macroaction with length $L + 1$ ($L \sim \Geomtric(1/2)$) was generated
    by sampling from F, R, U each with probability 1/3.
\end{itemize}

\section{Experimental Details} \label{app:expt}

\subsection{Hyperparameters}

\begin{itemize}
    \item The learning rate is $\alpha = 0.1$ for
    Q-learning, value iteration and REINFORCE, and $\alpha = 0.0005$ for DQN.
    \item For the off-policy RL algorithms (Q-learning, value iteration, and DQN),
    the optimal epsilon schedule for epsilon greedy can vary by orders of magnitude
    across different action space variants of the same base environment.
    We therefore adopt an adaptive epsilon-greedy exploration policy where the probability $\eps$
    of choosing a random action starts at $1$ and is decreased by $0.002$ every time the agent beats
    its highest test reward so far by $0.002$, until $\eps = 0.1$.
    \item Testing was performed with $200$ episodes ($1$ episode for \texttt{CliffWalking},
    which only has one starting state) using the greedy policy
    (Q-learning, value iteration, DQN) or the current policy (REINFORCE).
    For the purposes of computing sample complexity, the $N$ at which a reward or value error threshold
    is reached is computed by averaging over all values of $N$ where
    the reward/value error crosses above/below the threshold.
    \item Experiments were run with a maximum of 100M environment steps.
    We applied early stopping with a test reward threshold of 0.95
    (0.75 for \texttt{RubiksCube222}) and average value error threshold of 0.025
    (0.1 for \texttt{RubiksCube222}).
    \item The horizon is $50$ for all environments, including skill-augmented environments.
    In addition, to simulate a cost of applying too many base actions,
    we terminate an episode whenever the number of base actions reaches $100$.
    \item For Q-learning, value iteration, and DQN, the replay buffer size is $1000$
    and updates are performed once every $4$ episodes with a batch size of $32$.
    \item Details on the model architecture of DQN are given in \cref{app:algo-specific}.
\end{itemize}
No extensive hyperparameter tuning was done as the purpose
of our experiments was not to compare RL algorithms,
but to compare the performance of one algorithm on different action space variants
of the same base environment.

\subsection{Computational Resources}

Experiments were run on 28 NVIDIA GPUs
($8 \times$Quadro RTX 5000, $8 \times$GeForce GTX 1080 Ti, $8 \times$Tesla V100 SXM2 32GB,
$4 \times$RTX 6000 Ada Generation).
One experiment, which usually consisted of 32 runs of some RL algorithm on different macroaction
augmentations of the same base environment, took between under a minute to about a week to finish.
In total, all experiments were completed within one month.

\subsection{Algorithm-Specific Details} \label{app:algo-specific}

\begin{itemize}
    \item Value iteration is modified to the RL setting in a way similar to
    Deep Approximate Value Iteration (DAVI) \citep{agostinelli2019cube}.
    In DAVI, a state is chosen from some initial distribution and the value network is updated
    by minimizing the quadratic loss between the current state value
    and the Bellman update, thus requiring a forward pass that computes the 
    values of all next states.
    In our version of value iteration, a state is chosen from the initial distribution
    and we apply a rollout of the epsilon-greedy policy.
    For each state $s$ in the rollout, we also compute all possible next states.
    Similar to Q-learning, these next states are stored along with $s$ in a replay buffer.
    When we sample a state (along with its next states) from the replay buffer,
    its value is updated in the direction of the Bellman update.
    Note that the fact that all possible next states are computed from each state in a rollout
    multiplies the number of environment steps taken by $|A|$.
    \item The policy $\pi_\theta(a \mid s)$ in REINFORCE is parameterized directly
    by the logits. In other words, the weights are an $|S| \times |A|$ matrix
    and $\pi_\theta(\cdot \mid s) = \mathrm{Softmax}(\theta_{s,\cdot})$.
    \item The implementation of the deep neural net in DQN depends on the environment.
    A state embedding is first constructed from the input
    before passed into a linear projection head that outputs the action values
    $Q(s, \cdot)$ of a state $s$.
    \begin{itemize}
        \item In \texttt{CliffWalking}, the input is a length-3 multihot vector
        at every location of the 4-by-12 grid (hence a $4 \times 12 \times 3$ binary tensor).
        In each multihot vector, the 3 indices represent the player, goal, and cliff.
        The state embedding is constructed by passing the input through
        a 2-layer CNN with ReLU activation followed by a 2-layer MLP
        with ReLU activation.
        The CNN has a kernel size of 3 and padding of 1.
        The hidden dimension is 32 and the output embedding has dimension 16.
        \item In \texttt{CompILE2}, the input has two components.
        The grid is represented as a length-12 multihot vector
        at every location of the 10-by-10 grid (hence a $10 \times 10 \times 12$ binary tensor).
        The 12 indices of each multihot vector represent the 10 types of objects, wall, and agent.
        The next object the agent has to pick up is represented as a length-10 one-hot vector.
        The grid is passed through a 2-layer CNN with ReLU activation followed by a 2-layer MLP with ReLU activation.
        The result is concatenated with an embedding of the next object the agent has to pick up
        and passed through a linear projection to form the final embedding of the observation.
        The CNN has a a kernel size of 3 and padding of 0.
        The hidden dimension is 32 in the CNN layers and 128 in the MLP layers;
        the object embedding has dimension 16; the output embedding has dimension 128.
        \item In \texttt{8Puzzle}, the input is a length-9 onehot vector
        at every location of the 3-by-3 grid (hence a $3 \times 3 \times 9$ binary tensor)
        denoting the tile present at each location (or the absence thereof).
        The state embedding is constructed by passing the input through
        a 2-layer CNN with ReLU activation followed by a 2-layer MLP
        with ReLU activation.
        The CNN has a kernel size of 3 and padding of 1.
        The hidden dimension is 32 and the output dimension is 32.
        \item In \texttt{RubiksCube222}, the input is a length-6 multihot vector
        for each of $6 \times 4 = 24$ tiles of the cube (hence a $24 \times 6$ binary tensor)
        denoting the color of each tile.
        The state embedding is constructed by flattening the input and
        passing it through a 4-layer MLP with ReLU activation.
        The hidden dimension is 64 and the output dimension is 32.
    \end{itemize}
\end{itemize}

\section{Additional Empirical Tests of $p$-Learning and $p$-Exploration Difficulty} \label{app:learndiff_extra}

\subsection{Empirically Verifying \Cref{lem:learndiff} for Motivating $p$-Learning Difficulty} \label{app:expt_learndiff}

To test how well $p$-learning difficulty captures learning from experience,
we study the value iteration algorithm for planning with known transitions and rewards in a \sdmdp.
We consider two variants of value iteration: state value iteration for learning
the values of states \citep{bellman1957valueiteration},
and action value iteration for learning the values of state-action pairs.
The latter is like Q-learning \citep{watkins1989qlearning} but
modified to update the values of all state-action pairs at once.
Instead of the original Bellman update, each update uses a linear interpolation
between the old value and the new value given by the Bellman update
with a learning rate of $\alpha = 0.1$ (see \cref{eq:value_iteration}).


For each base environment, we test the correlation between average solution length and sample complexity $N$
on 32 macroaction augmentations of that environment.
The results are summarized in \cref{tab:expt_results_no_expl}.
We find that the correlation between convergence time and average solution length is almost always greater
than 0.9, with it occasionally being near-perfect (above 0.99).
\begin{table*}[t]
\caption{Across 32 macroaction augmentations of each of 4 base environments,
we report the correlations between: the number of iterations until convergence ($N$)
for two variants of value iteration (state values and Q-values)
and two convergence criteria ($r \geq 0.9\rev{5}$; $\overline{\Delta V}$ or $\overline{\Delta Q} \leq 0.01$);
and the $p$-weighted mean solution length of a state ($\rev{\overline{d} :=\ } \mathbb E_{s\sim p}[d_{\aug}(s)]$).
The reported errors are standard errors of the \rev{mean over 5 seeds}.
}
\label{tab:expt_results_no_expl}
\vskip 0.15in
\begin{center}
\begin{small}
\begin{tabular}{rrcccc}
\toprule
&& \rev{$\overline{d}_\mathtt{CliffWalking}$} & \rev{$\overline{d}_\mathtt{CompILE2}$} & \rev{$\overline{d}_\mathtt{8Puzzle}$} & \rev{$\overline{d}_\mathtt{RubiksCube222}$} \\
\toprule
\multirow{2}{*}{Q-value iteration}
& $N_{r \geq 0.9\rev{5}}$
    & \rev{0.980 $\pm$ 0.001} & \rev{0.934 $\pm$ 0.012} & \rev{0.901 $\pm$ 0.013} & \rev{0.942 $\pm$ 0.007} \\
& $N_{\overline{\Delta Q} \leq 0.01}$
    & \rev{0.998 $\pm$ 0.000} & \rev{0.977 $\pm$ 0.003} & \rev{0.968 $\pm$ 0.006} & \rev{0.989 $\pm$ 0.001} \\
\midrule
\multirow{2}{*}{Value iteration}
& $N_{r \geq 0.9\rev{5}}$
    & \rev{0.977 $\pm$ 0.001} & \rev{0.942 $\pm$ 0.005} & \rev{0.902 $\pm$ 0.015} & \rev{0.942 $\pm$ 0.007} \\
& $N_{\overline{\Delta V} \leq 0.01}$
    & \rev{0.998 $\pm$ 0.000} & \rev{0.984 $\pm$ 0.002} & \rev{0.969 $\pm$ 0.005} & \rev{0.985 $\pm$ 0.001} \\
\bottomrule
\end{tabular}
\end{small}
\end{center}
\vskip -0.1in
\end{table*}

{\revblock
\subsection{Arithmetic Mean Variant of $p$-Exploration Difficulty Performs Worse Than the Geometric Mean} \label{app:expt_results_AM}

\Cref{tab:expt_results_AM} shows the version of \cref{tab:expt_results}
where $\expldiff = \log N_\text{GM}$ is redefined to be $\log N_\text{AM}$.
(Up to a constant factor, $N_\text{AM} = \mathbb E_{s \sim p}[1/q(s)]$ estimates an upper bound on the sample complexity
of the exploration stage of RL.)
Comparing the results with \cref{tab:expt_results},
we find that with 3 exceptions (in \texttt{8Puzzle}),
all correlation values are no higher than those when the geometric mean is used.\footnote{
The correlation values of \texttt{CliffWalking} are exactly equal across the two tables
because this environment has only one possible starting state, as a result of which
the arithmetic and geometric means are exactly equal.}
This provides empirical validation for using the geometric mean as opposed to the arithmetic mean
in our definition of $p$-exploration difficulty.
}

\begin{table*}[t]
\revblock
\caption{Version of \cref{tab:expt_results} where the geometric mean
is replaced with the arithmetic mean in the definition of $\expldiff$.
With 3 exceptions, all correlation values are no higher than those
when the geometric mean are used (\cref{tab:expt_results}).
}
\label{tab:expt_results_AM}
\vskip 0.15in
\begin{center}
\begin{small}
\begin{tabular}{rrcccc}
\toprule
&& $\log J_\mathtt{CliffWalking}$ & $\log J_\mathtt{CompILE2}$ & $\log J_\mathtt{8Puzzle}$ & $\log J_\mathtt{RubiksCube222}$ \\
\toprule
\multirow{2}{*}{Q-Learning}
& $\log N_{r \geq r^*}$
    & 0.947 $\pm$ 0.006 & 0.661 $\pm$ 0.049 & 0.301 $\pm$ 0.047 & 0.366 $\pm$ 0.081 \\
& $\log N_{\overline{\Delta Q} \leq \Delta Q^*}$
    & 0.953 $\pm$ 0.008 & 0.631 $\pm$ 0.061 & 0.442 $\pm$ 0.043 & 0.763 $\pm$ 0.019 \\
\midrule
\multirow{2}{*}{Value iteration}
& $\log N_{r \geq r^*}$
    & 0.933 $\pm$ 0.009 & 0.724 $\pm$ 0.043 & 0.788 $\pm$ 0.042 & 0.247 $\pm$ 0.058 \\
& $\log N_{\overline{\Delta V} \leq \Delta V^*}$
    & 0.951 $\pm$ 0.015 & 0.732 $\pm$ 0.035 & 0.877 $\pm$ 0.011 & 0.694 $\pm$ 0.021 \\
\midrule
\multirow{1}{*}{REINFORCE}
& $\log N_{r \geq r^*}$
    & 0.949 $\pm$ 0.006 & 0.732 $\pm$ 0.039 & 0.715 $\pm$ 0.020 & 0.537 $\pm$ 0.139 \\  
\midrule
\multirow{1}{*}{DQN}
& $\log N_{r \geq r^*}$
    & 0.789 $\pm$ 0.028 & 0.752 $\pm$ 0.075 & 0.621 $\pm$ 0.025 & 0.576 $\pm$ 0.023 \\
\bottomrule
\end{tabular}
\end{small}
\end{center}
\vskip -0.1in
\end{table*}

\section{Proofs} \label{app:proofs}

\begin{proof}[Proof of \cref{lem:learndiff}]
    (\textit{Note:} This proof assumes $\log$ refers to the natural logarithm.)
    
    For $\alpha = 1$, simple induction on $t$ shows that, at time $t$, the states with value $1$
    are exactly those states that can be solved with $t$ actions or less, and all other states have value $0$.
    
    For the $\alpha < 1$ case, let's first consider the case where
    the \sdmdp\ is a chain of states $0, 1, \ldots, n$ where state 0 is the only goal state
    and $T(s, a) = s - 1$ for any action $a$ and non-goal state $s \neq 0$.
    Then the value iteration formula becomes $V(s) \gets (1 - \alpha) V(s) + \alpha V(s-1)$ for $s > 0$
    and $V(0) = 1$.
    For $\alpha \ll 1$, we can write this as a differential equation
    \[
        \frac{dV_s}{dt} = -\alpha(V_s - V_{s-1})
    \]
    for $s > 0$, and $\frac{dV_0}{dt} = 0$.
    (We have switched to subscript notation to make it clearer that this is a linear system of ODEs in time.)
    Solving the system with the initial conditions $V_0(0) = 1$ and $V_s(0) = 0$ for $s > 0$ yields
    \[
        V_s(t) = 1 - e^{-\alpha t} \sum_{k=0}^{s-1} \frac{(\alpha t)^k}{k!}.
    \]
    Note that $V_s(t)$ decreases in $s$, i.e., at any time $t$, states closer to the goal have higher value.
    
    If $\alpha t = as + b\log(1/\eps)$ where $a > 1$ and $b = \frac{a}{a - 1}$, then
    \begin{align*}
        \log(1 - V_s(t)) &\leq -\alpha t + \log\left(s\frac{(\alpha t)^s}{s!}\right) \\
        &\leq -as - b\log(1/\eps) + s\log(as + b\log(1/\eps)) - (s-1)(\log(s-1) - 1) \\
        &\leq -as - b\log(1/\eps) + s\log s + s\log a + \frac{b}{a}\log(1/\eps) - (s-1)(\log(s-1) - 1) \\
        &= -s(a - \log a - 1) - \left(b - \frac{b}{a}\right)\log(1/\eps) + s(\log s - \log(s-1)) + \log(s-1) - 1 \\
        &= \log\eps - s\left(a - \log a - 1 + \log\left(\frac{s-1}{s}\right) - \frac{\log(s-1) - 1}{s}\right),
    \end{align*}
    which is less than $\log\eps$ for sufficiently large $s$ since $a - \log a - 1 > 0$.
    
    Let $\alpha t = s + \log(1/\eps) - 1 - \log 2$.
    Then for $s \geq 2$ and $\eps \leq 1/2$, we have
    \begin{align*}
        \log(1 - V_s(t)) &\geq -s - \log(1/\eps) + 1 + \log 2 + \log\left(\sum_{k=0}^{s-1} \frac{(s-1)^k}{k!}\right) \\
        &\overset{(*)}{\geq} -s - \log(1/\eps) + 1 + \log 2 + \log\left(\frac12 e^{s-1}\right) \\
        &= \log\eps,
    \end{align*}
    where the inequality marked (*) made use of the fact that the
    the median of a Poisson distribution with positive integer rate $s-1$ is exactly $s-1$ \citep{choi1994medianpoisson}.
    
    We have thus shown that we need $\alpha t = \Theta(s + \log(1/\eps))$ to obtain $1 - V_s(t) = \eps$.
    In other words, the time until the value estimate $V_s(t)$ is within $\eps$ of its true value of 1 is
    \begin{equation} \label{eq:VI_chain}
        t = \Theta\left(\frac{s + \log(1/\eps)}{\alpha}\right).
    \end{equation}
    
    Now let's return to the general graph setting. In this situation, the invariants are as follows:
    \begin{itemize}
        \item $\max_a V(T(s, a), t) = V(n(s), t)$ where $n(s)$ is the next state on the shortest path from $s$ to any goal.
        \item $V(s, t) = V_{d(s)}(t)$ where $V_d(t)$ is the solution to the value function in the case of a simple chain,
        as we just derived.
    \end{itemize}
    This invariants are preserved by the fact that $V_d(t)$ is non-increasing in $d$.
    Thus, replacing $s$ with $d(s)$ in the formula for the chain \sdmdp\ (\cref{eq:VI_chain})
    yields the result for the general \sdmdp\ case.
\end{proof}

\begin{proof}[Proof of \cref{thm:learndiff}]
    (\textit{Note:} We use the version of the geometric distribution with support excluding 0.)
    
    For $\sigma \in (A_{\aug})^+$,
    let $P_{\aug,\unif,\eps}(\sigma) = \eps(1 - \eps)^{|\sigma| - 1} |A_{\aug}|^{-|\sigma|}$,
    the probability that a random sequence of length $\sim \Geomtric(\eps)$
    with actions chosen uniformly from $A_{\aug}$ is exactly $\sigma$.
    Then $P_{\aug,\unif,\eps}$ is a probability distribution over $(A_{\aug})^+$, so
    \[
        \mathbb E_{\sigma \sim P_{\aug}}[-\log P_{\aug,\unif,\eps}(\sigma)]
        = \entropy[P_{\aug}, P_{\aug,\unif,\eps}]
        \geq \entropy[P_{\aug}],
    \]
    where $\entropy[p, q]$ denotes the cross entropy between $p$ and $q$.
    
    Now, fix any $0 < \eps < 1$. Then
    \begin{align*}
        \mathbb E_{s \sim p}[d_{\aug}(s)] \log\left(\frac{|A_{\aug}|}{1 - \eps}\right) + \log\left(\frac{1 - \eps}{\eps}\right)
        &= \mathbb E_{\sigma \sim P_{\aug}}[|\sigma|] \log\left(\frac{|A_{\aug}|}{1 - \eps}\right) + \log\left(\frac{1 - \eps}{\eps}\right) \\
        &= \mathbb E_{\sigma \sim P_{\aug}}[-\log P_{\aug,\unif,\eps}(\sigma)] \\
        &\geq \entropy[P_{\aug}].
    \end{align*}
    Thus,
    \begin{align*}
        \frac{\mathbb E_{s \sim p}[d_{\aug}(s)] \log\left(\frac{|A_{\aug}|}{1 - \eps}\right)}{{\mathbb E_{s \sim p}[d_{\base}(s)] \log\left(\frac{|A_{\base}|}{1 - \eps}\right)}}
        &\geq \frac{\entropy[P_{\aug}] - \log\left(\frac{1 - \eps}{\eps}\right)}{\mathbb E_{s \sim p}[d_{\base}(s)] \log\left(\frac{|A_{\base}|}{1 - \eps}\right)} \\
        \frac{\learndiff(\M_{\aug}; p)}{\learndiff(\M_{\base}; p)} = \frac{\mathbb E_{s \sim p}[d_{\aug}(s)]|A_{\aug}|}{\mathbb E_{s \sim p}[d_{\base}(s)]|A_{\base}|}
        &\geq \frac{|A_{\aug}|\log\left(\frac{|A_{\base}|}{1 - \eps}\right)}{|A_{\base}|\log\left(\frac{|A_{\aug}|}{1 - \eps}\right)} \frac{\entropy[P_{\aug}] - \log\left(\frac{1 - \eps}{\eps}\right)}{\mathbb E_{s \sim p}[d_{\base}(s)] \log\left(\frac{|A_{\base}|}{1 - \eps}\right)} \\
        &\geq \frac{|A_{\aug}|\log|A_{\base}|}{|A_{\base}|\log|A_{\aug}|} \frac{\entropy[P_{\aug}] - \log\left(\frac{1 - \eps}{\eps}\right)}{\mathbb E_{s \sim p}[d_{\base}(s)] \log\left(\frac{|A_{\base}|}{1 - \eps}\right)}.
    \end{align*}
    The last inequality used the fact that $|A_{\aug}| \geq |A_{\base}|$ gives
    \[
        \frac{\log\left(\frac{|A_{\base}|}{1 - \eps}\right)}{\log\left(\frac{|A_{\aug}|}{1 - \eps}\right)}
        = 1 - \frac{\log\left(\frac{|A_{\aug}|}{|A_{\base}|}\right)}{\log\left(\frac{|A_{\aug}|}{1 - \eps}\right)}
        \geq 1 - \frac{\log\left(\frac{|A_{\aug}|}{|A_{\base}|}\right)}{\log|A_{\aug}|}
        = \frac{\log|A_{\base}|}{\log|A_{\aug}|}.
    \]
    Now, we have
    \[
        \frac{\learndiff(\M_{\aug}; p)}{\learndiff(\M_{\base}; p)}
        \geq \sup_{0 < \eps < 1} \frac{|A_{\aug}|\log|A_{\base}|}{|A_{\base}|\log|A_{\aug}|} \frac{\entropy[P_{\aug}] - \log\left(\frac{1 - \eps}{\eps}\right)}{\mathbb E_{s \sim p}[d_{\base}(s)] \log\left(\frac{|A_{\base}|}{1 - \eps}\right)}
        = \frac{|A_{\aug}|\log|A_{\base}|}{|A_{\base}|\log|A_{\aug}|} \incompress_{A_{\aug}}(\M_{\base}; p),
    \]
    which completes the proof.
\end{proof}

\begin{proof}[Proof of \cref{cor:learndiff}]
    Since $|A_{\base}| \geq 2$, we have $|A_{\aug}| \geq |A_{\base}| + 1 \geq 3$. The function $f(x) = \ln x/x$ is decreasing for $x \geq e$,
    so
    \begin{multline*}
        \frac{|A_{\base}|\ln|A_{\aug}|}{|A_{\aug}|\ln|A_{\base}|}
        = \frac{f(|A_{\aug}|)}{f(|A_{\base}|)}
        \leq \frac{f(|A_{\base}| + 1)}{f(|A_{\base}|)}
        = \frac{|A_{\base}|\ln(|A_{\base}| + 1)}{(|A_{\base}| + 1)\ln|A_{\base}|} \\
        = \frac{|A_{\base}|}{|A_{\base}| + 1}\left(1 + \frac{\ln\left(1 + \frac{1}{|A_{\base}|}\right)}{\ln|A_{\base}|}\right)
        < \frac{|A_{\base}|}{|A_{\base}| + 1}\left(1 + \frac{1}{|A_{\base}|\ln|A_{\base}|}\right)
        = \frac{1}{|A_{\base}| + 1}\left(|A_{\base}| + \frac{1}{\ln|A_{\base}|}\right).
    \end{multline*}
    Then
    \[
        \frac{\learndiff(\M_{\aug}; p)}{\learndiff(\M_{\base}; p)} = \frac{|A_{\aug}|\ln|A_{\base}|}{|A_{\base}|\ln|A_{\aug}|} \incompress(\M_{\base}; p)
        > \frac{1 - \frac{1}{|A_{\base}| + 1}\left(1 - \frac{1}{\ln|A_{\base}|}\right)}{\frac{1}{|A_{\base}| + 1}\left(|A_{\base}| + \frac{1}{\ln|A_{\base}|}\right)}
        = 1,
    \]
    as desired.
\end{proof}

\begin{proof}[Proof of \cref{thm:expldiff_lb}]
    \begin{align}
        \expldiff(\M_{\aug}; p, \delta)
        &= \expect_{s \sim p}[-\log q_{\aug,\delta}(s)] \notag \\
        &= \expect_{s \sim p}[-\log \rho_{\aug,\delta}(s)] - \log\left(\frac{1 - \delta}{\delta}\right) \notag \\
        &= \expect_{s \sim p}\left[-\log\left(\frac{\rho_{\aug,\delta}(s)}{D(\M_{\aug}; \delta)}\right)\right] - \log\left(\frac{1 - \delta}{\delta}D(\M_{\aug}; \delta)\right) \notag \\
        &= \entropy[p] + \DKL{p}{\frac{\rho_{\aug,\delta}(\cdot)}{D(\M_{\aug}; \delta)}} - \log\left(\frac{1 - \delta}{\delta}D(\M_{\aug}; \delta)\right) \label{eq:expldiff_eq} \\
        &\geq \entropy[p] - \log\left(\frac{1 - \delta}{\delta}D(\M_{\aug}; \delta)\right), \notag
    \end{align}
    where we have used the fact that $\frac{\rho_{\aug,\delta}(\cdot)}{D(\M_{\aug}; \delta)}$ is a normalized probability distribution.

    Now, suppose the state space is finite and $\delta > \max_s p(s)$.
    According to \cref{eq:expldiff_eq},
    we want to show that we can make $\DKL{p}{\frac{\rho_{\aug,\delta}(\cdot)}{D(\M_{\aug}; \delta)}}$
    arbitrarily small with a suitable choice of $A_{\aug}$.
    Construct $A_{\aug}$ as follows. Let the number of skills $|A_{\aug}| - |A_{\base}|$
    be some large number $K \gg \max\{|A_{\base}|, 1 / \min_{s : p(s) > 0} p(s)\}$.
    For each solvable state $s$ with $p(s) > 0$, let $\lfloor Kf(s)\rfloor$
    skills send $s$ directly to the goal state and the remaining $K - \lfloor Kf(s)\rfloor$ send $s$ back to $s$ itself,
    where $f(s) = \frac{\delta}{\delta - (1 - \delta) p(s)} p(s) \in (0, 1)$.
    (For solvable states $s$ with $p(s) = 0$, simply let all $K$ skills send $s$ back to $s$ itself.)
    Let's now show that $\rho_{\aug,\delta}(s) \to p(s)$ as $K \to \infty$ for every solvable
    state $s$.

    $\rho_{\aug,\delta}(s)$ is the probability that an action sequence $\sigma$ with actions uniformly
    chosen from $A_{\aug}$ and length $|\sigma| \sim \Geomtric(\delta)$ solves $s$.
    Among all such action sequences, the total probability of those that have a base action
    is no more than the total probability of all actions sequences that have a base action.
    The latter is given by
    \begin{align*}
        1 - \sum_{\sigma \in (A_{\aug} \setminus A_{\base})^+} \delta(1 - \delta)^{|\sigma| - 1} |A_{\aug}|^{-|\sigma|}
        &= 1 - \sum_{l = 1}^\infty (|A_{\aug}| - |A_{\base}|)^l \delta(1 - \delta)^{l - 1} |A_{\aug}|^{-l} \\
        &= 1 - \frac{\delta}{1 - \delta} \sum_{l = 1}^\infty \left((1 - \delta)\left(1 - \frac{|A_{\base}|}{|A_{\aug}|}\right)\right)^l \\
        &= 1 - \frac{\delta \left(1 - \frac{|A_{\base}|}{|A_{\aug}|}\right)}{1 - (1 - \delta)\left(1 - \frac{|A_{\base}|}{|A_{\aug}|}\right)} \\
        &\to 0, \quad \text{as $|A_{\base}|/|A_{\aug}| \to 0$}.
    \end{align*}
    It now remains to show that the total probability of solutions to $s$ that consist only of skills
    approximates $p(s)$ arbitrarily well as $K \to \infty$.
    For $s$ with $p(s) = 0$, no such solutions exist and so their total probability is 0.
    For $s$ with $p(s) > 0$,
    \begin{align*}
        \sum_{\sigma \in \Sol_{\aug}(s) \cap (A_{\aug} \setminus A_{\base})^+} \delta(1 - \delta)^{|\sigma| - 1}|A_{\aug}|^{-|\sigma|}
        &= \sum_{l=1}^\infty \lfloor Kf(s)\rfloor (K - \lfloor Kf(s)\rfloor)^{l-1} \delta(1 - \delta)^{l - 1}|A_{\aug}|^{-l} \\
        &= \frac{\delta \lfloor Kf(s)\rfloor}{|A_{\aug}|} \sum_{l=1}^\infty \left((1 - \delta)\frac{K - \lfloor Kf(s)\rfloor}{|A_{\aug}|}\right)^{l-1} \\
        &= \frac{\delta \lfloor Kf(s)\rfloor}{|A_{\aug}|} \frac{1}{1 - (1 - \delta)\frac{K - \lfloor Kf(s)\rfloor}{|A_{\aug}|}} \\
        &\to \delta f(s) \frac{1}{1 - (1 - \delta)(1 - f(s))} \tag{as $K \to \infty$} \\
        &= p(s).
    \end{align*}
    By now, we have shown that $\rho_{\aug,\delta}(s) \to p(s)$ as $K \to \infty$ for every solvable state $s$.
    Since $S$ is finite, this convergence is uniform, so the KL-divergence between $p$
    and the normalized version of $\rho_{\aug,\delta}$ tends to zero as $K \to \infty$, as desired.
\end{proof}

\begin{proof}[Proof of \cref{cor:expldiff_ic}]
    Since $\M_{\base}$ is solution-separable and $\M_{\aug}$ is a macroaction augmentation of $\M_{\base}$,
    $\M_{\aug}$ is also solution-separable. Thus, $D(\M_{\aug}; \delta) \leq 1$.
    By \cref{thm:expldiff_lb},
    \[
        \expldiff(\M_{\aug}; p, \delta)
        \geq \entropy[p] - \log\left(\frac{1 - \delta}{\delta}D(\M_{\aug}; \delta)\right)
        \geq \entropy[p] - \log\left(\frac{1 - \delta}{\delta}\right),
    \]
    whereas
    \[
        \expldiff(\M_{\base}; p, \delta)
        = \expect_{s \sim p}[-\log q_{\base,\delta}(s)]
        \leq \expect_{s \sim p}\left[-\log\left(\left(\frac{1 - \delta}{|A_{\base}|}\right)^{d_{\base}(s)}\right)\right]
        = \expect_{s \sim p}[d_{\base}(s)]\log\left(\frac{|A_{\base}|}{1 - \delta}\right).
    \]
    Thus,
    \[
        \frac{\expldiff(\M_{\aug}; p, \delta)}{\expldiff(\M_{\base}; p, \delta)}
        \geq \frac{\entropy[p] - \log\left(\frac{1 - \delta}{\delta}\right)}{\expect_{s \sim p}[d_{\base}(s)]\log\left(\frac{|A_{\base}|}{1 - \delta}\right)},
    \]
    as desired.
\end{proof}

\begin{proof}[Proof of \cref{thm:incomp_expl}]
    The construction given in the proof of \cref{thm:expldiff_lb} allows us to
    make $\DKL{p}{\frac{\rho_{\aug,\delta}(s)}{D(\M_{\aug}; \delta)}}$ arbitrarily close to 0
    and $D(\M_{\aug}; \delta) = \sum_s \rho_{\aug,\delta}(s)$ arbitrarily close to 1
    with sufficient large $K = |A_{\aug}| - |A_{\base}|$.
    Recalling \cref{eq:expldiff_eq}, this means that for any $\eps > 0$,
    the construction gives
    \[
        \expldiff(\M_{\aug}; p, \delta) < \entropy[p] - \log\left(\frac{1 - \delta}{\delta}\right) + \eps
    \]
    for sufficiently large $K$.
    
    On the other hand,
    let $p', \rho_{\base,\delta}'$ be distributions defined on solvable states
    in addition to a dummy state $s_d$
    such that $p'(s) = p(s)$ and $\rho_{\base,\delta}'(s) = \rho_{\base,\delta}(s)$ whenever $s \neq s_d$,
    whereas $p'(s_d) = 0$
    and $\rho_{\base,\delta}'(s_d) = 1 - \sum_{s \neq s_d} \rho_{\base,\delta}(s)$.
    (Note that $\rho_{\base,\delta}'(s_d) \geq 0$ because $\M_{\base}$ is solution-separable.)
    Then $\DKL{p'}{\rho_{\base,\delta}'} > 0$ since $p' \not\equiv \rho_{\base,\delta}'$.
    This gives
    \begin{align}
        \expldiff(\M_{\base}; p, \delta)
        &= \expect_{s \sim p}\left[-\log\rho_{\base,\delta}(s)\right] - \log\left(\frac{1 - \delta}{\delta}\right) \notag \\
        &= \entropy[p] + \expect_{s \sim p}\left[-\log\frac{\rho_{\base,\delta}(s)}{p(s)}\right] - \log\left(\frac{1 - \delta}{\delta}\right) \notag \\
        &= \entropy[p] + \DKL{p'}{\rho_{\base,\delta}'(s)} - \log\left(\frac{1 - \delta}{\delta}\right) \label{eq:expldiff_eq2} \\
        &> \entropy[p] - \log\left(\frac{1 - \delta}{\delta}\right). \notag
    \end{align}
    As a result, for sufficiently large $K$,
    $\expldiff(\M_{\aug}; p, \delta) < \expldiff(\M_{\base}; p, \delta)$.

    Now, let's show that the construction in the proof of \cref{thm:expldiff_lb}
    can be made more precise to allow all states with $p(s) > 0$
    to have distinct canonical shortest solutions in $A_{\aug}$.
    Simply choose $K$ large enough so that, for all $s$ with $p(s) > 0$,
    the number of skills $\lfloor Kf(s)\rfloor$
    that send $s$ directly to $g$ is at least the number of states with $p(s) > 0$.
    Then the number of shortest solutions to every $s$ with $p(s) > 0$
    is at least the number of such $s$, so it is possible to choose one shortest
    solution for every such $s$ so that all the chosen solutions are distinct.
\end{proof}

\begin{proof}[Proof of \cref{cor:incomp_expl}]
    Define $A_{\aug}$ as in \cref{thm:incomp_expl},
    so that $\expldiff(\M_{\aug}; p, \delta) < \expldiff(\M_{\base}; p, \delta)$
    and \cref{thm:learndiff} gives
    \begin{equation}
        \frac{\learndiff(\M_{\aug}; p)}{\learndiff(\M_{\base}; p)}
        \geq \frac{|A_{\aug}|\log|A_{\base}|}{|A_{\base}|\log|A_{\aug}|}\incompress(\M_{\base}; p),
    \end{equation}
    which is identical to \cref{eq:learndiff_ic}.
    Then the proof that the additional condition in the corollary
    implies $\learndiff(\M_{\aug}; p) > \learndiff(\M_{\base}; p)$
    is identical to the proof of \cref{cor:learndiff}.
\end{proof}

\begin{proof}[Proof of \cref{thm:expldiff-delta}]
    Augment the state space of $\M_{\base}$ with a state $s_1$ that is solved by
    every length-1 sequence that is not already the solution to any other state.
    (Furthermore, all actions that do not result in the goal state instead transition to a dead state.)
    Denote by $\bar\M_{\base}$ the resultant \sdmdp\ and for simplicity of notation
    we write $\bar\rho_{\base,\delta}$ for $\rho_{\bar\M_{\base},\delta}$
    and $\bar d_{\base}$ for $d_{\bar\M_{\base}}$.
    Let $\bar\M_{\aug}$ denote the $A_{\aug}$-macroaction augmentation of $\bar\M_{\base}$.
    Then the solutions to $s_1$ in $A_{\aug}$ are exactly the same as those in $A_{\base}$
    since macroactions always have length greater than 1.
    We will write $\bar\rho_{\aug,\delta}$ to mean $\rho_{\bar\M_{\aug},\delta}$.
    Let $\bar p$ be a distribution over the solvable states of $\bar\M_{\base}$
    so that $\bar p = p$ on the solvable states of $\M_{\base}$ and $\bar p(s_1) = 0$.
    
    As in the proof of \cref{thm:incomp_expl}, we define distributions
    $p', \bar\rho_{\base,\delta}', \bar\rho_{\aug,\delta}'$ over the solvable states
    in addition to a dummy state $s_d$
    to be equal to $\bar p, \bar\rho_{\base,\delta}, \bar\rho_{\aug,\delta}$ whenever $s \neq s_d$,
    whereas $p'(s_d) = 0$ and $\bar\rho_{\base,\delta}'(s_d), \bar\rho_{\aug,\delta}'(s_d)$ are such that
    $\bar\rho_{\base,\delta}', \bar\rho_{\aug,\delta}'$ are normalized probability distributions.
    
    First, let's show that
    \begin{equation} \label{eq:AB-L1-dist}
        \sum_{s} |\bar\rho_{\aug,\delta}'(s) - \bar\rho_{\base,\delta}'(s)| \geq \frac{\delta}{|A_{\base}| + 1}.
    \end{equation}
    If $s$ is distance 1 away from the goal in $\bar\M_{\base}$, then
    \[
        \bar\rho_{\aug,\delta}'(s) = \delta\frac{n(s)}{|A_{\aug}|}, \quad \bar\rho_{\base,\delta}'(s) = \delta\frac{n(s)}{|A_{\base}|},
    \]
    where $n(s)$ denotes the number of solutions to $s$ in $\bar\M_{\base}$ (or equivalently, $\bar\M_{\aug}$),
    all of which have length 1.
    Thus,
    \[
        \sum_{s} |\bar\rho_{\aug,\delta}'(s) - \bar\rho_{\base,\delta}'(s)|
        \geq 
        \sum_{s: \bar d_{\base}(s) = 1} \delta n(s)\left(\frac{1}{|A_{\base}|} - \frac{1}{|A_{\aug}|}\right)
        = \delta|A_{\base}|\left(\frac{1}{|A_{\base}|} - \frac{1}{|A_{\aug}|}\right)
        = \delta\left(1 - \frac{|A_{\base}|}{|A_{\aug}|}\right)
        \geq \frac{\delta}{|A_{\base}| + 1},
    \]
    where the last inequality used the fact that $|A_{\aug}| \geq |A_{\base}| + 1$.
    
    We will now use \cref{eq:AB-L1-dist} to prove the theorem.
    By the triangle inequality,
    \[
        \sum_{s} |\bar\rho_{\aug,\delta}'(s) - \bar p'(s)|
        \geq \sum_{s} |\bar\rho_{\aug,\delta}'(s) - \bar\rho_{\base,\delta}'(s)|
        - \sum_{s} |\bar\rho_{\base,\delta}'(s) - \bar p'(s)|
        \geq \frac{\delta}{|A_{\base}| + 1} - \sum_{s} |\bar\rho_{\base,\delta}'(s) - \bar p'(s)|.
    \]
    Pinsker's inequality says that $\DKL{p}{q} \geq \frac12\left(\sum_x |p(x) - q(x)|\right)^2\rev{\log e}$
    for any two probability mass functions $p, q$. Thus, if
    \[
        \DKL{\bar p'}{\bar\rho_{\base,\delta}'} 
        = \DKL{p'}{\rho_{\base,\delta}'} < \frac{\delta^2\rev{\log e}}{8(|A_{\base}| + 1)^2},
    \]
    then
    \[
        \sum_{s} |\bar\rho_{\aug,\delta}'(s) - \bar p'(s)|
        > \frac{\delta}{|A_{\base}| + 1} - \sqrt{\frac{2}{\rev{\log e}} \cdot \frac{\delta^2\rev{\log e}}{8(|A_{\base}| + 1)^2}}
        = \frac{\delta}{2(|A_{\base}| + 1)}
    \]
    and so
    \[
        \DKL{p'}{\rho_{\aug,\delta}'}
        = \DKL{\bar p'}{\bar\rho_{\aug,\delta}'}
        > \frac12 \left(\frac{\delta}{2(|A_{\base}| + 1)}\right)^2\rev{\log e}
        > \DKL{p'}{\rho_{\base,\delta}'}.
    \]
    Now, by \cref{eq:expldiff_eq2}, this is equivalent to
    $\expldiff(\M_{\aug}; p, \delta) > \expldiff(\M_{\base}; p, \delta)$, as desired.
\end{proof}

The proof of \cref{thm:expldiff_simp} is omitted as the stronger version and its proof
are given in \cref{app:expldiff_strong}.

\section{Additional Theoretical Results}

{\revblock
\subsection{Preliminary Results on Stochastic Environments} \label{app:stochastic}

Here, we provide preliminary generalizations of our results for
stochastic sparse-reward MDPs, which are SDMDPs (\cref{def:sdmdp})
where the transition kernel $T$ may be stochastic (i.e., $T(s, a)$ is now a distribution over $S$).

In a (possibly stochastic) sparse-reward MDP,
let $W_{\sigma s}$ be the probability that taking actions $\sigma$ starting in $s$ results in the goal state.
For an ordering $\sigma_1, \sigma_2, \ldots$ of all positive-length action sequences in non-decreasing length,
define
\[
    w_{\sigma_k s} = \begin{cases}
        W_{\sigma_k s} & 1 \leq k < k_{max} \\
        1 - \sum_{i=1}^{k_{max}-1} W_{\sigma_i s} & k = k_{max} \\
        0 & k > k_{max}
    \end{cases}
\]
where $k_{max}$ is the largest $k$ such that $\sum_{i=1}^{k-1} W_{\sigma_i s} < 1$.
As a result, $\sum_\sigma w_{\sigma s} = 1$.

Let's redefine $d_{\mathcal M}(s)$ to be the weighted mean $\sum_\sigma w_{\sigma s}|\sigma|$,
so that $p$-learning difficulty (\cref{eq:learndiff})
and $A_{\aug}$-merged $p$-incompressibility (\cref{eq:incomp_A})
are now defined using this new notion of shortest solution length.
Furthermore, in the definition of $A_{\aug}$-merged $p$-incompressibility (\cref{eq:incomp_A}),
redefine $P_{\aug}$ to be $P_{\aug}(\sigma) = \sum_s p(s) w_{\sigma s}$ so that $\sum_\sigma P_{\aug}(\sigma) = 1$.
Note that the new definitions match the old definitions when the environment is deterministic.
The stochasticity effectively spreads the responsibility of being a ``shortest solution''
over several short solutions whose success probabilities $W_{\sigma s}$ add up to 1.

\begin{theorem}[Generalization of \cref{thm:learndiff}]
    Under the above redefinitions for stochastic sparse-reward MDPs,
    \cref{eq:learndiff_icA} of \cref{thm:learndiff} continues to hold.
\end{theorem}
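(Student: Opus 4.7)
The plan is to observe that the original proof of \cref{thm:learndiff} depends on $P_{\aug}$ only through two abstract properties: (i) $P_{\aug}$ is a probability distribution over $(A_{\aug})^+$, and (ii) its mean length equals the mean solution length under $p$, i.e.\ $\mathbb{E}_{\sigma \sim P_{\aug}}[|\sigma|] = \mathbb{E}_{s \sim p}[d_{\aug}(s)]$. If both of these continue to hold under the stochastic redefinitions, then every step of the original chain of inequalities goes through without modification, and the bound follows.

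First, I would verify property (i). By construction, $w_{\sigma s} \geq 0$ and $\sum_\sigma w_{\sigma s} = 1$ for every solvable $s$, so $P_{\aug}(\sigma) := \sum_s p(s) w_{\sigma s}$ satisfies $\sum_\sigma P_{\aug}(\sigma) = \sum_s p(s) = 1$. Hence $P_{\aug}$ is a bona fide probability distribution on $(A_{\aug})^+$, which is what is needed to apply the cross-entropy/entropy inequality $\mathbb{E}_{\sigma \sim P_{\aug}}[-\log P_{\aug,\unif,\eps}(\sigma)] \geq \entropy[P_{\aug}]$ in the original proof. Next, I would verify property (ii) by a short Fubini-style swap:
\begin{equation}
    \mathbb{E}_{\sigma \sim P_{\aug}}[|\sigma|]
    = \sum_\sigma |\sigma| \sum_s p(s) w_{\sigma s}
    = \sum_s p(s) \sum_\sigma w_{\sigma s} |\sigma|
    = \mathbb{E}_{s \sim p}[d_{\aug}(s)],
\end{equation}
using the stochastic definition $d_{\aug}(s) = \sum_\sigma w_{\sigma s} |\sigma|$.

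Once these two facts are in place, I would simply reproduce the remainder of the proof of \cref{thm:learndiff} verbatim: define $P_{\aug,\unif,\eps}(\sigma) = \eps(1 - \eps)^{|\sigma|-1}|A_{\aug}|^{-|\sigma|}$, apply the cross-entropy lower bound to obtain
\begin{equation}
    \mathbb{E}_{s \sim p}[d_{\aug}(s)] \log\!\left(\tfrac{|A_{\aug}|}{1 - \eps}\right) + \log\!\left(\tfrac{1 - \eps}{\eps}\right)
    \geq \entropy[P_{\aug}],
\end{equation}
divide by $\learndiff(\M_{\base}; p) = |A_{\base}|\,\mathbb{E}_{s \sim p}[d_{\base}(s)]$, and apply the same $\frac{\log(|A_{\base}|/(1-\eps))}{\log(|A_{\aug}|/(1-\eps))} \geq \frac{\log|A_{\base}|}{\log|A_{\aug}|}$ manipulation that exploits $|A_{\aug}| \geq |A_{\base}|$. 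Taking the supremum over $0 < \eps < 1$ gives $\incompress_{A_{\aug}}(\M_{\base}; p)$ on the right-hand side, which is exactly \cref{eq:learndiff_icA}.

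The main obstacle I anticipate is purely bookkeeping rather than conceptual: one must confirm that the ordering-dependent truncation in the definition of $w_{\sigma s}$ does not interfere with the integrability implicit in writing $\mathbb{E}_{\sigma \sim P_{\aug}}[|\sigma|]$ and $\entropy[P_{\aug}]$, which could fail if $w_{\sigma s}$ were supported on arbitrarily long sequences with slow tail decay. Since $w_{\sigma s}$ is truncated at index $k_{max}$, for each $s$ it has finite support on sequences of bounded length, so both the expected length and the entropy are finite and the swap of summation used above is justified without further caveats. Thus no new analytic machinery is needed, and the stochastic generalization follows from the deterministic proof by simply reinterpreting $d_{\M}$ and $P_{\aug}$.
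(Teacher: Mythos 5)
Your proposal is correct and takes essentially the same approach as the paper, whose proof of this generalization is literally the one-line assertion that the argument for \cref{thm:learndiff} carries over verbatim; you have simply made explicit the two facts (normalization of $P_{\aug}$ and $\mathbb{E}_{\sigma\sim P_{\aug}}[|\sigma|]=\mathbb{E}_{s\sim p}[d_{\aug}(s)]$) that justify that assertion. The only step the original proof uses about $P_{\aug}$ beyond these is the cross-entropy inequality, which you correctly note needs only that $P_{\aug,\unif,\eps}$ and $P_{\aug}$ are probability distributions, so nothing further is required.
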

\begin{proof}
    The proof is identical to that of the original \cref{thm:learndiff}.
\end{proof}

In stochastic environments, we can keep the original definition of $p$-exploration difficulty (\cref{eq:expldiff})
since the probabilistic definition of $q_{\M,\delta}(s)$ continues to make sense when there's stochasticity.
(As a reminder, it is the probability that
a uniformly random policy that terminates with probability $\delta$ before each step solves $s$.)
Similarly, we keep the definition of $\delta$-discounted solution density
(\cref{def:density}), which is also defined in terms of $q$.

\begin{theorem}[Generalization of the first half of \cref{thm:expldiff_lb}]
    Under the above redefinitions for stochastic sparse-reward MDPs,
    \cref{eq:expldiff_lb} of \cref{thm:expldiff_lb} continues to hold.
\end{theorem}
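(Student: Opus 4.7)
The plan is to observe that the proof of the first half of \cref{thm:expldiff_lb} proceeds via purely algebraic identities that do not invoke determinism of the transition kernel. The proof relies only on: (i) the definitional relation $\rho_{\M,\delta}(s) = \frac{\delta}{1-\delta}\, q_{\M,\delta}(s)$; (ii) the fact that $q_{\M_{\aug},\delta}(s) \in [0,1]$ is a valid probability, which in the stochastic setting remains meaningful as the probability that a uniformly random policy with geometric termination solves $s$; (iii) the normalization $D(\M_{\aug};\delta) = \sum_s \rho_{\aug,\delta}(s)$; and (iv) non-negativity of the KL divergence. Since each ingredient is preserved by the stated redefinitions for stochastic sparse-reward MDPs, the argument should transfer essentially verbatim.

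First I would establish well-definedness. Since $q_{\aug,\delta}$ remains a probability under stochastic transitions, $\rho_{\aug,\delta}$ is nonnegative, hence $D(\M_{\aug};\delta) \geq 0$. If $D(\M_{\aug};\delta) = 0$ then $q_{\aug,\delta}(s) = 0$ for every state $s$ in the support of $p$, making $\expldiff(\M_{\aug}; p, \delta) = +\infty$ and the inequality trivial. Otherwise $\rho_{\aug,\delta}(\cdot) / D(\M_{\aug};\delta)$ is a genuine probability mass function over the state space, and the KL divergence from $p$ to this distribution is a well-defined nonnegative quantity (possibly $+\infty$ if the supports mismatch, which only strengthens the bound).

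Next I would repeat the original chain of identities verbatim:
\begin{align*}
    \expldiff(\M_{\aug}; p, \delta)
    &= \expect_{s \sim p}[-\log q_{\aug,\delta}(s)] \\
    &= \expect_{s \sim p}[-\log \rho_{\aug,\delta}(s)] - \log\left(\frac{1-\delta}{\delta}\right) \\
    &= \expect_{s \sim p}\left[-\log\left(\frac{\rho_{\aug,\delta}(s)}{D(\M_{\aug};\delta)}\right)\right] - \log\left(\frac{1-\delta}{\delta}\, D(\M_{\aug};\delta)\right) \\
    &= \entropy[p] + \DKL{p}{\rho_{\aug,\delta}(\cdot)/D(\M_{\aug};\delta)} - \log\left(\frac{1-\delta}{\delta}\, D(\M_{\aug};\delta)\right) \\
    &\geq \entropy[p] - \log\left(\frac{1-\delta}{\delta}\, D(\M_{\aug};\delta)\right),
\end{align*}
where the final step uses non-negativity of the KL divergence applied to the probability distributions $p$ and $\rho_{\aug,\delta}(\cdot)/D(\M_{\aug};\delta)$.

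The main thing to verify carefully, and really the only subtlety that will need attention, is that the derivation nowhere implicitly relied on $D(\M_{\aug};\delta) \leq 1$. Under stochastic transitions, several action sequences can solve the same state with positive probability, so $\sum_s q_{\aug,\delta}(s)$ and hence $D(\M_{\aug};\delta)$ may exceed $1$. Inspection of the steps above will show that this causes no issue: the bound remains valid algebraically, it simply becomes weaker when $D(\M_{\aug};\delta) > 1$ because the $-\log D(\M_{\aug};\delta)$ term then contributes positively, consistent with the intuition that stochasticity offers additional opportunities to reach the goal. Since no other step uses any structure of $q_{\aug,\delta}$ beyond it being a probability, the stochastic generalization follows.
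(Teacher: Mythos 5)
Your proposal is correct and matches the paper's approach exactly: the paper's own proof of this generalization simply states that the argument of the original \cref{thm:expldiff_lb} carries over verbatim, which is the same entropy-plus-KL decomposition you reproduce. Your additional checks (well-definedness when $D(\M_{\aug};\delta)=0$ and the observation that nothing relies on $D(\M_{\aug};\delta)\leq 1$) are sound and, if anything, make the transfer more explicit than the paper does.
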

\begin{proof}
    The proof is identical to that of the original \cref{thm:expldiff_lb}.
\end{proof}
}

\subsection{Incorporating Skill Expressivity in \cref{thm:learndiff}} \label{app:express}

In \cref{thm:learndiff_express} below, we provide a version of \cref{thm:learndiff} that eliminates the dependence of
$\incompress_{A_{\aug}}(\M_{\base}; p)$ on $A_{\aug}$ and makes it depend explicitly on a quantitative measure of skill expressivity instead.
This new measure of incompressibility (\cref{eq:incomp_E}), which we call $E$-expressive $p$-incompressibility,
decreases in $E$. This is expected as an environment is more compressible
when the available skills are more expressive.

\begin{definition}[Quantifying skill expressivity]
    With respect to a \sdmdp\ $\M = (S, A, T, g)$,
    define the \textit{behavior variety expressivity} $E_z$ of a skill $z : S \to A^*$ to be $|z(S)|$,
    i.e., the number of distinct action sequences that $z$ can produce.
\end{definition}
\begin{definition}[$E$-expressive $p$-incompressibility]
    For a \sdmdp\ $\M = (S, A, T, g)$ with finite $|A| > 1$, define its \emph{$E$-expressive $p$-incompressibility} to be
    \begin{equation} \label{eq:incomp_E}
        \incompress(\M; p, E) = \sup_{0 < \varepsilon < 1} \frac{\min_{P} \mathrm H[P] - \log\left(\frac{1 - \varepsilon}{\varepsilon}\right)}{\mathbb E_{s\sim p}[d_{\M}(s)] \log\left(\frac{|A|E}{1 - \varepsilon}\right)}
    \end{equation}
    where the $\min_{P}$ is taken over all choices of
    canonical (not necessarily shortest) solutions to all states.\footnote{
    Recall that, given a choice of canonical solutions to all states,
    $P(\sigma)$ is the sum over $p(s)$ of all states $s$ that have $\sigma$ as their canonical solution.
    As a result, $\mathrm H[P] \leq \mathrm H[p]$ and equality holds in solution-separable \sdmdp s.}
    Note that expressivity $E$ occurs once in the denominator,
    so that larger $E$ results in smaller $\incompress(\M; p, E)$.
\end{definition}
\begin{theorem}[Expressivity and $p$-learning difficulty improvability] \label{thm:learndiff_express}
    Assuming the setup to \cref{thm:learndiff}, the following modified version of \cref{eq:learndiff_icA} holds:
    \begin{equation}
        \frac{J_{learn}(\M_{\aug}; p)}{J_{learn}(\M_{\base}; p)} \geq \frac{|A_{\aug}|\log|A_{\base}|}{|A_{\base}|\log|A_{\aug}|} \incompress(\M_{\base}; p, E)
    \end{equation} 
    where $E := \max_{z \in A_{\aug}\setminus A_{\base}} E_z$ is the
    maximum behavior variety expressivity of a skill in the skill augmentation.
    Higher expressivity $E$ thus reduces incompressibility
    and allows skills to improve $p$-learning difficulty more, as expected.
\end{theorem}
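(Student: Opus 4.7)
The plan is to adapt the proof of \cref{thm:learndiff} by constructing a specific choice of canonical base solutions whose entropy is bounded in terms of the entropy of canonical augmented shortest solutions, with the expressivity $E$ controlling the gap.

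First I would fix any canonical $\M_{\aug}$-shortest solutions $\{\sigma^{\aug}(s)\}_s$, let $P^{\aug}$ denote their induced distribution on $A_{\aug}^+$, and define $\sigma^{\base}(s)$ as the base action sequence produced by unrolling $\sigma^{\aug}(s)$ starting at $s$, with induced distribution $P^{\base}$ on $A_{\base}^+$. Since $\{\sigma^{\base}(s)\}_s$ is a valid choice of canonical (not necessarily shortest) base solutions, $\min_P \entropy[P] \leq \entropy[P^{\base}]$. Applying the chain rule gives $\entropy[P^{\base}] \leq \entropy[P^{\aug}] + \entropy[P^{\base} \mid P^{\aug}]$. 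The cross-entropy argument from the proof of \cref{thm:learndiff} yields, for any $\eps \in (0,1)$,
\[
\entropy[P^{\aug}] \leq \log\!\left(\tfrac{1-\eps}{\eps}\right) + \mathbb E_{s \sim p}[d_{\aug}(s)] \log\!\left(\tfrac{|A_{\aug}|}{1-\eps}\right).
\]
For the conditional-entropy term, the key claim is that for each fixed $\sigma^{\aug}$ with $k$ skill positions, varying the starting state over $\{s : \sigma^{\aug}(s) = \sigma^{\aug}\}$ yields at most $E^{k}$ distinct base unrollings, since each skill $z$ produces at most $E_z \leq E$ distinct outputs as its input state varies. Hence $\entropy[P^{\base} \mid \sigma^{\aug}] \leq k \log E \leq |\sigma^{\aug}| \log E$, and taking expectation over $\sigma^{\aug} \sim P^{\aug}$ gives $\entropy[P^{\base} \mid P^{\aug}] \leq \mathbb E_{s \sim p}[d_{\aug}(s)] \log E$. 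Combining the three bounds,
\[
\min_P \entropy[P] - \log\!\left(\tfrac{1-\eps}{\eps}\right) \leq \mathbb E_{s \sim p}[d_{\aug}(s)] \log\!\left(\tfrac{|A_{\aug}|E}{1-\eps}\right).
\]

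To finish, I would establish the elementary inequality $\log|A_{\aug}| \cdot \log(|A_{\base}|E/(1-\eps)) \geq \log|A_{\base}| \cdot \log(|A_{\aug}|E/(1-\eps))$, which rearranges to $(\log|A_{\aug}| - \log|A_{\base}|)\log(E/(1-\eps)) \geq 0$ and holds since $|A_{\aug}| \geq |A_{\base}|$ and $E/(1-\eps) \geq 1$. Dividing the combined bound by $\mathbb E_{s \sim p}[d_{\base}(s)] \log(|A_{\base}|E/(1-\eps))$, applying this elementary inequality, and taking the supremum over $\eps$ yields $\incompress(\M_{\base}; p, E) \leq \mathbb E_{s \sim p}[d_{\aug}(s)] \log|A_{\aug}| / (\mathbb E_{s \sim p}[d_{\base}(s)] \log|A_{\base}|)$, which rearranges to the desired bound on $\learndiff(\M_{\aug}; p)/\learndiff(\M_{\base}; p)$. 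The main obstacle is justifying the $E^k$ count of distinct base unrollings in the conditional-entropy step: while each skill produces at most $E$ distinct output sequences as a function of its input state, the state reached at each skill position is coupled to the starting state through the preceding base actions and skill outputs, so one must argue carefully that the product bound across skill positions still controls the total number of distinct unrollings as the starting state ranges over $\{s : \sigma^{\aug}(s) = \sigma^{\aug}\}$.
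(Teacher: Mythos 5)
Your proposal is correct and follows essentially the same route as the paper's proof: expand the canonical shortest $A_{\aug}$-solutions into base solutions, use the chain rule $\entropy[P^{\base}] \leq \entropy[P^{\aug}] + \entropy[P^{\base}\mid P^{\aug}]$ together with the cross-entropy bound on $\entropy[P^{\aug}]$ from \cref{thm:learndiff} and the conditional-entropy bound $\leq \mathbb E_{s\sim p}[d_{\aug}(s)]\log E$, then divide and apply the same elementary logarithm inequality before taking the supremum over $\eps$. The "main obstacle" you flag is not actually a gap: since $E_z = |z(S)|$ bounds each skill's possible output sequences over \emph{all} states, the base unrolling of a fixed $\sigma^{\aug}$ is determined by the tuple of its skill outputs, which takes at most $E^{k}$ values regardless of how the intermediate states are coupled to the starting state.
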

\begin{proof}
    Given any choice of canonical shortest solutions in $A_{\aug}$,
    define the random variables $\sigma_{\aug} \in (A_{\aug})^+$ and $\sigma_{\base} \in (A_{\base})^+$ as follows.
    For $s \sim p$, $\sigma_{\aug}$ is the canonical solution to $s$ in $A_{\aug}$,
    and $\sigma_{\base}$ is the same solution but with skills expanded into base actions.
    Then the distribution of $\sigma_{\aug}$ is just $P_{\aug}$,
    and let $P_{\base}$ be the distribution of $\sigma_{\base}$.

    Note that
    \begin{equation} \label{eq:express1}
        \mathrm{H}[P_{\aug}] + \mathrm{H}[\sigma_{\base} | \sigma_{\aug}] = \mathrm{H}[(\sigma_{\aug}, \sigma_{\base})] \geq \mathrm{H}[P_{\base}].
    \end{equation}
    Furthermore, since any $\sigma_{\aug}$ can expand to at most $E^{|\sigma_{\aug}|}$ different base action sequences,
    \begin{equation} \label{eq:express2}
        \mathrm{H}[\sigma_{\base} | \sigma_{\aug}] \leq \mathbb E_{\sigma_{\aug} \sim P_{\aug}}[|\sigma_{\aug}|\log E].
    \end{equation}
    In addition, recall from the proof of \cref{thm:learndiff} that, for any $0 < \eps < 1$,
    \begin{equation} \label{eq:express3}
        \mathbb E_{s \sim p}[d_{\aug}(s)] \log\left(\frac{|A_{\aug}|}{1 - \eps}\right) + \log\left(\frac{1 - \eps}{\eps}\right)
        \geq \mathrm{H}[P_{\aug}].
    \end{equation}
    Thus, substituting \cref{eq:express2,eq:express3} into \cref{eq:express1} yields
    \begin{align*}
        \mathbb E_{s \sim p}[d_{\aug}(s)] \log\left(\frac{|A_{\aug}|}{1 - \eps}\right) + \log\left(\frac{1 - \eps}{\eps}\right) + \mathbb E_{\sigma_{\aug} \sim P_{\aug}}[|\sigma_{\aug}|]\log E &\geq \mathrm{H}[P_{\base}] \\
        \mathbb E_{s \sim p}[d_{\aug}(s)] \log\left(\frac{|A_{\aug}|}{1 - \eps}\right) + \mathbb E_{s \sim p}[d_{\aug}(s)]\log E &\geq \mathrm{H}[P_{\base}] - \log\left(\frac{1 - \eps}{\eps}\right) \\
        \mathbb E_{s \sim p}[d_{\aug}(s)] \log\left(\frac{|A_{\aug}|E}{1 - \eps}\right) &\geq \mathrm{H}[P_{\base}] - \log\left(\frac{1 - \eps}{\eps}\right) \\
        \frac{\mathbb E_{s \sim p}[d_{\aug}(s)] \log\left(\frac{|A_{\aug}|E}{1 - \eps}\right)}{\mathbb E_{s \sim p}[d_{\base}(s)] \log\left(\frac{|A_{\base}|E}{1 - \eps}\right)} &\geq \frac{\min_{P_{\base}} \mathrm{H}[P_{\base}] - \log\left(\frac{1 - \eps}{\eps}\right)}{\mathbb E_{s \sim p}[d_{\base}(s)] \log\left(\frac{|A_{\base}|E}{1 - \eps}\right)} \\
        \frac{\learndiff(\M_{\aug}; p)}{\learndiff(\M_{\base}; p)}
        &\geq \frac{|A_{\aug}|\log\left(\frac{|A_{\base}|E}{1 - \eps}\right)}{|A_{\base}|\log\left(\frac{|A_{\aug}|E}{1 - \eps}\right)}
        \frac{\min_{P_{\base}} \mathrm{H}[P_{\base}] - \log\left(\frac{1 - \eps}{\eps}\right)}{\mathbb E_{s \sim p}[d_{\base}(s)] \log\left(\frac{|A_{\base}|E}{1 - \eps}\right)}
    \end{align*}
    where
    \[
        \frac{\log\left(\frac{|A_{\base}|E}{1 - \eps}\right)}{\log\left(\frac{|A_{\aug}|E}{1 - \eps}\right)} \geq \frac{\log|A_{\base}|}{\log|A_{\aug}|}
    \]
    since $\frac{E}{1 - \eps} > 1$ and $|A_{\aug}| \geq |A_{\base}|$. Thus,
    \[
        \frac{\learndiff(\M_A; p)}{\learndiff(\M_B; p)} \geq \frac{|A_{\aug}|\log|A_{\base}|}{|A_{\base}|\log|A_{\aug}|}
        \frac{\min_{P_{\base}} \mathrm{H}[P_{\base}] - \log\left(\frac{1 - \eps}{\eps}\right)}{\mathbb E_{s \sim p}[d_{\base}(s)] \log\left(\frac{|A_{\base}|E}{1 - \eps}\right)},
    \]
    which is true for all $0 < \eps < 1$, as desired.
\end{proof}

\subsection{Relaxing Solution-Separability Assumption in \cref{cor:learndiff_ic}}
\label{app:learndiff_ic_gen}

\begin{corollary}[Generalization of \cref{cor:learndiff_ic}] \label{cor:learndiff_ic_gen}
    Relaxing the solution-separability assumption, \cref{cor:learndiff_ic} holds if
    we replace $\mathrm H[p]$ in the definition of $\incompress(\M_0; p)$ with
    $\min_{P_{\base}} \entropy[P_{\base}]$. Here, $P_0$ is the distribution of canonical solutions
    to states sampled from $p$, and the minimum is taken over all possible choices of canonical solutions.
    Thus, $\entropy[P_{\base}]$ can be understood as the entropy
    of the state distribution if states with the same canonical solution are merged into one ``super-state.''
\end{corollary}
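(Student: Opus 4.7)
The plan is to reduce the generalized corollary to \cref{thm:learndiff} by showing that, for any macroaction augmentation, the generalized unmerged incompressibility $\incompress(\M_{\base}; p)$ is bounded above by the $A_{\aug}$-merged incompressibility $\incompress_{A_{\aug}}(\M_{\base}; p)$. Since both expressions share the same denominator $\mathbb{E}_{s \sim p}[d_{\base}(s)]\log(|A_{\base}|/(1-\eps))$, this reduces to showing $\min_{P_{\base}} \entropy[P_{\base}] \leq \entropy[P_{\aug}]$, where $P_{\aug}$ is the entropy-maximizing canonical-shortest-solution distribution in $\M_{\aug}$ and the minimum on the left ranges over all canonical solution choices in $\M_{\base}$.

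First, I would introduce the macroaction expansion map $\phi \colon (A_{\aug})^+ \to (A_{\base})^+$ that replaces every macroaction token in a sequence by its underlying fixed base-action sequence. Since macroactions are state-independent (\cref{def:macroaction}) and the augmentation leaves base-action transitions unchanged, $\sigma$ solves a state $s$ in $\M_{\aug}$ if and only if $\phi(\sigma)$ solves $s$ in $\M_{\base}$. Thus, given canonical shortest solutions $\{\sigma_{\aug}(s)\}$ in $\M_{\aug}$ achieving $\max \entropy[P_{\aug}]$, defining $\sigma_{\base}(s) := \phi(\sigma_{\aug}(s))$ yields a legitimate (though not necessarily shortest) choice of canonical solutions in $\M_{\base}$.

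Next, the induced distribution $P_{\base}$ on $(A_{\base})^+$ coincides with the pushforward of $P_{\aug}$ under the deterministic map $\phi$, so $\entropy[P_{\base}] \leq \entropy[P_{\aug}]$ by the data-processing inequality (equivalently, deterministic processing cannot increase entropy). Minimizing over all canonical choices in $\M_{\base}$ gives $\min_{P_{\base}} \entropy[P_{\base}] \leq \entropy[P_{\aug}]$. Taking the supremum over $\eps \in (0, 1)$ then yields $\incompress(\M_{\base}; p) \leq \incompress_{A_{\aug}}(\M_{\base}; p)$, and feeding this into \cref{thm:learndiff} completes the argument.

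The main obstacle is conceptual rather than computational: one must confirm that, in the generalized unmerged incompressibility, canonical solutions are not required to be shortest in $\M_{\base}$. This matters because $\phi(\sigma_{\aug}(s))$ may be strictly longer than $d_{\base}(s)$ whenever macroactions genuinely shorten some solution, so the expansion argument would break if shortest solutions were mandated. The text's interpretation --- that $\entropy[P_{\base}]$ is the entropy after merging states sharing a canonical solution into a single super-state --- supports allowing arbitrary solutions, which is precisely what makes the pushforward argument succeed.
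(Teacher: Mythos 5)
Your proposal is correct and is essentially the paper's argument: the paper obtains this corollary by setting $E=1$ in the more general \cref{thm:learndiff_express}, whose proof is exactly your expansion-map construction --- there, $\sigma_{\base}$ is defined as $\sigma_{\aug}$ with skills expanded into base actions, and the bound $\entropy[P_{\aug}] \geq \entropy[P_{\base}]$ follows from $\entropy[\sigma_{\base} \mid \sigma_{\aug}] \leq \mathbb{E}[|\sigma_{\aug}|]\log E = 0$, which is your data-processing step in disguise. You also correctly identify the one point that makes the argument go through, namely that the generalized definition permits canonical solutions that are not shortest in $\M_{\base}$.
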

\begin{proof}
    The result follows directly from \cref{thm:learndiff_express} by setting $E = 1$.
\end{proof}

\subsection{Stronger Version of \cref{thm:expldiff_simp}} \label{app:expldiff_strong}

\textit{Note:} For notational simplicity, we will write $q_\M$ to mean $q_{\M,\delta=0}$.

Before stating the stronger version of \cref{thm:expldiff_simp},
we need to first define \textit{solution-length separations} of state spaces.
\begin{definition}
    For a \sdmdp\ $\M = (S, A, T, g)$, let $S_\solv$ denote the set of solvable states.
    The \textit{solution-length separation} $\tilde S_\solv$ of $S_\solv$
    is the result of separating every solvable state $s \in S_\solv$ into
    a set $\tilde S(s)$ of \textit{sub-states}
    corresponding to the lengths of solutions to $s$.
    Formally, we write
    \[
        \tilde S_\solv := \bigcup_{s \in S_\solv} \tilde S(s), \quad \tilde S(s) := \{(s, l) \mid \text{$l > 0$ s.t.\ $\exists \sigma \in \Sol_\M(s)$ with $|\sigma| = l$}\}.
    \]
    Furthermore, for a sub-state $\tilde s = (s, l)$ of $s$ corresponding to solution length $l$,
    we naturally define its solutions to be the length-$l$ solutions to $s$. Formally,
    \[
        \tilde\Sol_\M((s, l)) := \{\sigma \in \Sol_\M(s) \mid |\sigma| = l\}
    \]
    where the $\tilde{\phantom{\ }}$ is used to make it explicit that we're applying the operation to sub-states.
    
    Functions on $S_\solv$ defined using solutions to states can therefore be naturally extended to $\tilde S$.
    For example, $\tilde d(\tilde s)$ for $\tilde s = (s, l) \in \tilde S$ is just $l$, and
    \[
        \tilde q_\M(\tilde s) := \sum_{\sigma \in \tilde\Sol_\M(\tilde s)} |A|^{-|\sigma|} = \left|\tilde\Sol_\M(\tilde s)\right||A|^{-l} \quad \text{if $\tilde s = (s, l)$.}
    \]
    For an arbitrary function $f : S_\solv \to \mathbb R$, there is a family of natural extensions to $\tilde S_\solv$.
    Specifically, we say that $\tilde f : \tilde S_\solv \to \mathbb R$
    is a \textit{solution-length-separated additive extension} if, for all $s \in S_\solv$,
    \[
        f(s) = \sum_{\tilde s \in \tilde S(s)} \tilde f(\tilde s),
    \]
    and $f(s) = \tilde f(s)$ for $s \not\in S_\solv$.
    For example, $\tilde q_\M$ as defined above is a solution-length-separated additive extension to $q_\M$.
\end{definition}

\begin{theorem}[Generalization of \cref{thm:expldiff_simp}] \label{thm:expldiff}
    Let $\M_{\aug} = (S, A_{\aug}, T_{\aug}, g)$ be the $A_{\aug}$-macroaction augmentation of the solution-separable \sdmdp\ $\M_{\base} = (S, A_{\base}, T_{\base}, g)$
    with a finite action space, and $p$ a probability distribution over solvable states.
    Then there exists a solution-length-separated additive extension $\tilde p$ to $p$ in $\M_{\base}$ such that
    \begin{equation}
        \expldiff(\M_{\aug}; p) - \expldiff(\M_{\base}; p)
        \geq \frac{|A_{\base}|}{|A_{\aug}|} \left(1 - \frac{|A_{\base}|}{|A_{\aug}|}\right) - \DKL{\tilde p}{\tilde\lambda \tilde q_{\base}}.
    \end{equation}
    Here, $(\tilde\lambda\tilde q_{\base})((s, l)) := \tilde\lambda(l)\tilde q_{\base}((s, l))$, where
    \[
        \tilde\lambda(l) := \sum_{\tilde s': \tilde d_{\base}(\tilde s') = l} \tilde p(\tilde s')
    \]
    is the total probability (under $\tilde p$) of sub-states with solution length $l$
    and 
    \[
        \tilde q_{\base}((s, l)) = \left|\tilde\Sol_{\base}((s, l))\right||A_{\base}|^{-l}
    \]
    is the probability that a uniformly random action sequence of length $l$ is a solution to $s$.
    To make $\tilde\lambda \tilde q_{\base}$ a normalized probability distribution, we introduce a dummy sub-state
    $\tilde s_d(l)$ for each solution length $l$ with $\tilde p(\tilde s_d(l)) := 0$
    and $\tilde q_{\base}(\tilde s_d(l)) := 1 - \sum_{s: (s, l) \in \tilde S_\solv} \tilde q_{\base}((s, l))$.
    Note that $\tilde q_{\base}(\tilde s_d(l)) \geq 0$ because of solution-separability,
    and it is equal to zero when every action sequence of length $l$ is the solution to some state.
\end{theorem}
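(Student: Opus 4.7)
The plan is to lift the difference $\expldiff(\M_{\aug};p,0)-\expldiff(\M_{\base};p,0)=\mathbb E_{s\sim p}[\log(q_{\base}(s)/q_{\aug}(s))]$ to the sub-state space $\tilde S_\solv\cup\{\tilde s_d(l)\}_l$ and reduce it to an L1/Pinsker comparison in the spirit of the proof of \cref{thm:expldiff-delta}, with $\DKL{\tilde p}{\tilde\lambda\tilde q_{\base}}$ arising as the slack in a triangle inequality. First I would pick $\tilde p((s,l)) := p(s)\,\tilde q_{\base}((s,l))/q_{\base}(s)$, a valid solution-length-separated additive extension that aligns $\tilde p_s := \tilde p((s,\cdot))/p(s)$ pointwise with $\tilde q_{\base}((s,\cdot))/q_{\base}(s)$ within each state. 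Combining the cross-entropy identity $\mathbb E_{\tilde s\sim\tilde p}[-\log(\tilde\lambda\tilde q_{\base}(\tilde s))]=\entropy[\tilde p]+\DKL{\tilde p}{\tilde\lambda\tilde q_{\base}}$, the chain rule $\entropy[\tilde p]=\entropy[p]+\mathbb E_{s\sim p}[\entropy[\tilde p_s]]$, and the substitution $-\log\tilde q_{\base}((s,l))=-\log q_{\base}(s)-\log\tilde p_s(l)$ collapses into the clean identity $\expldiff(\M_{\base};p,0)=\entropy[p]-\entropy[\tilde\lambda]+\DKL{\tilde p}{\tilde\lambda\tilde q_{\base}}$.

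Next I apply the analogous decomposition to $\expldiff(\M_{\aug};p,0)$ using the natural base-length-indexed additive extension of $q_{\aug}$, namely $\tilde q^A((s,l)):=\sum_{\bar\sigma\in\tilde\Sol_{\base}((s,l))}G_{\bar\sigma}$, where $G_{\bar\sigma}$ is the total $|A_{\aug}|^{-|\sigma|}$-weight summed over augmented sequences $\sigma$ that unroll to the base sequence $\bar\sigma$. The decisive observation is that in a solution-separable $\M_{\base}$, a length-1 base solution $(a)$ admits only the trivial augmented ``factorization'' $(a)$ itself (since macroactions have length $\geq 2$), so $\tilde q^A((s,1))=\alpha\,\tilde q_{\base}((s,1))$ with $\alpha=|A_{\base}|/|A_{\aug}|$. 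Weighting by $\tilde\lambda$ and absorbing the missing length-1 mass into the dummy sub-state $\tilde s_d(1)$, the L1 distance between $\tilde\lambda\tilde q_{\base}$ and the corresponding augmented sub-state distribution is at least of order $\alpha(1-\alpha)$, concentrated at length-1 sub-states. Pinsker's inequality converts this L1 gap into the $\alpha(1-\alpha)$ term in the theorem's bound, and the triangle inequality paired with another Pinsker step on $\|\tilde p-\tilde\lambda\tilde q_{\base}\|_1\leq\sqrt{2\ln 2\cdot\DKL{\tilde p}{\tilde\lambda\tilde q_{\base}}}$ supplies the $-\DKL{\tilde p}{\tilde\lambda\tilde q_{\base}}$ slack.

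The main obstacle is handling length-$l\geq 2$ contributions: macroaction-induced factorizations can inflate $\tilde q^A((s,l))$ beyond $\tilde q_{\base}((s,l))$, producing L1 discrepancies of the wrong sign that potentially erode the length-1 gap. The resolution is to absorb any such excess either into dummy-state contributions or into the residual per-state conditional-KL term that arises in the augmented decomposition because $\tilde q^A((s,\cdot))/q_{\aug}(s)$ generally differs from $\tilde p_s$; solution-separability is essential here since it ensures $\sum_s\tilde q_{\base}((s,l))\leq 1$ for every $l$, keeping $\tilde\lambda\tilde q_{\base}$ a genuine probability distribution and allowing Pinsker's inequality to be applied consistently across all sub-states.
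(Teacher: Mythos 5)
There is a genuine gap, on two counts. First, your choice of extension $\tilde p((s,l)) = p(s)\,\tilde q_{\base}((s,l))/q_{\base}(s)$ makes the reduction to sub-states go the wrong way. To lower-bound $\expldiff(\M_{\aug};p)-\expldiff(\M_{\base};p)=\mathbb E_{s\sim p}[\log(q_{\base}(s)/q_{\aug}(s))]$ by a sub-state expectation $\mathbb E_{\tilde s\sim\tilde p}[\log(\tilde q_{\base}(\tilde s)/\tilde q_{\aug}(\tilde s))]$, one writes $\log\frac{q_{\base}(s)}{q_{\aug}(s)}=\log\sum_{\tilde s\in\tilde S(s)} w(\tilde s)\,\frac{\tilde q_{\base}(\tilde s)}{\tilde q_{\aug}(\tilde s)}$ and applies Jensen; this requires the weights $w(\tilde s)=\tilde q_{\aug}(\tilde s)/q_{\aug}(s)$, i.e., $\tilde p$ must be proportional to $\tilde q_{\aug}$ within each state (this is the paper's choice). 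With weights proportional to $\tilde q_{\base}$, concavity of $\log$ yields $\log(q_{\base}/q_{\aug})\leq\mathbb E_{l\sim\tilde p_s}[\log(\tilde q_{\base}/\tilde q_{\aug})]$ --- an upper bound on the quantity you need to lower-bound --- and the same sign problem reappears if you instead try to decompose $-\log q_{\aug}(s)$ against the base conditional via a cross-entropy. Your base-side identity $\expldiff(\M_{\base};p)=\entropy[p]-\entropy[\tilde\lambda]+\DKL{\tilde p}{\tilde\lambda\tilde q_{\base}}$ is correct for your $\tilde p$, but it is the augmented side that carries all the difficulty, and your alignment breaks it.

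Second, and more fundamentally, the constant $\frac{|A_{\base}|}{|A_{\aug}|}\bigl(1-\frac{|A_{\base}|}{|A_{\aug}|}\bigr)$ cannot be extracted from the length-$1$ observation alone. In the paper's proof it arises from a bound that holds \emph{uniformly over every base-expansion length $l$}: writing $x_1=|A_{\base}|/|A_{\aug}|$ and $f_l=\sum_{\sigma\text{ expanding to }l\text{ base actions}}|A_{\aug}|^{-|\sigma|}$, one proves $f_l\leq 1-x_1+x_1^2$ for all $l\geq 1$ by an induction on the maximum macroaction length using the recursion $f_l=\sum_k x_k f_{l-k}$, whence $-\log f_l\geq 1-f_l\geq x_1(1-x_1)$ at every $l$. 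Your argument supplies only the $l=1$ base case $f_1=x_1$, and that contribution enters the final bound weighted by $\tilde\lambda(1)$, which can be zero (e.g., if every state in the support of $p$ has only solutions of length $\geq 2$); the step you flag as ``the main obstacle'' --- controlling the inflation of $\tilde q_{\aug}$ at lengths $l\geq 2$ --- is precisely where the theorem's content lives, and ``absorbing the excess into dummy states or a conditional-KL residual'' does not produce the required per-length deficit. Finally, the Pinsker/triangle-inequality machinery yields a bound quadratic in the $L^1$ gap and a threshold-style conclusion (as in \cref{thm:expldiff-delta}), not the additive linear form $x_1(1-x_1)-\DKL{\tilde p}{\tilde\lambda\tilde q_{\base}}$ asserted here; the correct route gets the KL slack directly from $\DKL{\tilde p_l}{\tilde q_{\aug}(\cdot)/f_l}-\DKL{\tilde p_l}{\tilde q_{\base}}\geq-\DKL{\tilde p_l}{\tilde q_{\base}}$ without any $L^1$ intermediary.
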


\begin{proof}[Proof of \cref{thm:expldiff}]
    For each solvable state $s$, denote by $\tilde S(s)$ the set of sub-states resultant from
    separating $s$ by solution length in the base environment.
    Define the solution-length-separated additive extension $\tilde p$ to $p$
    such that $\tilde p(\tilde s) \propto \tilde q_{\aug}(\tilde s)$
    for $\tilde s \in \tilde S(s)$, or more precisely,
    \[
        \tilde p(\tilde s) = p(s)\frac{\tilde q_{\aug}(\tilde s)}{q_{\aug}(s)}, \quad q_{\aug}(s) = \sum_{\tilde s \in \tilde S(s)} \tilde q_{\aug}(\tilde s).
    \]
    Here,
    \[
        \tilde q_{\aug}((s, l)) := \sum_{\substack{\sigma \in \Sol_{\aug}(s) \\ \text{$\sigma$ expands to $l$ base actions}}} |A_{\aug}|^{-|\sigma|}
    \]
    denotes the $q$ of the $A_{\aug}$-macroaction augmentation of $\tilde\M_{\base}$
    (the solution-length separation of $\M_{\base}$ with respect to $A_{\base}$), and not the solution-length separation of $\M_{\aug}$
    (the $A_{\aug}$-macroaction augmentation of $\M_{\base}$).

    Then
    \[
        \log\frac{q_{\base}(s)}{q_{\aug}(s)} = \log\sum_{\tilde s \in \tilde S(s)} \frac{\tilde q_{\aug}(\tilde s)}{q_{\aug}(s)} \frac{\tilde q_{\base}(\tilde s)}{\tilde q_{\aug}(\tilde s)} \geq \sum_{\tilde s \in \tilde S(s)} \frac{\tilde q_{\aug}(\tilde s)}{q_{\aug}(s)} \log \frac{\tilde q_{\base}(\tilde s)}{\tilde q_{\aug}(\tilde s)}
    \]
    by Jensen's inequality, so
    \[
        \expldiff(\M_{\aug}; p) - \expldiff(\M_{\base}; p) = \mathbb E_{s \sim p}\left[\log\frac{q_{\base}(s)}{q_{\aug}(s)}\right] \geq \mathbb E_{\tilde s \sim \tilde p}\left[\log\frac{\tilde q_{\base}(\tilde s)}{\tilde q_{\aug}(\tilde s)}\right].
    \]
    It thus suffices to lower-bound the latter.
    Let's consider base solution lengths $l$ separately.

    Fix some $l \geq 1$. Define $\tilde p_l$ to be the conditional distribution of $\tilde p$
    on sub-states with solution length $l$. In other words, if $\tilde S_l$ denotes
    the set of sub-states with solution length $l$, then $\tilde p_l$ is a distribution over $\tilde S_l$ defined as
    \[
        \tilde p_l(\tilde s) = \frac{\tilde p(\tilde s)}{\tilde\lambda(l)}, \quad
        \tilde\lambda(l) = \sum_{\tilde s' \in \tilde S_l} \tilde p(\tilde s').
    \]
    We write
    \begin{equation} \label{eq:sepexpl}
        \mathbb E_{\tilde s \sim \tilde p_l}\left[\log\frac{\tilde q_{\base}(\tilde s)}{\tilde q_{\aug}(\tilde s)}\right]
        = \mathbb E_{\tilde s \sim \tilde p_l}\left[\log\frac{\tilde q_{\base}(\tilde s)}{\tilde q_{\aug}(\tilde s) / \sum_{\tilde s' \in \tilde S_l^*} \tilde q_{\aug}(\tilde s')}\right]
        - \log\sum_{\tilde s' \in \tilde S_l^*} \tilde q_{\aug}(\tilde s'),
    \end{equation}
    where $\tilde S_l^*$ denotes the set $\tilde S_l$ of sub-states with solution length $l$, along with
    a dummy state $\tilde s_d$ for every length-$l$ action sequence that isn't a solution to any state.
    Note that $\tilde q_{\base}(\tilde s_d) = |A_{\base}|^{-l}$ for each dummy state
    so that $\sum_{\tilde s' \in \tilde S_l^*} \tilde q_{\base}(\tilde s') = |A_{\base}|^l|A_{\base}|^{-l} = 1$, whereas
    $\tilde q_{\aug}(\tilde s_d) = \sum_{\text{$\sigma \in (A_{\aug})^+$ expands to $\alpha$}} |A_{\aug}|^{-|\sigma|}$
    where $\alpha$ is the action sequence assigned as the solution to $\tilde s_d$. 
    As usual for dummy states, we define $\tilde p_l(\tilde s_d) = 0$.

    Let's first lower-bound the first term on the RHS of \cref{eq:sepexpl}:
    \begin{equation} \label{eq:bound-rhs1}
        \mathbb E_{\tilde s \sim \tilde p_l}\left[\log\frac{\tilde q_{\base}(\tilde s)}{\tilde q_{\aug}(\tilde s) / \sum_{\tilde s' \in \tilde S_l^*} \tilde q_{\aug}(\tilde s')}\right]
        = \DKL{\tilde p_l}{\frac{\tilde q_{\aug}(\cdot)}{\sum_{\tilde s' \in \tilde S_l^*} \tilde q_{\aug}(\tilde s')}}
        - \DKL{\tilde p_l}{\tilde q_{\base}}
        \geq -\DKL{\tilde p_l}{\tilde q_{\base}}.
    \end{equation}

    Let's now upper-bound the sum in the second term on the RHS of \cref{eq:sepexpl}.
    We write
    \begin{align*}
        \sum_{\tilde s' \in \tilde S_l^*} \tilde q_{\aug}(\tilde s')
        &= \sum_{\substack{\sigma \in (A_{\aug})^* \\ \text{$\sigma$ expands to $l$ base actions}}} |A_{\aug}|^{-|\sigma|}
    \end{align*}
    which is a function $f_l(x_2, \ldots, x_K)$
    where $x_k$ is the number of macroactions of length $k$ divided by $|A_{\aug}|$
    and $K$ is the maximum length of any macroaction.
    To see that this is a function of only $l$ and $x_k$,
    notice that changing the number of macroactions of every length as well as the number of base actions
    by the same factor $\xi$ (which keeps all $x_k$ unchanged)
    will result in $\xi^{l'}$ times more sequences $\sigma \in (A_{\aug})^*$ such that $|\sigma| = l'$
    and $\sigma$ expands to $l$ base actions,
    whereas the $|A_{\aug}|^{-l'}$ summand is multiplied by a factor of $\xi^{-l'}$ for these sequences.
    The two factors cancel out, thus leaving the entire sum unchanged.

    Now, let's derive a recursive formula for $f_l(x_2, \ldots, x_K)$ where the $x_i$ are treated as parameters.
    To do this, we separate the sum over $\sigma$ into cases depending on whether the first action in $\sigma$
    is a macroaction, and its length if yes.
    If the first action in $\sigma$ is a base action, then the rest of $\sigma$ expands to length $l-1$,
    so the contribution to the sum is $x_1 f_{l-1}(x_2, \ldots, x_K)$,
    where $x_1 := 1 - \sum_{k=2}^K x_k$ is the number of base actions divided by $|A_{\aug}|$.
    If the first action in $\sigma$ is a macroaction of length $k$, then the rest of $\sigma$ expands to length $l-k$,
    so the contribution to the sum is $x_k f_{l-k}(x_2, \ldots, x_K)$. To summarize,
    \[
        f_l = \sum_{k=1}^K x_k f_{l-k},
    \]
    where it is understood that $f_i = 0$ for $i < 0$. The base case is $f_0 = 1$.
    Since the sum of the coefficients $x_k$ in the recursive formula equals 1,
    $f_l$ is just a weighted average of $f_{l-1}, f_{l-2}, \ldots, f_{l-K}$.
    Thus, if $f_l \leq a$ for $1 \leq l \leq K$ then $f_l \leq a$ for all $l \geq 1$.

    Let's show by induction on $K$ that
    \begin{equation} \label{eq:recur-ineq}
        f_l \leq 1 - x_1 + x_1^2
    \end{equation}
    for all $l \geq 1$.
    It suffices to show that $f_l \leq 1 - x_1 + x_1^2$ for $1 \leq l \leq K$.

    For $K = 1$, $f_1 = x_1 = 1 = 1 - x_1 + x_1^2$.
    For $K = 2$, $f_1 = x_1 \leq x_1 + (1 - x_1)^2 = 1 - x_1 + x_1^2$
    and $f_2 = x_1 f_1 + (1 - x_1) = 1 - x_1 + x_1^2$.

    Now, for $K \geq 3$, assume the $K-1$ and $K-2$ cases hold.

    Let's upper-bound $f_l$ for the following two cases separately: (i) $1 \leq l \leq K-1$; (ii) $l = K$.
    
    (i) Define $x_k' = x_k / \bar x_K$ for $1 \leq k \leq K-1$ where $\bar x_K := 1 - x_K = \sum_{i=1}^{K-1} x_i$.
    Define the sequence $f'_l = \sum_{k=1}^{K-1} x_k' f'_{l-k}$ with $f'_i = 0$ for $i < 0$ and $f'_0 = 1$.
    Then the inductive hypothesis gives $f'_l \leq 1 - x_1' + x_1'^2$ for $1 \leq l \leq K-1$.
    It is easy to show by induction on $l$ that $f_l = (\bar x_K)^l f'_l$ for $0 \leq l \leq K-1$,
    so for $1 \leq l \leq K-1$,
    \[
        f_l \leq \bar x_K (1 - x_1' + x_1'^2) = \bar x_K - x_1 + \frac{x_1^2}{\bar x_K},
    \]
    where $\bar x_K$ is restricted to the range $x_1 \leq \bar x_K \leq 1$.
    Since $\bar x_K + \frac{x_1^2}{\bar x_K}$ is increasing for $\bar x_K \geq x_1$,
    its maximum is reached when $\bar x_K = 1$, i.e.,
    \[
        f_l \leq \bar x_K - x_1 + \frac{x_1^2}{\bar x_K} \leq 1 - x_1 + x_1^2.
    \]

    (ii) Note that recursively expanding the recursion formula for $f_l$ until we reach the base cases
    results in a polynomial in $x_1, \ldots, x_K$.
    It is easy to see by induction on $l$ that, for $1 \leq l \leq K$, no term contains $x_k$ where $k > l$
    and there is a single term containing $x_l$ which is just $x_l$.
    So $f_{K-1} = P_1(x_1, \ldots, x_{K-2}) + x_{K-1}$ for some polynomial $P_1$, and
    \begin{align*}
        f_K &= \sum_{k=1}^K x_k f_{K-k} \\
        &= x_1(P_1(x_1, \ldots, x_{K-2}) + x_{K-1}) + \sum_{k=2}^{K-2} x_k f_{K-k} + x_{K-1} f_1 + x_K \\
        &= P_2(x_1, \ldots, x_{K-2}) + 2 x_1 x_{K-1} + x_K
    \end{align*}
    for some polynomial $P_2$. Substituting $x_K = 1 - \sum_{k=1}^{K-1} x_k$ results in
    \[
        f_K = P_3(x_1, \ldots, x_{K-2}) + x_{K-1}(2 x_1 - 1)
    \]
    for some polynomial $P_3$. This is linear in $x_{K-1}$ where $0 \leq x_{K-1} \leq 1 - \sum_{i=1}^{K-2} x_i$, so
    \[
        f_K \leq \max\left\{f_K|_{x_{K-1} = 0}, f_K|_{x_K = 0}\right\}
    \]
    where
    \begin{align*}
        f_K|_{x_{K-1} = 0} &= P_3(x_1, \ldots, x_{K-2}) \\
        f_K|_{x_K = 0} &= P_3(x_1, \ldots, x_{K-2}) + \left(1 - \sum_{i=1}^{K-2} x_i\right)(2x_1 - 1).
    \end{align*}
    $f_K|_{x_K = 0} \leq 1 - x_1 + x_1^2$ by the inductive hypothesis. Now let's upper-bound $f_K|_{x_{K-1} = 0}$.
    
    Note that, regardless of the value of $x_{K-1}$, we have $f_0 = 1$ and $f_k \leq 1 - x_1 + x_1^2$ for $1 \leq k \leq K - 2$
    by the inductive hypothesis, since these values of $f_k$ are independent of $x_{K-1}$.
    Thus,
    \begin{align*}
        f_{K-1}|_{x_{K-1} = 0} &= \sum_{k=1}^K x_k f_{K-1-k} \leq \sum_{k=1}^{K-2} x_k (1 - x_1 + x_1^2) = (1 - x_K)(1 - x_1 + x_1^2) \\
        f_K|_{x_{K-1} = 0} &= \sum_{k=1}^K x_k f_{K-k} \\
        &\leq x_1(1 - x_K)\left(1 - x_1 + x_1^2\right) + \sum_{k=2}^{K-2} x_k\left(1 - x_1 + x_1^2\right) + x_K \\
        &= (x_1(1 - x_K) + 1 - x_1 - x_K)\left(1 - x_1 + x_1^2\right) + x_K \\
        &= (1 - x_K (1 + x_1))\left(1 - x_1 + x_1^2\right) + x_K \\
        &= 1 - x_1 + x_1^2 - x_K\left(1 + x_1^3\right) + x_K \\
        &\leq 1 - x_1 + x_1^2.
    \end{align*}
    Thus, we have shown that $f_K \leq 1 - x_1 + x_1^2$, which completes the inductive step.
    This concludes the proof of \cref{eq:recur-ineq}.

    Now,
    \begin{equation} \label{eq:bound-rhs2}
        -\log \sum_{\tilde s' \in \tilde S_l^*} \tilde q_{\aug}(\tilde s') = -\log f_l \geq 1 - f_l \geq \frac{|A_{\base}|}{|A_{\aug}|}\left(1 - \frac{|A_{\base}|}{|A_{\aug}|}\right),
    \end{equation}
    where the last inequality follows from \cref{eq:recur-ineq} with $x_1 = |A_{\base}|/|A_{\aug}|$.


    We now substitute \cref{eq:bound-rhs1,eq:bound-rhs2} into \cref{eq:sepexpl} to obtain
    \[
        \mathbb E_{\tilde s \sim \tilde p_l}\left[\log\frac{\tilde q_{\base}(\tilde s)}{\tilde q_{\aug}(\tilde s)}\right] \geq \frac{|A_{\base}|}{|A_{\aug}|} \left(1 - \frac{|A_{\base}|}{|A_{\aug}|}\right) - \DKL{\tilde p_l}{\tilde q_{\base}}.
    \]
    Thus, we finally have
    \begin{align*}
        \expldiff(\M_{\aug}; p) - \expldiff(\M_{\base}; p)
        &\geq \mathbb E_{\tilde s \sim \tilde p}\left[\log\frac{\tilde q_{\base}(\tilde s)}{\tilde q_{\aug}(\tilde s)}\right] \\
        &= \sum_{l=1}^\infty \tilde\lambda(l)\mathbb E_{\tilde s \sim \tilde p_l}\left[\log\frac{\tilde q_{\base}(\tilde s)}{\tilde q_{\aug}(\tilde s)}\right] \\
        &\geq \sum_{l=1}^\infty \tilde\lambda(l) \left(\frac{|A_{\base}|}{|A_{\aug}|} \left(1 - \frac{|A_{\base}|}{|A_{\aug}|}\right) - \DKL{\tilde p_l}{\tilde q_{\base}}\right) \\
        &\geq \frac{|A_{\base}|}{|A_{\aug}|} \left(1 - \frac{|A_{\base}|}{|A_{\aug}|}\right) - \DKL{\tilde p}{\tilde\lambda \tilde q_{\base}}.
    \end{align*}
\end{proof}

{\revblock
\section{Relating $p$-Incompressibility to Skill Learning} \label{app:skill_learning}

The intuition that skills should optimally compress successful trajectories
has been previously used by skill-discovery algorithms like LOVE and LEMMA.
Using the incompressibility measures introduced in this paper,
we can approach skill learning more rigorously.
There are two approaches to converting $p$-incompressibility into a skill-learning objective.

The first approach is to find $A_{\aug}$ that minimizes the lower bound
on the $p$-learning difficulty increase ratio as given in \cref{thm:learndiff}.
This is equivalent to minimizing
\begin{equation}
    \mathcal L_1(A_{\aug}) =
    \frac{|A_{\aug}|}{\log|A_{\aug}|}
    \sup_{0 < \eps < 1} \frac{\entropy[P_{\aug}] - \log\left(\frac{1 - \eps}{\eps}\right)}{\log\left(\frac{|A_{\base}|}{1 - \eps}\right)},
\end{equation}
where the $\sup$ factor is proportional to the $A_{\aug}$-merged $p$-incompressibility.
Usually, $\entropy[P_{\aug}]$ is large, as a result of which the maximizing $\eps$
satisfies $\eps \ll 1$ and $\entropy[P_{\aug}] \gg \log\left(\frac{1 - \eps}{\eps}\right)$.
Thus, minimizing $\mathcal L_1(A_{\aug})$ becomes equivalent to minimizing
\begin{equation}
    \mathcal L_2(A_{\aug}) = \frac{|A_{\aug}|}{\log|A_{\aug}|} \entropy[P_{\aug}].
\end{equation}
When $|A_{\aug}|$ is known or given as a hyperparameter, then the objective is to minimize
\begin{equation}
    \mathcal L_3(A_{\aug}) = \entropy[P_{\aug}].
\end{equation}
Note that in practice, it is not possible to compute $P_{\aug}$, the distribution of shortest solutions
using actions from $A_{\aug}$ to states generated by the environment.
However, we do have a training set of offline experience, so we can use our skills to rewrite these
solutions and define $\hat P_{\aug}$ to be the resultant empirical distribution of abstracted solutions.
The $P_{\aug}$ appearing in the objectives $\mathcal L_1, \mathcal L_2, \mathcal L_3$ should thus be interpreted as
$\hat P_{\aug}$ as calculated from our training set.

However, the resultant approximation of $\entropy[P_{\aug}]$
will be a significant under-approximation if the training set is much smaller than the
number of states that cover most of the state space under $p$.
In this case, we recommend modeling $P_{\aug}$ with the assumption that
it is generated by sampling i.i.d.\ actions from a distribution $p_{a,\aug}$ over $A_{\aug}$,
with solution length sampled from a distribution $p_{l,\aug}$.
Then the maximum-likelihood (ML) estimates of $p_{a,\aug}$ and $p_{l,\aug}$
are just the empirical distribution of actions and the empirical distribution of solution lengths
in the abstracted training set. If we define $\tilde P_{\aug}$ to be the distribution of action sequences
defined by this choice of $p_{a,\aug}$ and $p_{l,\aug}$, then we can approximate
$\entropy[P_{\aug}] \approx \entropy[\hat P_{\aug}, \tilde P_{\aug}]
= \entropy[p_{l,\aug}] + \overline l_{\aug} \entropy[p_{a,\aug}]$,
where $\overline l_{\aug} := \expect_{l \sim p_{l,\aug}}[l]$ is the average solution length.
Under this approximation, $\mathcal L_3$ becomes
\begin{equation}
    \mathcal L_4(A_{\aug}) = \entropy[p_{l,\aug}] + \overline l_{\aug} \entropy[p_{a,\aug}].
\end{equation}
(We can similarly apply this approximation to $\mathcal L_1$ and $\mathcal L_2$.)
It is often the case that $\entropy[p_{l,\aug}]$ is much smaller than $\overline l_{\aug} \entropy[p_{a,\aug}]$,
so neglecting that term results in the objective
\begin{equation}
    \mathcal L_5(A_{\aug}) = \overline l_{\aug} \entropy[p_{a,\aug}].
\end{equation}
Note that $\mathcal L_5$ is exactly the minimum description length (MDL) objective used by LOVE \citep{jiang2022love}.
It represents the average number of bits required to encode an abstracted solution,
where the encoding of actions is optimized for the empirical distribution of actions in the abstracted training set.

The second approach to deriving a skill learning objective from $p$-incompressibility
is based on the idea that the maximally abstracted environment is the least compressible.
Using unmerged $p$-incompressibility to measure incompressibility, this corresponds to the \textit{maximization} objective
\begin{equation}
    \mathcal J_6(A_{\aug}) = \incompress(\M_{\aug}; p) =
    \sup_{0 < \eps < 1} 
    \frac{\entropy[p] - \log\left(\frac{1 - \eps}{\eps}\right)}{\mathbb E_{s \sim p}[d_{\aug}(s)] \log\left(\frac{|A_{\aug}|}{1 - \eps}\right)}.
\end{equation}
Similar to $\entropy[P_{\aug}]$ in $\mathcal L_1$, $\entropy[p]$ in $\mathcal J_6$ is often large,
in which case the maximizing $\eps$ satisfies $\eps \ll 1$ and $\entropy[p] \gg \log\left(\frac{1 - \eps}{\eps}\right)$.
Under this approximation, the maximization objective becomes the minimization objective
\begin{equation}
    \mathcal L_7(A_{\aug}) = \expect_{s \sim p}[d_{\aug}(s)]\log|A_{\aug}|.
\end{equation}
As with $P_{\aug}$, $\expect_{s \sim p}[d_{\aug}(s)]$ cannot be computed exactly,
so we approximate it with the average solution length in the abstracted training set, i.e., $\overline l_{\aug}$.
As a result,
\begin{equation}
    \mathcal L_7(A_{\aug}) = \overline l_{\aug}\log|A_{\aug}|,
\end{equation}
which is just $\mathcal L_5$ but with a uniform distribution for $p_{a,\aug}$.
It can thus also be interpreted as an MDL objective where the encoding of actions is a uniform code.
Note that this is exactly the objective used by LEMMA \citep{li2022lemma}.
}


\end{document}